\documentclass{aptpub}

\authornames{B. Hajek \emph{et al.}} 
\shorttitle{Recovering a Hidden Community} 



\usepackage{amsmath,amsfonts,amssymb,bm, verbatim,dsfont,mathtools}
\usepackage{color,graphicx,appendix}
\usepackage{subfigure}
\usepackage{etoolbox}

\usepackage{tikz}
\usetikzlibrary{arrows,calc}

\usepackage{pgfplots}

\pgfplotsset{
    standard/.style={
        axis x line=middle,
        axis y line=middle,
        every axis x label/.style={at={(current axis.right of origin)},anchor=west},
        every axis y label/.style={at={(current axis.above origin)},anchor=south}
    }
}

\usepackage{xr,xspace}
\usepackage{todonotes}
\usepackage{caption,soul}
\usepackage{savesym}
\savesymbol{OR}
\usepackage{algorithm}
\usepackage{algorithmic}
\makeatletter




\usepackage{xspace,prettyref}

\newcommand{\diverge}{\to\infty}

\newcommand{\reals}{{\mathbb{R}}}

\newcommand{\naturals}{{\mathbb{N}}}


\newcommand{\eexp}{e}

\newcommand{\identity}{\mathbf I}
\newcommand{\allones}{\mathbf J}

\newcommand{\diff}{{\rm d}}

\newcommand{\blue}{\color{blue}}
\newcommand{\nb}[1]{{\sf\blue[#1]}}

\newcommand{\Expect}{\mathbb{E}}
\newcommand{\expect}[1]{\mathbb{E}\left[ #1 \right]}
\newcommand{\eexpect}[1]{\mathbb{E}[ #1 ]}

\newcommand{\Prob}{\mathbb{P}}
\newcommand{\pprob}[1]{ \mathbb{P}\{ #1 \} }
\renewcommand{\prob}[1]{ \mathbb{P}\left\{ #1 \right\} }

\renewcommand{\var}{\mathsf{var}}
\newcommand{\Cov}{\text{Cov}}

\newcommand{\Bern}{{\rm Bern}}
\newcommand{\Binom}{{\rm Binom}}
\newcommand{\Pois}{{\rm Pois}}

\newcommand{\eg}{e.g.\xspace}

\newrefformat{eq}{(\ref{#1})}
\newrefformat{chap}{Chapter~\ref{#1}}
\newrefformat{sec}{Section~\ref{#1}}
\newrefformat{alg}{Algorithm~\ref{#1}}
\newrefformat{fig}{Fig.~\ref{#1}}
\newrefformat{tab}{Table~\ref{#1}}
\newrefformat{rmk}{Remark~\ref{#1}}
\newrefformat{clm}{Claim~\ref{#1}}
\newrefformat{def}{Definition~\ref{#1}}
\newrefformat{cor}{Corollary~\ref{#1}}
\newrefformat{lmm}{Lemma~\ref{#1}}
\newrefformat{prop}{Proposition~\ref{#1}}
\newrefformat{app}{Appendix~\ref{#1}}
\newrefformat{hyp}{Hypothesis~\ref{#1}}
\newrefformat{thm}{Theorem~\ref{#1}}
\newrefformat{ass}{Assumption~\ref{#1}}   

\newcommand{\pth}[1]{\left( #1 \right)}

\newcommand{\sth}[1]{\left\{ #1 \right\}}



\newcommand{\indc}[1]{{\mathbf{1}_{\left\{{#1}\right\}}}}

\newcommand{\diag}[1]{\mathsf{diag} \left\{ {#1} \right\} }

\newcommand{\calL}{{\mathcal{L}}}

\newcommand{\calN}{{\mathcal{N}}}

\newcommand{\calT}{{\mathcal{T}}}

\renewcommand{\hat}{\widehat}
\renewcommand{\tilde}{\widetilde}


\usepackage{ifthen}

\newcommand{\type}{arXiv} 

\newcommand{\nbnew}[1] %
{\ifthenelse{\equal{\type}{both}}{\nb{#1}}{}}
\newcommand{\arxivonly}[1] %
{\ifthenelse{\equal{\type}{arxiv}}{#1}{}}

 \ifthenelse{\equal{\type}{arXiv} }{ }{}

\begin{document}

\title{Recovering a Hidden Community Beyond \\ the Kesten-Stigum Threshold in $O(|E| \log^*|V|)$ Time } 

\authorone[University of Illinois at Urbana-Champaign]{Bruce Hajek} 
\authortwo[Yale University]{Yihong Wu}
\authorthree[Purdue University]{Jiaming Xu}
\addressone{Department of ECE and Coordinated Science Lab,
University of Illinois at Urbana-Champaign, Urbana, IL 61801} 
\addresstwo{Department of Statistics and Data Science,
Yale University, New Haven, CT 06511}
\addressthree{Krannert School of Management, Purdue University, West Lafayette, IN 47907}
\begin{abstract}
Community detection is considered for a stochastic block model graph of
$n$ vertices, with $K$ vertices in the planted community, edge probability $p$ for
pairs of vertices both in the community, and edge probability $q$ for other pairs of vertices.
 The main focus of the paper is on weak recovery of the community based on the
graph $G$, with $o(K)$ misclassified vertices on average, in the sublinear regime $n^{1-o(1)} \leq K \leq o(n).$
A critical parameter is the effective signal-to-noise ratio $\lambda=K^2(p-q)^2/((n-K)q)$, with $\lambda=1$ corresponding to the Kesten-Stigum threshold. We show that  a belief propagation algorithm achieves weak recovery 
if $\lambda>1/e$, beyond the Kesten-Stigum threshold by
a factor of $1/ \eexp.$ The belief propagation algorithm only needs to run for $\log^\ast n+O(1) $ iterations,
with the total
time complexity $O(|E| \log^*n)$, where $\log^*n$ is the iterated logarithm of $n.$
Conversely, if $\lambda  \leq 1/e$, no local algorithm
can asymptotically outperform trivial random guessing. Furthermore, a linear message-passing algorithm 
that corresponds to applying power iteration to the non-backtracking matrix of the graph 
is shown to attain weak recovery if and only if $\lambda>1$.
In addition, the belief propagation algorithm can be combined with a linear-time voting procedure to achieve the information limit of  exact recovery 
(correctly classify all vertices with high probability) for all 
$K \ge \frac{n}{\log n} \left( \rho_{\rm BP} +o(1) \right),$ where $\rho_{\rm BP}$ is a function of $p/q$.  
\end{abstract}

\keywords{Hidden community, belief propagation, message passing, spectral algorithms,
high-dimensional statistics} 

\ams{62H12}{62C20} 


\renewcommand{\thefootnote}{\arabic{footnote}}
\setcounter{footnote}{0}

\section{Introduction}

\nbnew{Comments such as this one appear only if the  ``type"  variable in the latex file is set to ``both."}
The problem of finding a densely connected subgraph in a large graph arises in many
research disciplines such as theoretical computer science, statistics, and theoretical physics.
To study this problem, the stochastic block model  \cite{Holland83} for a single dense  community 
is considered.

\begin{definition}[Planted dense subgraph model] \label{def:pds_model}
Given $n\geq 1,$ $C^\ast  \subset [n]$, and  $0 \leq q \leq  p \leq 1,$  the corresponding
{\em planted dense subgraph model} is a random undirected
graph $G=(V,E)$ with $V=[n],$  such that two vertices are connected by an edge with
probability $p$ if they are both in $C^\ast$, and with probability $q$ otherwise,
with the outcomes being mutually independent for distinct pairs of vertices.
\end{definition}

The terminology is motivated by the fact that the subgraph induced by the community
$C^\ast$ is typically denser than the rest of the
graph if $p>q$~\cite{McSherry01,arias2013community, ChenXu14, HajekWuXu14,Montanari:15OneComm}.
The problem of interest is  to recover $C^\ast$ based on the graph $G$.

We consider a sequence of planted dense subgraphs indexed by $n$ and assume $p$ and $q$ depend on $n.$
For a given $n$, the set $C^*$ could be deterministic or random.
We also introduce $K\geq 1$ depending on $n$, and assume either that  $|C^*|\equiv K$ or
$|C^*| / K \to 1$ in probability as $n\to\infty.$   Where it matters we specify which assumption holds.
Since the focus of this paper is to understand the fundamental limits of recovering the hidden community in the
planted dense subgraph model, we assume the model parameters $(K, p, q)$ are known to the estimators\footnote{
It remains open whether this assumption can be relaxed without changing the fundamental limits of recovery.
The paper \cite{Decelle11} suggests a method for
estimating the parameters but it is unclear how to incorporate it into our theorems.}. For simplicity,
we further impose the mild assumptions that $K/n$ is bounded away from one and $p/q$ is bounded from above.
We primarily focus on two types of recovery guarantees.

\begin{definition}[Exact Recovery]
Given an estimator $\hat{C}=\hat{C}(G) \subset [n]$,   $\hat{C}$ {\em exactly recovers} $C^\ast $ if
$
\lim_{n\to \infty} \pprob{\hat C \neq C^\ast} =0,
$
where the probability is taken with respect to the randomness of $G$ and with respect to possible randomness
in $C^*$ and the algorithm for generating $\hat C$ from $G.$
\end{definition}

Depending on the application, it may be enough to ask for an estimator $\hat{C}$ which almost
completely agrees with $C^\ast.$

\begin{definition}[Weak Recovery] \label{def:weakrecovery}
Given an estimator $\hat C=\hat C(G)  \subset [n]$,  $\hat C$ {\em weakly recovers} $C^\ast$ if, as
$n \to \infty$, $\frac{1}{K} | \hat C  \triangle C^* |  \to 0,$  where the convergence is in probability, and
$\triangle$ denotes the set difference.
\end{definition}

Exact and weak recovery are the same as strong and weak consistency, respectively, as defined in \cite{Mossel14}.
Clearly an estimator that exactly recovers $C^*$ also weakly recovers $C^*.$
Also, it is not hard to show that the existence of an estimator satisfying  \prettyref{def:weakrecovery} is equivalent to
the existence of an estimator such that $ \eexpect{| \hat C  \triangle C^* |}  = o(K)$ (see \cite[Appendix A]{HajekWuXu_one_info_lim15} for a proof).

Intuitively, if the community size $K$ decreases, or $p$ and $q$ get closer,  recovery of the community becomes harder.
A critical role is played by the parameter
\begin{equation}
\lambda=\frac{K^2(p-q)^2}{(n-K)q},	
	\label{eq:lambda}
\end{equation}
 which can be interpreted as the effective signal-to-noise ratio for classifying a vertex according to its degree.
It turns out that if the community size scales \emph{linearly} with the network size, optimal recovery can be achieved via degree-thresholding in linear time. 
For example, if $K\asymp n-K\asymp n$ and $p/q$ is bounded, 
a na\"{i}ve degree-thresholding algorithm can attain weak recovery in time linear in the number of edges,
provided that $\lambda \to \infty$, which is information theoretically necessary when $p$ is bounded away from one. 
Moreover,  one can show that degree-thresholding followed by a linear-time voting procedure achieves exact
recovery whenever it is information theoretically possible in this asymptotic regime  (see \prettyref{app:degreethreshold} for a proof).

Since it is easy to recover a hidden community of size $K=\Theta(n)$ weakly or exactly up to the information limits, we next turn to the \emph{sublinear} regime where $K=o(n)$.
However, detecting and recovering polynomially small communities of size $K=n^{1-\Theta(1)}$ is known \cite{HajekWuXu14} to suffer a fundamental computational barrier (see \prettyref{sec:connections} for details).
In search for the critical point where statistical and computational limits depart, 
the main focus of this paper is in the slightly sublinear regime of $K=n^{1-o(1)}$ and $np  = n^{o(1)}$ and analysis of
the belief propagation (BP) algorithm for community recovery.

The belief propagation algorithm is an iterative algorithm which aggregates the likelihoods
computed in the previous iterations with the observations in the current iteration.
Running belief propagation for one iteration and then thresholding the beliefs reduces to degree thresholding.
Montanari \cite{Montanari:15OneComm} analyzed the performance of the belief propagation algorithm  for community recovery in a different regime with $p=a/n$, $q=b/n$,
 and $K=\kappa n$, where $a,b, \kappa$ are assumed to be fixed as $n \to \infty$.
 In the limit where first $n \to \infty$, and then $\kappa \to 0$ and $a, b\to \infty$,
 it is shown that using a local algorithm\footnote{Loosely speaking, an algorithm is $t$-local,
if the computations determining the status of any given vetex $u$ 
depend only on the subgraph induced
by vertices whose distance  to $u$ is  at most $t.$ See~\cite{Montanari:15OneComm}
for a formal definition. In this paper, $t$ is allowed to slowly grow with $n$
so long as $(2+np)^t=n^{o(1)}.$}, namely  belief propagation running for a constant number of iterations, $\mathbb{E}[|\hat{C} \Delta C^*|] = o(n)$;
conversely, if $\lambda <1/e$, for all local algorithms, 
$\mathbb{E}[|\hat{C} \Delta C^*|] = \Omega(n).$
 However, since we focus on $K=o(n)$ and weak recovery
 demands $\mathbb{E}[|\hat{C} \Delta C^*|] = o(K)$, the following question remains unresolved:
 \emph{Is $\lambda > 1/e$  the performance limit of belief propagation algorithms for
 weak recovery when $K=o(n)$} ?

 In this paper, we answer positively this question by analyzing belief propagation running
 for $\log^\ast n+O(1)$ iterations.
Here, $\log^* (n)$ is the iterated logarithm, defined as the number of times the
 logarithm function must be iteratively applied to $n$ to get a result less than or equal to one.
We show that if $\lambda > 1/e,$  weak recovery can be
achieved by a belief propagation algorithm running for $\log^\ast(n)+O(1)$ iterations, whereas
if $\lambda < 1/e,$  all local algorithms including belief propagation cannot asymptotically outperform trivial random guessing without the observation of the graph. 

The proof is based on analyzing the analogous belief propagation algorithm to
classify the root node of a multi-type Galton-Watson tree, which is the
limit in distribution of the neighborhood of a given vertex in the original graph $G.$
In contrast to the analysis of belief propagation in \cite{Montanari:15OneComm}, where the number of iterations is held fixed regardless of the size of 
graph $n$, our analysis on the tree and the associated coupling lemmas entail the number of iterations converging slowly to infinity as the size of the graph increases, 
 in order to guarantee adequate performance of the algorithm in the case that $K=o(n)$.
 Also, our analysis is mainly based on studying the recursions of 
 exponential moments of beliefs instead of Gaussian approximations as used in   \cite{Montanari:15OneComm}.


Furthermore, we analyze a linear message passing algorithm corresponding to applying the power method to the \emph{non-backtracking matrix} of the graph~\cite{KrzakalaMMSLC13spectral,BordenaveLelargeMassoulie:2015dq},
whose spectrum has been shown to be more informative than that of the adjacency matrix for the purpose of clustering.  It is established that this
linear message passing algorithm followed by thresholding provides weak recovery if $\lambda > 1$ 
and it does not improve upon trivial random guessing asymptotically if $\lambda < 1$.

As shown in \prettyref{rmk:kesten-stigum}, the threshold $\lambda=1$ coincides with the Kesten-Stigum threshold \cite{Kesten1966additional,Mossel04}, which originated in the study of phase transitions of limiting offspring distributions of multi-type Galton-Watson trees. Since the local neighborhood of a given vertex  under stochastic
block models is a multi-type Galton-Watson tree in the limit, the Kesten-Stigum threshold also plays a critical role in the study of community detection. It was first conjectured~\cite{Decelle11} and later rigorously proved 
that for stochastic block models with two equal-sized planted communities, recovering a community partition positively correlated with the planted one is efficiently attainable if above the Kesten-Stigum threshold~\cite{Massoulie13,Mossel13,BordenaveLelargeMassoulie:2015dq}, while it is information-theoretically impossible if below the threshold~\cite{Mossel12}. With more than three equal-sized communities, correlated recovery is shown to be informationa-theoretically possible beyond the Kesten-Stigum threshold; however, it is conjectured that 
no polynomial-time algorithm can succeed in correlated recovery beyond the Kesten-stigum threshold~\cite{banks-etal-colt,AbbeSandon16}. In contrast, we show that in the case of a single hidden community, belief propagation algorithm achieves weak recovery efficiently beyond the Kesten-Stigum threshold by a factor of $e$.  The problems mentioned above with equal-sized communities are balanced in the sense that the
expected degree of a vertex given its community label is the same for all community labels.   The single community problem we
study is unbalanced--vertex degrees reveal information on vertex community labels.   Hence, our results do not disprove that the
Kesten-Stigum threshold is the limit for computationally tractable algorithms in the balanced case.



Finally, we address exact recovery.
  As shown in \cite[Theorem 3]{HajekWuXu_one_info_lim15}, if there is an algorithm that can provide weak recovery even if the community size
is random and only approximately equal to $K,$  then it
can be combined with a linear-time voting procedure to achieve exact recovery whenever it is information-theoretically possible.
For $K=o(n)$, we show that both the belief propagation and the linear message-passing algorithms indeed 
can be upgraded to achieve exact recovery via local voting. Somewhat surprisingly, belief propagation plus voting achieves the information limit of exact recovery  if  $K \ge \frac{n}{\log n} \left( \rho_{\rm BP} (p/q) +o(1) \right),$ where $\rho_{\sf BP}(c) \triangleq \frac{1}{e (c-1)^2} ({1 - \frac{c-1}{\log c} \log \frac{e \log c}{c-1} })$. 

\section{Related work} \label{sec:connections}
The problem of recovering a single community demonstrates a fascinating interplay between statistics and computation and a potential departure between computational
and statistical limits.

In the special case of $p=1$ and $q=1/2$, the problem of finding one community reduces to the classical  planted clique problem \cite{Jer92}.
If the clique has size $ K \le 2(1-\epsilon) \log_2 n $ for any $\epsilon>0$, then it cannot be uniquely determined;
if $ K\ge 2(1+\epsilon) \log_2 n $, an exhaustive search finds the clique with high probability.
In contrast, polynomial-time algorithms are only known to find a clique of
size $K \ge c \sqrt{n} $ for any constant $c>0$~\cite{Alon98,Feige10findinghidden,Dekel10,ames2011plantedclique}, and it is shown in \cite{Deshpande12}
 that  if $K \ge (1+\epsilon) \sqrt{n/\eexp}$, the clique can be found in $O(n^2 \log n)$ time with high probability
and  $\sqrt{n/\eexp}$ may be a fundamental limit for solving the planted clique problem in nearly linear
time in the number of edges in the graph. Recent work \cite {Meka15} shows that the degree-$r$ sum-of-squares (SOS) relaxation cannot find the clique unless $K \gtrsim (\sqrt{n}/\log n)^{1/r}$;
an improved lower bound $K \gtrsim n^{1/3}/\log n$  for the degree-$4$ SOS  is proved in \cite{DeshpandeMontanari15}.
Further improved lower bounds are obtained recently in \cite{HKP15,RS15}. 

Another recent work \cite{HajekWuXu14} focuses on the case $p=n^{-\alpha}$, $q=cn^{-\alpha}$ for fixed constants $c<1$ and $0<\alpha<1$, and
$K=\Theta(n^{\beta})$ for $0<\beta<1$.
It is shown that no polynomial-time algorithm can attain the information-theoretic threshold of detecting the planted dense subgraph unless the planted clique problem can be solved in polynomial time (see \cite[Hypothesis 1]{HajekWuXu14} for the precise statement). For exact recovery, MLE succeeds with high probability if $\alpha<\beta<\frac{1}{2}+\frac{\alpha}{4}$; however, no randomized polynomial-time solver exists, conditioned on the same planted clique hardness hypothesis.

In sharp contrast to the computational barriers discussed in the previous two paragraphs, in the regime
 $p=a \log n /n$ and $q=b\log n/n$ for fixed $a,b$ and $K=\rho n$ for a fixed constant $0<\rho<1$,
recent work \cite{HajekWuXuSDP14} derived a function $\rho^*(a,b)$ such that
if $\rho >\rho^*$, exact recovery is achievable in polynomial-time via
semidefinite programming relaxations of ML estimation; if $\rho <\rho^*$, any estimator fails to exactly recover
the cluster with probability tending to one regardless of the computational costs.

In summary, the previous work revealed that for exact recovery, 
 a significant gap between the information limit and the limit of polynomial-time algorithms emerges as the community size $K$ decreases from $K=\Theta(n)$ to $K=n^{\beta}$ for $0<\beta<1$. In search of the exact phase
transition point where information and computational limits depart, the present paper further zooms into the regime
of $K = n^{1-o(1)}$. We show in \prettyref{app:comparsion_info} that  belief propagation plus voting attains the sharp information limit
 if $K \ge \frac{n}{\log n} (\rho_{\sf BP}(p/q) + o(1))$.  However, as soon as $ \lim_{n\to \infty} K \log n/ n \le \rho_{\sf BP}(p/q) $,
 we observe a gap between the information limit and the necessary condition of local algorithms, given by $\lambda>1/\eexp$. See \prettyref{fig:exact_phase_diagram} for an illustration. 
 For weak recovery,  as soon as $K=o(n)$,  a gap between the information limit and the necessary condition of local algorithms  emerges. 

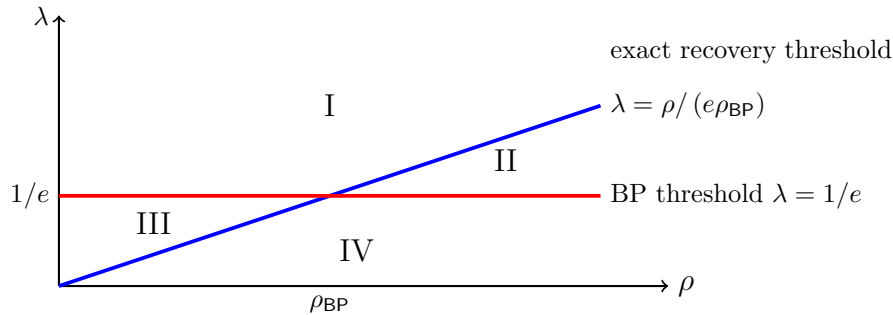
\begin{figure}[h!]
     \begin{center}
\scalebox{1}{
\begin{tikzpicture}[xscale = 1.8,yscale = 2.4, thick]
\node [below] at (2, 0) {$\rho_{\sf BP}$};
\draw[->] (0, 0) -- (4.5, 0) node [below, right]{\large $\rho$};
\draw[->] (0, 0) -- (0, 1.5) node [left]{$\lambda$};
\draw[line width=0.5mm,blue] (0,0)--(4,1);
\draw[line width=0.5mm,red] (0,0.5) -- (4,0.5);
\node [left] at (0, 0.5){$1/e$};
\node [right] at (4,1){$\lambda=  \rho/ \left(e \rho_{\sf BP} \right) $ };
\node [right] at (4,1.3){exact recovery threshold};
\node [right] at (4, 0.5){BP threshold $\lambda=1/e$};
 \node at (2,1){\large I};
 \node at (3.3,0.7){\large II};
 \node at (2.2,0.2){\large IV};
 \node at (0.7,0.35){\large III};
\end{tikzpicture}}
\end{center}
\caption{Phase diagram with $K= \rho n / \log n$ and $p/q=c$ for fixed constants $c\ge 1$, $\rho$, and
$\lambda$ as $n \to \infty$.
 In region I, exact recovery is provided by the BP algorithm plus voting procedure.
 In region II, weak recovery is provided by the BP
 algorithm, but exact recovery is not information theoretically possible.
 In region III exact recovery is information theoretically possible, but no polynomial-time algorithm is known for even weak recovery.   In region IV, with $\lambda >0$ and $\rho>0$, weak recovery,
 but not exact recovery, is information theoretically possible and no polynomial time algorithm is known for weak recovery.
}
\label{fig:exact_phase_diagram}
\end{figure}


\section{Main results 
} \label{sec:main}
As mentioned above, in search
for the critical point where statistical and computational
limits depart, we focus on the regime where $K$ is slightly
sublinear in $n$  and invoke the following assumption.

\begin{assumption}\label{ass:mainassumption}  
As $n\to\infty,$   $p\geq q$,  $p/q=O(1)$, $n^{1-o(1)} \leq K \leq o(n)$,
and $\lambda$ is a positive constant.
\end{assumption}

\subsection{Upper and lower bounds for belief propagation}
Let $\sigma \in \{0,1\}^n$ denote the indicator vector of $C^\ast$
and $A$ denote the adjacency matrix of the graph $G.$
To detect whether a given vertex $i$ is in the community, a natural approach is to compare the log likelihood 
ratio $\log  \frac{\prob{G| \sigma_i =1 } }{ \prob{G | \sigma_i =0 } } $ to a certain threshold.  However, it is often 
computationally expensive to evaluate the log likelihood ratio.  As we show in this paper, when the average degree
scales as $n^{o(1)}$, the neighborhood of vertex $i$ is tree-like with high probability as long as the radius $t$ of
the neighorhood satisfies $(2+np)^t=n^{o(1)}$; moreover, 
on the tree, the log likelihoods can be  exactly computed  in a finite recursion via belief propagation. 
These two observations together suggest the following belief propagation algorithm for approximately computing
the log likelihoods for the community recovery problem (See \prettyref{lmm:BPTree} for derivation of belief propagation
algorithm on tree).
Let $\partial i$ denote the set of neighbors of $i$ in $G$ and
$$
\nu \triangleq \log  \frac{n-K}{K},
$$
which is equal to the log prior ratio $\log \frac{\prob{\sigma_i=0}}{\prob{\sigma_i=1}}$.
Define the message transmitted from vertex $i$ to its neighbor $j$ at $(t+1)$-th iteration
as
\begin{align}  \label{eq:mp_commun}
R^{t+1}_{i \to j} = - K(p-q) + \sum_{ \ell \in \partial i \backslash \{ j\} }   \log \left(  \frac{ \eexp^{ R^{t}_{\ell \to i}  - \nu } \left( \frac{p}{q} \right) + 1  }{ \eexp^{R^{t}_{\ell \to i}  -\nu } +1 } \right)
\end{align}
for initial conditions  $R^{0}_{i \to j} = 0$ for all $i \in [n]$ and $j \in \partial i$.
Then we approximate $\log \frac{\prob{ G| \sigma_i =1 } }{ \prob{G | \sigma_i =0 } }$
by the belief of vertex $i$ at $(t+1)$-th iteration, $R^{t+1}_i$, which is determined by combining incoming messages
from its neighbors as follows:
\begin{align}   \label{eq:mp_combine_commun}
R^{t+1}_{i} = - K(p-q) + \sum_{ \ell \in \partial i } \log \left(  \frac{ \eexp^{ R^{t}_{\ell \to i}   -\nu }\left( \frac{p}{q} \right)   + 1  }{ \eexp^{R^{t}_{\ell \to i} - \nu } +1 } \right).
\end{align}

\begin{algorithm}[htb]
\caption{Belief propagation for weak recovery}\label{alg:MP_commun}
\begin{algorithmic}[1]
\STATE Input: $n,  K \in \naturals.$ $p>q >0$, adjacency matrix $A \in \{0,1\}^{n\times n}$, $t_f \in \naturals$
\STATE Initialize: Set  $R^{0}_{i \to j}=0$ for all $i  \in [n]$ and $j \in \partial i$.
\STATE Run $t_f-1$ iterations of belief propagation as in \prettyref{eq:mp_commun} to compute $R^{t_f-1}_{i\to j}$ for all $i \in [n]$ and $j \in \partial i$.
\STATE Compute  $R_{i}^{t_f }$ for all $i \in [n]$ as per \prettyref{eq:mp_combine_commun}.
\STATE Return $\hat{C},$ the set of  $K$ indices in $[n]$ with largest values of $R_{i}^{t_f}.$
\end{algorithmic}
\end{algorithm}

%
%

\begin{theorem}  \label{thm:BP_Bernoulli}    
Suppose \prettyref{ass:mainassumption} holds with $\lambda  > 1/\eexp$ and $(np)^{\log^* \nu}  = n^{o(1)}.$ 
Let $t_f = \bar t_0+\log^*(\nu ) + 2,$   where $\bar{t}_0$ is a constant depending
only on $\lambda.$
Let $\hat{C}$ be produced by
Algorithm \ref{alg:MP_commun}.   
If
the planted dense subgraph model (\prettyref{def:pds_model}) 
is such that $|C^*|\equiv K,$ then for any constant $r>0$, there exists $\nu_0(r)$ 
such that for all $\nu \ge \nu_0(r)$, 
\begin{equation}   \label{eq:BP_weak_fixed_size}
  \eexpect{  |C^* \triangle \hat{C}|  }  \leq   n^{o(1)}   + 2 K\eexp^{-\nu r}.     
\end{equation}
If instead $|C^*|$ is random with   $\prob{  \big| |C^*| - K\big| \geq \sqrt{3K\log n}  }  \leq n^{-1/2+o(1)}$,
then 
\begin{equation}    \label{eq:BP_weak_random_size}
  \eexpect{  |C^* \triangle \hat{C}|  } \leq  n^{\frac{1}{2}+ o(1)}  +  2K \eexp^{-\nu r}.  
\end{equation}
For either assumption about $|C^*|$, weak recovery is achieved:  $\expect{  |C^* \triangle \hat{C}|} =o(K).$
The running time is $O(|E(G)| \log^* n)$,  where $|E(G)|$ is the number of edges in the graph $G.$
\end{theorem}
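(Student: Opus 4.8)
The plan is to analyze the belief propagation recursion on the limiting Galton-Watson-type tree that describes the local neighborhood of a vertex, establish that on this tree the beliefs separate the two hypotheses ($\sigma_{\text{root}} = 1$ vs. $\sigma_{\text{root}} = 0$) well enough to classify the root with vanishing error after $\log^*(\nu) + O(1)$ iterations, and then transfer this conclusion back to the graph via a coupling argument. Concretely, I would first (Step 1) derive the exact BP recursion on the two-type Poisson tree that arises as the local weak limit of $G$ rooted at a vertex in $C^*$ and at a vertex outside $C^*$ (this is the content of the referenced \prettyref{lmm:BPTree}): a root in the community has a $\Pois(Kp)$ number of in-community children and $\Pois((n-K)q)$ out-of-community children, while a root outside has $\Pois(Kq)$ and $\Pois((n-K)q)$ respectively, recursively. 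On this tree the message $R^{t}_{\ell \to \text{parent}}$ is \emph{exactly} the log-likelihood ratio for the subtree rooted at $\ell$, so the recursion \prettyref{eq:mp_commun} is the correct Bayes-optimal propagation.

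The analytic heart (Step 2) is to track a one-dimensional summary of the distribution of the messages under each hypothesis — I would use exponential moments / log-MGF of $R^t$ rather than a Gaussian approximation, following the remark in the text contrasting with \cite{Montanari:15OneComm}. The key structural fact is a self-improving recursion: if at iteration $t$ the messages have signal-to-noise summary $x_t$, then $x_{t+1}$ satisfies a recursion of the form $x_{t+1} \approx \lambda \, \phi(x_t)$ with $\phi$ behaving like $\phi(x)\sim x$ for large $x$ but with a multiplicative gain when $\lambda > 1/e$, so that from any positive starting value the summary grows and, crucially, a doubly-exponential (tower-type) growth kicks in — this is exactly why $\log^*(\nu)$ iterations suffice to drive the separation up to the scale $\nu = \log\frac{n-K}{K}$ needed to overcome the prior imbalance. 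I would split this into: (2a) a constant number $\bar t_0$ of "burn-in" iterations to escape a neighborhood of the fixed point, using the hypothesis $\lambda > 1/e$ to get strict expansion (this is where $1/e$ enters: it is the threshold for the linearized map to be expanding); (2b) $\log^*(\nu) + O(1)$ iterations of tower-type growth; (2c) a final moment/Chernoff bound showing that after $t_f$ iterations the belief $R^{t_f}_i$ at a true community vertex exceeds $\nu r$ with probability $1 - O(e^{-\nu r})$ and at a non-community vertex is below a comparable threshold, giving per-vertex misclassification probability $O(e^{-\nu r})$; summing over the $\sim K$ community vertices and $\sim n$ non-community vertices (and noting that picking the top-$K$ beliefs only doubles the symmetric-difference bound) yields \prettyref{eq:BP_weak_fixed_size}.

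Step 3 is the coupling: show that with probability $1 - n^{o(1)}/n$ the depth-$t_f$ neighborhood of a fixed vertex $i$ in $G$ is a tree and is isomorphic (including labels) to the limiting tree truncated at depth $t_f$; this requires $(np)^{t_f} = (np)^{\log^*\nu + O(1)} = n^{o(1)}$, which is precisely the hypothesis $(np)^{\log^*\nu} = n^{o(1)}$, so the expected number of short cycles through $i$ is $n^{o(1)-1}$ and a union bound over exceptional vertices contributes the $n^{o(1)}$ additive term. For the random-size case I would condition on $|C^*|$ being within $\sqrt{3K\log n}$ of $K$ (which fails with probability $n^{-1/2+o(1)}$, explaining the $n^{1/2+o(1)}$ term) and absorb the discrepancy in community size into the error. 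Finally, the running time: each of the $t_f - 1 = \log^* n + O(1)$ iterations touches each directed edge once, so the cost is $O(|E(G)| \log^* n)$, and the top-$K$ selection is $O(n)$; weak recovery $\expect{|C^*\triangle\hat C|} = o(K)$ follows since $n^{o(1)} = o(K)$ (as $K \ge n^{1-o(1)}$) and $Ke^{-\nu r} = o(K)$.

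The main obstacle I expect is Step 2b — making the tower-growth argument rigorous with only control of exponential moments rather than the full distribution. One must show that the relevant log-MGF does not merely grow but grows fast enough, uniformly over the evolving (non-Gaussian, heavy-left-tailed) message distributions, and that the approximation errors accumulated over $\log^* \nu$ iterations remain negligible relative to the growing signal; controlling these errors when the scale is changing by a tower of exponentials is the delicate part, and is where the bulk of the technical work (and presumably the choice of the constant $\bar t_0$ depending on $\lambda$) resides.
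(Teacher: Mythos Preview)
Your proposal is correct and follows essentially the same approach as the paper: the proof indeed combines the exact BP recursion on the Poisson tree (\prettyref{lmm:BPTree}), an exponential-moment analysis culminating in a Bhattacharyya-type error bound for classifying the root (\prettyref{lmm:bp_tree_error}), and the coupling lemma (\prettyref{lmm:treecoupling}) to transfer this to $G$, with the passage from the thresholded estimator to the top-$K$ estimator handled via $|C^*\triangle \hat C| \leq 2|C^*\triangle \hat C_o| + \big||C^*|-K\big|$. Two small points of precision: the key recursion is $b_{t+1}\approx e^{\lambda b_t}$ (with $b_t=\Expect[e^{Z_1^t}/(1+e^{Z_1^t-\nu})]$), so the $1/e$ threshold enters through the global inequality $e^{\lambda b}\geq \lambda e\, b$ rather than a linearization, and the overshoot issue you anticipate in Step~2b is handled in the paper by a ``thinning'' argument (randomly deleting grandchildren with probability $1-\phi$) to make $b_{t^*+1}$ hit the target level exactly.
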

We remark that the same conclusion also holds
for the estimator $\hat C_o = \{ i : R_{i}^{t_f} \geq \nu \}$, 
but returning a constant size estimator $\hat{C}$ leads to simpler analysis of the algorithm for exact recovery.



Next we discuss how to use the belief propagation (BP) algorithm to  achieve exact recovery. The key idea is
to attain exact recovery in two steps. In the first step, we apply BP
for weak recovery. In the second step, we use a linear-time local voting procedure to clean-up the residual errors made by BP.  In particular, for each vertex $i$, we count $r_i$, the number of neighbors in the community estimated by BP,
and pick the set of $K$ vertices with the largest values of $r_i$.
To facilitate analysis, we adopt the {\em successive withholding} method described in
\cite{Mossel14,HajekWuXu_one_info_lim15} to ensure the first and second step are independent of each other.
In particular, we first randomly partition
the set of vertices into a finite number of subsets.   One at a time,
one subset is withheld to produce a reduced set of vertices, to which
BP is applied.   The
estimate obtained from the reduced set of vertices is used to classify the
vertices in the withheld subset.  The idea is to gain independence:
the outcome of BP based on the reduced set of vertices is
independent of the data corresponding to edges between the
withheld vertices and the reduced set of vertices. The full description of the
algorithm is given in \prettyref{alg:MPplus_Bernoulli}.

\begin{algorithm}
\caption{Belief propagation  plus cleanup for exact recovery}\label{alg:MPplus_Bernoulli}
\begin{algorithmic}[1]
\STATE Input: $n \in \naturals$, $K >0$, $p>q >0$, adjacency matrix $A \in \{0,1\}^{n\times n}$, $t_f \in \naturals,$
and $\delta \in (0,1)$ with $1/\delta, n\delta \in \naturals.$
\STATE (Partition): Partition $[n]$ into $ 1/\delta$ subsets $S_k$ of size $n\delta,$  uniformly at random.
\STATE (Approximate Recovery) For each $k=1, \ldots,  1/\delta $, let $A_k$ denote the restriction of $A$ to the rows and columns with index
in $[n]\backslash S_k$, run  Algorithm \ref{alg:MP_commun} (belief propagation for weak recovery)
with input $(n(1-\delta), \lceil K(1-\delta)\rceil , p, q , A_k, t_f)$
and let $\hat{C}_k$ denote the output.
\STATE (Cleanup) For each $k=1, \ldots,  1/\delta $  compute $r_i=\sum_{j \in \hat{C}_k } A_{ij}$ for all $i \in S_k$ and return
$\tilde{C}$, the set of $K$ indices in $[n]$ with the largest values of $r_i.$
\end{algorithmic}
\end{algorithm}


\begin{theorem}  \label{thm:MP_plus_Bernoulli}   
Suppose \prettyref{ass:mainassumption} holds with $\lambda  > 1/\eexp$ and  $(np)^{\log^* \nu}  = n^{o(1)}.$
Consider the planted dense subgraph model (\prettyref{def:pds_model}) with $|C^*|\equiv K.$
Select $\delta > 0$ so small that $(1-\delta)\lambda \eexp > 1$. 
Let $t_f = \bar t_0+\log^*(\nu ) + 2,$   where $\bar{t}_0$ is a constant depending
only on $\lambda(1-\delta).$ 
Also, suppose $p$ is bounded away from $1$ and 
the following condition
is satisfied:
\begin{align}
\liminf_{n \to \infty}  \frac{ K  d(\tau^\ast \| q) }{\log n } > 1,
\label{eq:planted_dense_exact_suff1XX}
\end{align}
where
\begin{align}
\tau^\ast = \frac{ \log \frac{1-q}{1-p} + \frac{1}{K} \log \frac{n}{K} }{\log \frac{p(1-q)}{q(1-p) } } \label{eq:deftau}
\end{align}
and 
$d(p\|q) = 
p \log \frac{p}{q} + (1-p)\log \frac{1-p}{1-q}$ denotes the Kullback-Leibler divergence between Bernoulli
distributions with mean $p$ and $q$.
Let $\tilde{C}$ be produced by Algorithm \ref{alg:MPplus_Bernoulli}.
Then $\Prob\{\tilde{C} = C^*\} \to 1$  as $n \to \infty$.
The running time is $O(|E(G)| \log^* n).$
\end{theorem}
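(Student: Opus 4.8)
The plan is to dissect Algorithm~\ref{alg:MPplus_Bernoulli} along the successive-withholding construction, which makes its two phases — approximate recovery by belief propagation, then a local vote — almost independent. First I would record the decoupling: for each block $k$, the output $\hat{C}_k$ is a deterministic function of the edges inside $[n]\setminus S_k$, whereas for $i\in S_k$ the cleanup score $r_i=\sum_{j\in\hat{C}_k}A_{ij}$ depends only on the edges between $S_k$ and $\hat{C}_k\subseteq[n]\setminus S_k$. Hence, conditioned on the (independent) random partition and on $\hat{C}_k$, the variables $\{A_{ij}:i\in S_k,\,j\in\hat{C}_k\}$ are mutually independent, $\Bern(p)$ when $i,j\in C^*$ and $\Bern(q)$ otherwise. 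Since $\tilde{C}=C^*$ holds exactly when $\min_{i\in C^*}r_i>\max_{i\notin C^*}r_i$, the task reduces to exhibiting a deterministic threshold $t=t(n)$ with $n\,\prob{r_i\ge t}\to0$ for $i\notin C^*$ and $K\,\prob{r_i<t}\to0$ for $i\in C^*$, and then to union-bound over the $n-K$ outsiders and the $K$ insiders.

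For the first phase I would apply Theorem~\ref{thm:BP_Bernoulli} to each reduced instance. Deleting $S_k$ leaves a planted dense subgraph model on $n(1-\delta)$ vertices with community $C^*\setminus S_k$ and the same $p,q$; its signal-to-noise ratio is $(1-\delta)\lambda$ and its parameter $\nu$ is unchanged, so the choice $(1-\delta)\lambda>1/\eexp$, the hypothesis $(np)^{\log^*\nu}=n^{o(1)}$, and $\bar{t}_0$ chosen for $\lambda(1-\delta)$ place us within the scope of Theorem~\ref{thm:BP_Bernoulli}. Since $S_k$ is a uniform subset of size $n\delta$, $|C^*\cap S_k|$ is hypergeometric and concentrates, so $\big| |C^*\setminus S_k|-\lceil K(1-\delta)\rceil\big|\le\sqrt{3K\log n}$ with probability $1-n^{-1/2+o(1)}$ — exactly the random-community-size hypothesis behind~\prettyref{eq:BP_weak_random_size}. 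Feeding that bound a large constant $r$, summing over the $1/\delta=O(1)$ blocks, and applying Markov's inequality (using $K=n^{1-o(1)}$ and $\nu\to\infty$) then shows that with probability tending to $1$ one has $\ell_k:=|\hat{C}_k\triangle(C^*\setminus S_k)|\le\epsilon_nK$ for all $k$ simultaneously, where $\epsilon_n\to0$.

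For the second phase, work on the good event $\{\ell_k\le\epsilon_nK\ \forall k\}$, on which $|\hat{C}_k|=\lceil K(1-\delta)\rceil$ and $|\hat{C}_k\cap C^*|\ge K(1-\delta-\epsilon_n)-O(\sqrt{K\log n})$. Conditionally, $r_i\sim\Binom(\lceil K(1-\delta)\rceil,q)$ for $i\notin C^*$, and $r_i$ stochastically dominates $\Binom(|\hat{C}_k\cap C^*|,p)$ for $i\in C^*$, so Chernoff's bound gives $\prob{r_i\ge t}\le\exp(-\lceil K(1-\delta)\rceil\, d(t/\lceil K(1-\delta)\rceil\,\|\,q))$ and, on the good event, $\prob{r_i< t}\le\exp(-\underline m\, d(t/\underline m\,\|\,p))$ with $\underline m=\lfloor K(1-\delta-\epsilon_n)\rfloor$. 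I would take $t$ to be a rounding of $s_0K(1-\delta)$ for a fixed $s_0$ strictly between $q$ and $p$ chosen so that, with the idealized sizes $K(1-\delta)$, the false-alarm exponent exceeds $\log n$ and the miss exponent exceeds $\log K$ with room to spare; that $s_0=\tau^\ast$ works once $\delta$ (hence $\epsilon_n$) is small follows from \prettyref{eq:deftau}, which yields the identity $K\,d(\tau^\ast\,\|\,q)-K\,d(\tau^\ast\,\|\,p)=\log(n/K)$: combined with the strict inequality in~\prettyref{eq:planted_dense_exact_suff1XX} (which provides a slack $\beta>0$, so $K\,d(\tau^\ast\,\|\,q)\ge(1+2\beta)\log n$ and hence $K\,d(\tau^\ast\,\|\,p)\ge\log K+2\beta\log n$), multiplying through by $(1-\delta)$ keeps both exponents above $\log n$, resp.\ $\log K$, as long as $\delta<\beta/(1+2\beta)$ — a smallness compatible with $(1-\delta)\lambda\eexp>1$. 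Since $p$ is bounded away from $1$ and $np=n^{o(1)}$, one checks that $p,q\to0$ and $q<\tau^\ast<p$, whence the derivatives $\partial_s d(s\,\|\,q),\ \partial_s d(s\,\|\,p)$ near $s\approx\tau^\ast$ are $O(1)$; so replacing the idealized sizes by the true ones and rounding $t$ shift the two exponents only by $o(\log n)$, resp.\ $o(\log K)$, which the room-to-spare absorbs. Together with $\prob{\text{bad event}}\to0$, this gives $\prob{\tilde{C}\ne C^*}\to0$. For the running time, Theorem~\ref{thm:BP_Bernoulli} gives $O(|E(G)|\log^*n)$ per block, the cleanup is $O(|E(G)|)$ per block (one pass over the edges, accumulating the $r_i$), and there are $O(1)$ blocks, for a total of $O(|E(G)|\log^*n)$.

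I expect the second phase to be the main obstacle: one must show the local vote based on $r_i$ is robust both to the $o(K)$ residual misclassifications inside $\hat{C}_k$ and to the fact that $|\hat{C}_k|\approx K(1-\delta)$ rather than $\approx K$ (which forces $\delta$, and with it the withholding granularity, to be small relative to the slack in~\prettyref{eq:planted_dense_exact_suff1XX}), and one must keep the Chernoff exponents uniformly controlled across the admissible range of $(p,q)$, in particular near $\tau^\ast\to0$. This is precisely the clean-up argument behind \cite[Theorem~3]{HajekWuXu_one_info_lim15}, which shows that weak recovery with approximate community size can always be upgraded to exact recovery whenever the latter is information-theoretically possible, and which can be invoked here once it is fed the weak-recovery guarantee~\prettyref{eq:BP_weak_random_size}.
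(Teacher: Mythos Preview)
Your proposal is correct and follows essentially the same two-phase strategy as the paper: hypergeometric concentration of $|C^*\setminus S_k|$ (via Hoeffding's sampling-without-replacement result) feeds \prettyref{thm:BP_Bernoulli} on each reduced instance with effective signal-to-noise $(1-\delta)\lambda>1/\eexp$, and then the voting cleanup succeeds under~\prettyref{eq:planted_dense_exact_suff1XX}. The paper's own proof is terser---it simply records $\prob{|\hat C_k\triangle C_k^*|\le\delta K\ \forall k}\to1$ and then invokes \cite[Theorem~3]{HajekWuXu_one_info_lim15} wholesale for the cleanup, whereas you sketch that Chernoff argument explicitly (and correctly identify it as the content of the cited result); your extra smallness requirement on $\delta$ relative to the slack in~\prettyref{eq:planted_dense_exact_suff1XX} is exactly the kind of detail absorbed into that citation.
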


Note that the condition \prettyref{eq:planted_dense_exact_suff1XX} is shown
in \cite{HajekWuXu_one_info_lim15} to be the necessary (if ``$>$" is replaced by ``$\geq$") and sufficient condition for the success of clean-up procedure
in upgrading  weak recovery to exact recovery.

 \ifthenelse{\equal{\type}{APT}}{}{
 We comment briefly on some implementation issues for \prettyref{alg:MPplus_Bernoulli}.
The assumption $n\delta \in \naturals$ is an integer is only for notational
convenience.    If we drop that assumption, and continue to assume
 $\frac 1 \delta \in \naturals,$  and if  $n\geq \left(  \frac 1 \delta + 1\right)^2$,
we could partition $[n]$ into $\frac 1 \delta + 1$ subsets, the first
$\frac 1 \delta$ of which have cardinality $\lfloor n \delta \rfloor,$ and the
last of which has cardinality less than or equal to $\lfloor n \delta \rfloor.$ 
The proof of \prettyref{thm:MP_plus_Bernoulli} then goes through
with minor modifications.   Also, the constant $\delta$ does not need to
be extremely small to allow $\lambda$ to be reasonably close to $1/\eexp.$
For example,  if we take $\delta=1/11,$ the condition on $\lambda$ in
\prettyref{thm:MP_plus_Bernoulli}  becomes  $\lambda > \frac{1.1}{\eexp}.$
}

Next, we provide a lower bound  on the error probability achievable by any local algorithm for estimating
the label $\sigma_u$ of a given vertex $u$. 
Let   $p_e= \pi_0p_{e,0} + \pi_1p_{e,1}$ for prior probabilities $\pi_0=(n-K)/n$  and $\pi_1=K/n,$
where $p_{e,0}=\prob{ \hat{\sigma}_u =1 | \sigma_u=0}$ and $p_{e,1}= \prob{ \hat{\sigma}_u =0 | \sigma_u=1}.$

\begin{theorem}
[Converse for local algorithms]
 \label{thm:planted_BP_converse}
Suppose \prettyref{ass:mainassumption} holds with  $0 < \lambda  \leq 1/ \eexp.$
Let $t_f \in \naturals$ depend on $n$ such that  $(2+np)^{t_f }=n^{o(1)}.$
Consider the planted dense subgraph model (\prettyref{def:pds_model}) with $C^*$ random and uniformly
distributed over all subsets of $[n]$ such that $|C^*|\equiv K.$
Then for any estimator $\hat C$ such that  for each vertex $u$ in $G$,  $\sigma_u$ is estimated based
on $G$ in a neighborhood of radius $t_f$ from $u,$
\begin{align} \label{eq:lower_bnd_comm_Bhat}
\Expect[| \hat C \triangle C^*| ] \geq  \frac{K(n-K)}{n}\exp(-\lambda \eexp / 4) - n^{o(1)}.
\end{align}
and
\begin{align}  \label{eq:Psucc_bnd}
 p_{e,0}+p_{e,1} \geq \frac{1}{2}e^{-1/4} - n^{-1+o(1)}.
\end{align}
Furthermore, $ \liminf_{n\to\infty}  \frac{np_e}{K} \geq 1,$ or,
equivalently,
\begin{align}   \label{eq:BP_converse_I}
\liminf_{n\to\infty} \frac{\Expect[| \hat C \triangle C^*| ] }{K} \geq 1.
\end{align}
\end{theorem}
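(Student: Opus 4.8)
The plan is to pass to the local weak limit of $G$ — a two-type Poisson branching tree — reduce the converse to lower bounds on the Bayes error of classifying the root of this tree, and then control the relevant information divergences by a depth recursion whose stability is governed by $\lambda=1/\eexp$.

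\emph{Step 1 (localization to a tree).} Fix a vertex $u$. Since $(2+np)^{t_f}=n^{o(1)}$, the radius-$t_f$ neighborhood of $u$ in $G$ has $n^{o(1)}$ vertices with high probability, and a standard exploration-and-coupling argument (of the type used for \prettyref{thm:BP_Bernoulli}) shows that this neighborhood is, up to total variation distance $n^{-1+o(1)}$, a depth-$t_f$ two-type Poisson Galton--Watson tree $T$: the root carries label $1$ with probability $\pi_1=K/n$ and $0$ otherwise; a label-$1$ vertex has $\Pois(Kp)$ label-$1$ children and $\Pois((n-K)q)$ label-$0$ children, a label-$0$ vertex has $\Pois(Kq)$ label-$1$ children and $\Pois((n-K)q)$ label-$0$ children, and only the unlabeled tree is revealed. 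Thus for any estimator $\hat\sigma_u$ depending only on the radius-$t_f$ neighborhood of $u$, the quantities $p_{e,0},p_{e,1},p_e$ agree to within $n^{-1+o(1)}$ with their tree analogues; moreover $p_e$ is at least the per-vertex Bayes risk on $T$, which is attained by thresholding the exact log-likelihood ratio $R^{t_f}_{\mathrm{root}}$ computed by belief propagation (\prettyref{lmm:BPTree}). By symmetry $\Expect[|\hat C\triangle C^*|]=n p_e$. So it suffices to prove the three inequalities for the tree, absorbing an additive $n^{-1+o(1)}$.

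\emph{Step 2 (reduction to a divergence bound).} Let $P_0^{t},P_1^{t}$ be the laws of the unlabeled depth-$t$ tree given root label $0$ and $1$, and $L_t=\diff P_1^{t}/\diff P_0^{t}=\eexp^{R^{t}_{\mathrm{root}}}$. I claim the three displays follow once we show, for every $t$, that $D(P_1^{t}\,\|\,P_0^{t})\le\lambda\eexp/4$ and that $\Expect_{P_0^{t}}[L_t^{1+s}]$ is bounded uniformly in $t$ for some fixed $s>0$. Indeed: (i) Le Cam's two-point bound gives $p_{e,0}+p_{e,1}\ge 1-\dTV(P_0^{t_f},P_1^{t_f})$, and Bretagnolle--Huber's $\dTV\le\sqrt{1-\eexp^{-D}}$ with $1-\sqrt{1-x}\ge x/2$ yield $1-\dTV\ge\frac{1}{2}\eexp^{-D(P_1^{t_f}\|P_0^{t_f})}\ge\frac{1}{2}\eexp^{-\lambda\eexp/4}\ge\frac{1}{2}\eexp^{-1/4}$, which is \prettyref{eq:Psucc_bnd}; (ii) $p_e$ is bounded below by the per-vertex Bayes risk $\int\min(\pi_0\diff P_0^{t_f},\pi_1\diff P_1^{t_f})\ge\pi_0\pi_1\pth{\int\sqrt{\diff P_0^{t_f}\diff P_1^{t_f}}}^2\ge\pi_0\pi_1\,\eexp^{-D(P_1^{t_f}\|P_0^{t_f})}\ge\pi_0\pi_1\,\eexp^{-\lambda\eexp/4}$ — the middle inequality by Cauchy--Schwarz using $\int\max(\pi_0\diff P_0^{t_f},\pi_1\diff P_1^{t_f})\le 1$, the next by Jensen — so $\Expect[|\hat C\triangle C^*|]=n p_e\ge\frac{K(n-K)}{n}\eexp^{-\lambda\eexp/4}$, which is \prettyref{eq:lower_bnd_comm_Bhat}; (iii) the same Bayes risk also satisfies $\int\min(\pi_0\diff P_0^{t_f},\pi_1\diff P_1^{t_f})\ge\pi_1\pth{1-P_1^{t_f}(L_{t_f}>(n-K)/K)}$, and since $\nu=\log\frac{n-K}{K}\to\infty$ (as $K=o(n)$) Markov's inequality gives $P_1^{t_f}(L_{t_f}>\eexp^{\nu})\le\eexp^{-s\nu}\Expect_{P_1^{t_f}}[L_{t_f}^{s}]=\eexp^{-s\nu}\Expect_{P_0^{t_f}}[L_{t_f}^{1+s}]=o(1)$, whence $\liminf n p_e/K\ge 1$, which is \prettyref{eq:BP_converse_I}.

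\emph{Step 3 (the depth recursion, and the main obstacle).} It remains to prove the divergence and moment bounds of Step 2; this is the crux. Following the exponential-moment method, track $\psi_t(s):=\log\Expect[\eexp^{sR^{t}}]$ over a suitable family of reference laws, $R^t$ being the belief of a generic depth-$t$ subtree. The additive update \prettyref{eq:mp_commun}, the Poisson offspring counts, and the ``bulk'' linearization $\log\frac{K\eexp^{r}(p/q)+(n-K)}{K\eexp^{r}+(n-K)}\approx\frac{K(p-q)}{(n-K)q}\,\eexp^{r}$ — valid while $r\ll\nu$, which is the relevant regime since messages start at $0$ and $\nu\to\infty$ — reduce density evolution to a scalar recursion of the schematic form $\delta_{t+1}=\lambda\,\eexp^{\delta_t}\pth{1+o(1)}$ with $\delta_0=0$, for an appropriately normalized divergence-type quantity $\delta_t$. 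Such an orbit stays bounded precisely when the curve $x\mapsto\lambda\eexp^{x}$ meets the diagonal, i.e.\ when $\lambda\le\max_{x\ge 0}x\eexp^{-x}=1/\eexp$; in that case $\delta_t\uparrow\delta_\infty$ with $\delta_\infty=\lambda\eexp^{\delta_\infty}\le 1$, so $\delta_\infty\le\lambda\eexp$, and carrying through the normalization constant yields $D(P_1^{t}\|P_0^{t})\le\lambda\eexp/4$ and $\sup_t\Expect_{P_0^{t}}[L_t^{1+s}]<\infty$ for small $s>0$; since $D(P_1^{t}\|P_0^{t})$ is non-decreasing in $t$ by data processing, these hold at $t=t_f$ for every admissible $t_f$. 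The hard part is precisely this step: making the bulk linearization rigorous (bounding the contribution of the atypically large messages $r\gtrsim\nu$ and the discrepancy between the true Poisson-driven density evolution and its scalar surrogate) and extracting both the exact critical constant $1/\eexp$ and the clean bound $\lambda\eexp/4$. Steps 1 and 2 are routine by comparison.
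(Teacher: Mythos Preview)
Your Step 1 (coupling to the Poisson tree) and the core recursion idea in Step 3 match the paper's approach: the paper tracks the quantity $b_t=\Expect_{P_1}[\eexp^{Z_1^t}/(1+\eexp^{Z_1^t-\nu})]$, establishes the exact identity $a_{t+1}=\eexp^{\lambda b_t}$ with $b_t\le a_t$ (\prettyref{lmm:expZ}), and observes that $b_0\le 1$ together with $\lambda\le 1/\eexp$ forces $b_t\le \eexp$ for all $t$ by induction.  Your scalar recursion $\delta_{t+1}=\lambda\eexp^{\delta_t}$ is the same mechanism, and your stability analysis is correct.  One comment: the paper's identities are \emph{exact}, so there is no ``bulk linearization'' to justify and no atypical-message contribution to control; that difficulty you flag in Step 3 is an artifact of your formulation, not of the problem.

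There is, however, a genuine gap in Step 2: routing the lower bounds through the Kullback--Leibler divergence loses a factor of two and cannot recover the constants in \prettyref{eq:lower_bnd_comm_Bhat} and \prettyref{eq:Psucc_bnd}.  Concretely, $D(P_1^{t+1}\|P_0^{t+1})=\Expect[Z_1^{t+1}]\approx \lambda b_t/2\le \lambda\eexp/2$ (see \prettyref{lmm:BP_moments}), not $\lambda\eexp/4$; with Bretagnolle--Huber this yields only $p_{e,0}+p_{e,1}\ge \tfrac{1}{2}\eexp^{-\lambda\eexp/2}\ge\tfrac{1}{2}\eexp^{-1/2}$ and $p_e\ge\pi_0\pi_1\eexp^{-\lambda\eexp/2}$.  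The paper instead controls the Bhattacharyya coefficient $\rho_B=\Expect_{P_0}[\eexp^{Z_0^{t}/2}]$ directly, proving $\rho_B\ge\exp(-\lambda b_{t-1}/8)$ (\prettyref{lmm:lowerboundexp}), so $\rho_B^2\ge\eexp^{-\lambda\eexp/4}$; then the standard two-sided Bhattacharyya bound $p_e\ge\pi_0\pi_1\rho_B^2$ (applied with priors $(\tfrac{n-K}{n},\tfrac{K}{n})$ and with $(\tfrac12,\tfrac12)$) gives the sharp displays.  The fix is simply to replace $D$ by the R\'enyi-$\tfrac12$ divergence $-2\log\rho_B$ in your reductions.

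Your argument for \prettyref{eq:BP_converse_I} via a uniform moment bound and Markov's inequality is correct and is in fact cleaner than the paper's route, which invokes a Berry--Esseen CLT (\prettyref{lmm:treeGaussian}) to show $p_{e,1}^t\to 1$.  The moment you need is exactly what the paper already supplies: $\Expect_{P_0}[L_t^2]=\Expect[\eexp^{2Z_0^t}]=\Expect[\eexp^{Z_1^t}]=a_t\le\eexp^{\lambda\eexp}\le\eexp$ uniformly in $t$, so $P_1(L_{t_f}>\eexp^{\nu})\le\eexp^{-\nu}\,\Expect_{P_0}[L_{t_f}^2]\to 0$ and hence $\liminf np_e/K\ge 1$.
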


The assumption $(2+np)^{t_f }=n^{o(1)}$ is needed to ensure the neighborhood of
radius $t_f$ from any given vertex $u$ is a tree with high probability. 

Note that an estimator is said to achieve weak recovery in \cite{Montanari:15OneComm},  if
$\lim_{n \to \infty} p_{e,0}+p_{e,1}=0$. Condition \eqref{eq:Psucc_bnd} shows that weak recovery in this sense
is not possible. If $C^*$ is uniformly distributed over $\{ C \subset [n] : |C|=K\},$
among all estimators that disregard the graph, the
one that minimizes the mean number of classification errors is $\hat C \equiv \emptyset$ (declaring no community), which achieves   $\frac{\eexpect{ |\hat C \triangle C^*| }}{K} = 1$,
or equivalently,
$p_e= K/n$.   Condition \eqref{eq:BP_converse_I} shows that in the asymptotic regime
$\nu \to \infty$ with $\lambda < 1/\eexp$, improving upon random guessing is impossible.

\subsection{Upper and lower bounds for linear message passing}

Results are given in this section  to show that a particular spectral method -- linear message passing --
achieves weak recovery if and only if $\lambda>1.$
Spectral algorithms estimate the communities based on the
principal eigenvectors of the adjacency matrix, see, \eg, 
\cite{Alon98,McSherry01,YunProutiere14} and the reference therein.
Under the single community model,
$\expect{A}= (p-q) ( \sigma  \sigma^\top  -\diag{\sigma} ) + q ( \allones- \identity),$
where $\diag{\sigma}$ denotes the diagonal matrix
with the diagonal entries given by $\sigma$; $\identity $ denotes the identity matrix
and $\allones$ denotes the all-one matrix.
By the Davis-Kahan $\sin\theta$ theorem \cite{DavisKahan70},
the principal eigenvector of $A- q ( \allones- \identity)$ is almost parallel to $ \sigma $  provided that
the spectral norm $\|A-\expect{A}\|$ is much smaller than $K(p-q)$; thus one
can estimate $C^\ast$ by thresholding the principal eigenvector entry-wise.
Therefore, if we apply the spectral method, a natural matrix to start with is $A-q(\mathbf{J}-\mathbf{I})$,  or
$A - q\mathbf{J}.$    Finding the principal eigenvector of $A - q\mathbf{J}$ according to the power method
is done by starting with some vector and repeatedly multiplying by $A - q\mathbf{J}$ sufficiently many times.
We shall consider the scaled matrix $\frac{ A - q\mathbf{J}  }{\sqrt{m}}$ where $m=(n-K)q.$
Of course the scaling doesn't change the eigenvectors.
This suggests the following linear message passing update equation:
\begin{align}
\theta_{i}^{t+1}=   -\frac{q}{\sqrt{m}}     \sum_{\ell  \in [n] }   \theta^t_{ \ell }  +    \frac{1}{\sqrt{m}}      \sum_{\ell \in \partial i } \theta^t_{ \ell }.
\label{eq:spectralpowermethod}
\end{align}
The first sum is over all vertices in the graph and doesn't depend on $i.$    An idea is to appeal to the
law of large numbers and replace the first sum by its expectation.    Also,
in the sparse graph regime $np=o(\log n)$, there exist vertices of high degrees $\omega(np)$, and the spectrum of $A$ is very sensitive
to high-degree vertices (see, \eg, \cite[Appendix A]{HajekWuXuSDP14} for a proof). To deal with this issue, as proposed in \cite{KrzakalaMMSLC13spectral,BordenaveLelargeMassoulie:2015dq}, we associate the messages in \prettyref{eq:spectralpowermethod}
with directed edges and prevent the message transmitted from $j$ to $i$ from being immediately
reflected back as a term in the next message from $i$ to $j,$ resulting in the following linear message passing algorithm:
\begin{align}
\theta_{i\to j}^{t+1} &=- \frac{q ((n-K)A_t  + K B_t) }{\sqrt{m}}  +     \frac{1}{\sqrt{m} } \sum_{\ell \in \partial i\backslash \{j\} } \theta^t_{ \ell \to i }
 \label{eq:spectral_mp1}.
\end{align}
with initial values $\theta^0_{ \ell \to i }=1,$
where  $A_t \approx \Expect[ \theta_{\ell \to i}^{t} | \sigma_\ell=0]$ and $B_t \approx  \Expect[ \theta_{\ell \to i}^{t} | \sigma_\ell=1].$
Notice that when computing $\theta_{i\to j}^{t+1}$, the contribution of $\theta_{j \to i}^t$ is subtracted out.
Since we focus on the regime $np=n^{o(1)}$, the graph is locally tree-like with high probability.
In the Poisson random tree limit of the neighborhood of a vertex, the expectations $\Expect[ \theta_{\ell \to i}^{t} | \sigma_\ell=0]$
and  $ \Expect[ \theta_{\ell \to i}^{t} | \sigma_\ell=1]$ can be calculated
exactly, and as a result we take  $A_0=1,$  $A_t=0$ for $t\geq 1,$  and $B_t = \lambda^{t/2}$ for $t\geq 0.$

The update equation  \prettyref{eq:spectral_mp1} can be expressed in terms of the {\em non-backtracking matrix}
associated with graph $G$.    
It is the matrix $\mathbf{B} \in \{0,1\}^{2m \times 2m}$ with
$B_{e f} = \indc{ e_2=f_1} \indc{e_1 \neq f_2}$, where $e=(e_1,e_2)$ and $f=(f_1,f_2)$
are directed edges. Let ${\Theta}^t \in \reals^{2m}$ denote the
messages on directed edges with ${\Theta}_e^t =\theta^t_{e_1 \to e_2}$.
Then, \prettyref{eq:spectral_mp1} in matrix form reads
\begin{align*}
{\Theta}^{t+1} =  -\frac{q ((n-K)A_t  + K B_t) }{\sqrt{m}}  \mathbf{1} + \frac{1}{\sqrt{m} } \mathbf{B}^\top {\Theta}^t .
\end{align*}
As shown in \cite{BordenaveLelargeMassoulie:2015dq},   the spectral properties of the non-backtracking matrix closely
match those of the original adjacency matrix.  It is therefore reasonable to take the linear
update equation  \prettyref{eq:spectral_mp1} as a form of spectral method for
the community recovery problem. Finally, to estimate $C^\ast$, we
define the belief at vertex $u$ as:
\begin{align}
\theta_{u}^{t+1} & =   - \frac{q ((n-K)A_t  + K B_t) }{\sqrt{m}}  +     \frac{1}{\sqrt{m} } \sum_{i \in \partial u } \theta^t_{ i \to u },
 \label{eq:spectral_mp2}
\end{align}
and select the vertices $u$ such that $\theta_u^{t}$ exceeds a certain threshold. The full description of the algorithm
 is  given in  \prettyref{alg:SP_commun}.

\begin{algorithm}[htb]
\caption{Spectral algorithm for weak recovery}\label{alg:SP_commun}
\begin{algorithmic}[1]
\STATE Input: $n,  K \in \naturals.$ $p>q >0$, adjacency matrix $A \in \{0,1\}^{n\times n}$
\STATE  Set $\lambda=\frac{K^2(p-q)^2}{(n-K)q}$ and $T= \lceil 2 \alpha \frac{\log \frac{n-K}{K}}{\log \lambda}\rceil$, where $\alpha=1/4$ (in fact any $\alpha < 1$ works).
\STATE Initialize: Set  $\theta^{0}_{i \to j}=1$ for all $i  \in [n]$ and $j \in \partial i$.
\STATE Run $T-1$ iterations of message passing as in \prettyref{eq:spectral_mp1} to compute $\theta^{T-1}_{i\to j}$ for all $i \in [n]$ and $j \in \partial i$.
\STATE Run one more iteration of message passing to compute $\theta_{i}^{T}$ for all $i \in [n]$ as per \prettyref{eq:spectral_mp2}.
\STATE Return $\hat{C},$ the set of  $K$ indices in $[n]$ with largest values of $\theta_{i}^T.$
\end{algorithmic}
\end{algorithm}

\begin{theorem}  \label{thm:SP_bernoulli_spectral}    
Suppose  \prettyref{ass:mainassumption} holds with  $\lambda > 1$ and
$(np)^{\log (n/K)} =n^{o(1)}$. 
Consider the planted dense subgraph model (\prettyref{def:pds_model}) with
$$
\prob{ \big|~ |C^*| - K\big| \geq \sqrt{3K\log n}  }  \leq n^{-1/2+o(1)}.
$$
 Let $\hat{C}$ be the estimator produced by
Algorithm \ref{alg:SP_commun}.   Then  $  \expect{  |C^* \triangle \hat{C}|  } =o(K) $.
\end{theorem}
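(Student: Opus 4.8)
The plan is to analyze the linear message passing recursion \eqref{eq:spectral_mp1} on the Poisson random tree that arises as the local weak limit of the neighborhood of a typical vertex, and then transfer the conclusion back to the graph via a coupling lemma. First I would set up the tree model: the root has label $1$ with probability $K/n$ and label $0$ otherwise; conditioned on a vertex having label $1$ (resp. $0$), the number of label-$1$ children is $\Pois(Kp)$ (resp. $\Pois(Kq)$) and the number of label-$0$ children is $\Pois((n-K)q)$ in both cases, up to lower-order corrections. On this tree the messages $\theta^t_{\ell\to i}$ are, conditioned on the label vector, a deterministic affine function of i.i.d.\ offspring messages, so I would compute by induction the conditional means $A_t=\Expect[\theta^t_{\ell\to i}\mid\sigma_\ell=0]$, $B_t=\Expect[\theta^t_{\ell\to i}\mid\sigma_\ell=1]$ — confirming the claimed values $A_0=1$, $A_t=0$ for $t\ge1$, $B_t=\lambda^{t/2}$ — and, more importantly, the conditional variances. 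The key point is that the ``signal'' $B_t-A_t\asymp\lambda^{t/2}$ while the standard deviation of $\theta^t_u$ (the vertex belief \eqref{eq:spectral_mp2}) stays $\Theta(1)$ precisely because $\lambda>1$ makes the subtraction of the mean term in \eqref{eq:spectral_mp1} stabilize the recursion (this is the Kesten--Stigum phenomenon); for $\lambda<1$ the signal would decay.

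Second, with the first and second moments of $\theta^T_u$ in hand for $T=\lceil 2\alpha\frac{\log((n-K)/K)}{\log\lambda}\rceil$, I would quantify the separation between the conditional law of $\theta^T_u$ given $\sigma_u=1$ and given $\sigma_u=0$. Since $B_T-A_T=\lambda^{T/2}\asymp (n/K)^{\alpha}\to\infty$ while the conditional variances are bounded (uniformly in $T$, using $\lambda>1$), Chebyshev's inequality shows that thresholding $\theta^T_u$ at, say, $(A_T+B_T)/2$ misclassifies a typical vertex with probability $o(1)$; more carefully, I would show the expected number of misclassified vertices on the tree, summed with the appropriate prior weights $\pi_0,\pi_1$, is $o(K)$. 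The scaling by $1/\sqrt m$ and the choice $\alpha<1$ are what keep $T$ short enough that the relevant neighborhoods are genuinely tree-like: the assumption $(np)^{\log(n/K)}=n^{o(1)}$ guarantees a depth-$2T$ ball around any vertex has $n^{o(1)}$ vertices and is a tree with probability $1-n^{-1+o(1)}$.

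Third comes the coupling: I would show that for a typical vertex $u$ in $G$, the messages $\theta^t_{i\to u}$ produced by Algorithm~\ref{alg:SP_commun} agree with those on the coupled random tree with high probability, so that the per-vertex error probability on the graph matches that on the tree up to $n^{-1+o(1)}$ additive terms; summing over the $n$ vertices gives $\expect{|C^*\triangle\hat C|}\le o(K)+n^{o(1)}=o(K)$. Here I must also account for the fact that $|C^*|$ is random rather than exactly $K$ — the hypothesis $\prob{||C^*|-K|\ge\sqrt{3K\log n}}\le n^{-1/2+o(1)}$ contributes at most an $n^{1/2+o(1)}$ term to the expected symmetric difference, which is absorbed — and for the fact that $\hat C$ is forced to have size exactly $K$ (standard: returning the top $K$ beliefs can only roughly double the error relative to the threshold estimator).

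The main obstacle will be the second-moment control of $\theta^T_u$ uniformly in the growing depth $T$: one must show the conditional variance recursion does not blow up as $T\to\infty$. This is where $\lambda>1$ is essential and where the non-backtracking structure matters — the subtraction in \eqref{eq:spectral_mp1} cancels the dominant contribution, leaving a recursion whose variance converges rather than grows geometrically. Getting this cancellation right on the (slightly size-biased, two-type) Poisson tree, and simultaneously controlling the cross-covariances between sibling subtrees so that the tree computation is exactly the local-weak-limit of the graph computation, is the technical heart; everything else (the tree-likeness estimate, the conversion from per-vertex error to expected symmetric difference, handling random $|C^*|$) is routine given the coupling lemmas already developed for Theorem~\ref{thm:BP_Bernoulli}.
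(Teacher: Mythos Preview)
Your overall architecture (tree analysis, then coupling, then conversion from the threshold estimator to the top-$K$ estimator) matches the paper, but the concentration step as stated has a genuine gap: Chebyshev based on second-moment control is too weak to yield $\expect{|C^*\triangle\hat C|}=o(K)$.

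Here is the arithmetic. With $\alpha=1/4$ and $T=\lceil 2\alpha\log((n-K)/K)/\log\lambda\rceil$, the variance recursion does give $\alpha_T\triangleq\var(Z_0^T)=O(1)$, but Chebyshev then only yields
\[
p_{e,0}=\prob{Z_0^T\ge \tfrac12\lambda^{T/2}}\le \frac{4\alpha_T}{\lambda^T}=O\Bigl(\Bigl(\frac{K}{n}\Bigr)^{2\alpha}\Bigr)=O\bigl(\sqrt{K/n}\,\bigr).
\]
Summed over the $n-K$ non-community vertices this contributes $\Theta(\sqrt{nK})=\omega(K)$ to the expected symmetric difference. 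Pushing $\alpha$ higher does not rescue the argument: once $2\alpha>1$ the term $\sum_{t<T}\frac{K}{n-K}\lambda^t\asymp \frac{K}{n}\lambda^T$ dominates the variance recursion, so $\alpha_T\asymp\frac{K}{n}\lambda^T$ and Chebyshev still gives only $p_{e,0}=O(K/n)$. Second moments thus pin you at $p_{e,0}=\Omega(K/n)$, exactly one factor short of the $p_{e,0}=o(K/n)$ that weak recovery requires in the sublinear regime $K=o(n)$.

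What the paper does instead is carry the induction at the level of moment generating functions: it shows, for $t\le T$ and $\eta$ in a range growing to all of $\reals$, that $\psi_0^t(\eta)\triangleq\expect{\eexp^{\eta Z_0^t}}\le\exp(\lambda^{t/2}\eta^2)$ and $\psi_1^t(\eta)\le\exp(\lambda^{t/2}\eta+\lambda^{t/2}\eta^2)$, by feeding the induction hypothesis into the Poisson-sum recursion for the MGFs. A Chernoff bound at $\eta=\pm1/4$ then gives $p_{e,0},\,p_{e,1}\le\exp(-\lambda^{T/2}/16)=\exp\bigl(-\Omega((n/K)^{\alpha})\bigr)$, which is comfortably $o(K/n)$. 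That exponential tail, not the variance bound, is the technical heart; the remaining steps (coupling, handling random $|C^*|$, top-$K$ conversion) are exactly as you describe.

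One minor side remark: the claim that the variance ``stays $\Theta(1)$ precisely because $\lambda>1$'' inverts the causality. Boundedness of $\alpha_T$ (for $\alpha<1/2$) comes from the $1/\sqrt m$ normalization and the tree structure and holds for any $\lambda$; what $\lambda>1$ buys you is that the \emph{signal} $B_T=\lambda^{T/2}$ grows rather than decays.
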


One can upgrade the weak recovery result of linear message passing to exact recovery under condition $\lambda >1$
and condition \prettyref{eq:planted_dense_exact_suff1XX}, in a similar manner as described in \prettyref{alg:MPplus_Bernoulli} and
the proof of \prettyref{thm:MP_plus_Bernoulli}.

%
%
%
The next converse shows that if  $\lambda \leq 1$ then estimating
better than the random guessing by linear message
passing is not possible.
\begin{theorem}
[Converse for linear message passing algorithm]  
\label{thm:spectral_weak_converse_A}
Suppose  \prettyref{ass:mainassumption} holds with  $0 < \lambda\leq 1$ and   
consider the planted dense subgraph model (\prettyref{def:pds_model}) with $C^*$ random and uniformly
distributed over all subsets of $[n]$ such that $|C^*|\equiv K.$
Assume $t\in \naturals,$  with $t$ possibly depending on $n$ such that
$(np)^t = n^{o(1)}$  and $t=O(\log\left(  \frac{n-K}{K}\right) ).$
Let $(\theta^t_u: u \in [n])$ be computed using the message passing updates
\prettyref{eq:spectral_mp1} and \prettyref{eq:spectral_mp2} and let
$\hat C =\{ u : \theta_u^t \geq \gamma \}$ for some threshold $\gamma,$ which may also depend on $n.$
Equivalently,  $\sigma_u$ is estimated for each $u$ by $\hat \sigma_u = \indc{\theta_u^t\geq \gamma}.$
Then  $\liminf_{n\to\infty} \frac{p_e n}{K} \geq 1.$
\end{theorem}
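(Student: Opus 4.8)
\emph{Proof plan.} I would run the whole argument through the local tree limit. First, since $C^\ast$ is uniform over the $K$-subsets of $[n]$, the triple $(\sigma_u,\theta_u^t,\hat\sigma_u)$ is exchangeable in $u$, so $\expect{|\hat C\triangle C^\ast|}=np_e$ and
\[
\frac{n}{K}p_e=1+\frac{n-K}{K}\,\Prob\{\theta_u^t\ge\gamma\mid\sigma_u=0\}-\Prob\{\theta_u^t\ge\gamma\mid\sigma_u=1\}.
\]
Because $(np)^t=n^{o(1)}$, the radius-$t$ neighborhood of $u$ is, outside an event of probability $n^{o(1)-1}$, isomorphic to a realization of the two-type Poisson Galton--Watson tree $\calT$ that is the local weak limit of $G$ at $u$ (a type-$0$ vertex having $\Pois((n-K)q)$ type-$0$ and $\Pois(Kq)$ type-$1$ children, a type-$1$ vertex having $\Pois((n-K)q)$ type-$0$ and $\Pois(Kp)$ type-$1$ children, the root being type-$1$ with probability $\pi_1$), and on that event $\theta_u^t$ equals the belief $\theta_\circ^t$ produced by \prettyref{eq:spectral_mp1}--\prettyref{eq:spectral_mp2} on $\calT$. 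Since $K\ge n^{1-o(1)}$, after dividing by $K$ the coupling error is negligible and it remains to show, uniformly over all thresholds $\gamma$ (allowed to depend on $n$),
\begin{equation}
D\;:=\;\Prob\{\theta_\circ^t\ge\gamma\mid\sigma_\circ=1\}-\frac{n-K}{K}\,\Prob\{\theta_\circ^t\ge\gamma\mid\sigma_\circ=0\}\;\le\;o(1).\label{eq:Dtarget}
\end{equation}

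Next I would analyze $\theta_\circ^t$ on $\calT$. Unrolling \prettyref{eq:spectral_mp1}--\prettyref{eq:spectral_mp2} and using the Poisson thinning/superposition calculus — the same computation that motivates the choice $A_0=1,\ A_t=0\ (t\ge1),\ B_t=\lambda^{t/2}$ — gives $\expect{\theta_\circ^t\mid\sigma_\circ=0}=0$, $\expect{\theta_\circ^t\mid\sigma_\circ=1}=\lambda^{t/2}$, and variance recursions whose correction terms, because $\lambda\le1$, sum to $O\big(\tfrac{K}{n-K}(t+\sum_{s<t}\lambda^{s})\big)=O\big(\tfrac{K}{n-K}t\big)=o(1)$ (using $t=O(\log\tfrac{n-K}{K})$ and $K=o(n)$), so every conditional variance lies in $[1,1+o(1)]$. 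Also $m:=(n-K)q\to\infty$ here (forced by $p/q=O(1)$ and $K=o(n)$, since $\lambda=\Theta(1)$ then gives $m=\Omega((n/K)^2)$), so a compound-CLT argument yields $\theta_\circ^t\mid\sigma_\circ=0\todistr N(0,v)$ with $v\in[1,1+o(1)]$. The structural point is that, conditionally on $\sigma_\circ$, the number of type-$0$ children of the root and all of the subtrees are identically distributed under the two hypotheses; only the number of type-$1$ children differs. Splitting $\Pois(Kp)=\Pois(Kq)+\Pois(K(p-q))$ thus produces a coupling
\begin{equation}
\theta_\circ^t\mid\sigma_\circ=1\;\eqdistr\;(\theta_\circ^t\mid\sigma_\circ=0)+W,\qquad W\Perp(\theta_\circ^t\mid\sigma_\circ=0),\label{eq:decomp}
\end{equation}
where $W=\tfrac1{\sqrt m}\sum_{j=1}^{N'}Z_j$ with $N'\sim\Pois(K(p-q))$ and $Z_j$ \iid type-$1$ edge messages; in particular $\expect W=\sqrt\lambda\cdot\lambda^{(t-1)/2}=\lambda^{t/2}\le1$ and $\var W=\tfrac{K(p-q)}{m}\expect{Z_1^2}=O(\sqrt{\lambda/m})=o(1)$.

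Writing $\bar F(x):=\Prob\{\theta_\circ^t\ge x\mid\sigma_\circ=0\}$ and $\beta:=K/(n-K)\to0$, I would finish as follows. From \prettyref{eq:decomp}, $\Prob\{W>2\}\le\var W=o(1)$ and $\bar F$ non-increasing give $\Prob\{\theta_\circ^t\ge\gamma\mid\sigma_\circ=1\}\le\bar F(\gamma-2)+o(1)$, hence $D\le\bar F(\gamma-2)-\tfrac1\beta\bar F(\gamma)+o(1)$. If $\bar F(\gamma)\ge\beta$ then $\tfrac1\beta\bar F(\gamma)\ge1\ge\bar F(\gamma-2)$ and $D\le o(1)$. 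If $\bar F(\gamma)<\beta$, then the variance bound gives $\bar F(x)\le(1+o(1))/x^2$ while the CLT gives $\liminf_n\bar F(M)>0$ for each fixed $M$ (the sole use of $v\ge1$); since $\beta\to0$, the latter forces $\gamma\to\infty$, and then $\bar F(\gamma-2)\le(1+o(1))/(\gamma-2)^2=o(1)$, so again $D\le o(1)$ — uniformly in $\gamma$. This establishes \prettyref{eq:Dtarget}, hence $\liminf_{n\to\infty}p_en/K\ge1$.

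The main obstacle is step two carried out uniformly when $t=t_n$ itself diverges: the variance recursions and the compound-CLT (or, alternatively, uniform exponential-moment bounds $\expect{e^{s\theta_\circ^t}\mid\sigma_\circ=i}\le e^{Cs^2}$ obtained by iterating the Poisson moment-generating-function identity and tracking the $O(1)$ quadratic term together with the $O(m^{-1/2})$ cubic remainder) must be propagated with errors that stay $o(1)$ simultaneously in $t$ and $n$. The tree-coupling lemma in step one and the exchangeability reduction are routine and mirror the corresponding steps for \prettyref{thm:BP_Bernoulli}.
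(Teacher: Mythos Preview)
Your argument is correct. The reduction to the Poisson tree via the coupling lemma and the exchangeability step are identical to the paper's, and your variance recursion analysis showing $\alpha_t,\beta_t=1+o(1)$ uniformly for $t=O(\log\frac{n-K}{K})$ when $\lambda\le 1$ is exactly what the paper establishes just before \prettyref{lmm:treeGaussianspectral}. The ``main obstacle'' you flag---uniform-in-$t$ control of the messages---is precisely what the paper handles by first proving the exponential-moment bounds of \prettyref{lmm:mgf} (choosing $\gamma>1$ independent of $\lambda$ when $\lambda\le 1$) and then feeding the resulting third-moment bounds into the Berry--Esseen lemma for compound Poisson sums.

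Where you genuinely diverge from the paper is in how you compare the two hypotheses. The paper proves the Gaussian limit for \emph{both} $Z_0^t$ and $Z_1^t$ (\prettyref{lmm:treeGaussianspectral}) and then argues directly: for $p_{e,0}=o(K/n)$ one needs $\gamma/\sqrt{\alpha_t}\to\infty$, which forces $(\gamma-\lambda^{t/2})/\sqrt{\beta_t}\to\infty$ (since $\lambda^{t/2}\le 1$ and $\beta_t\asymp\alpha_t$), hence $p_{e,1}\to 1$. You instead exploit the structural identity $Z_1^t\eqdistr Z_0^t+W$ with $W$ an \emph{independent} compound Poisson correction of mean $\lambda^{t/2}\le 1$ and variance $O(\sqrt{\lambda/m})=o(1)$, obtained by Poisson splitting $\Pois(Kp)=\Pois(Kq)+\Pois(K(p-q))$ at the root. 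This lets you bypass the CLT for $Z_1^t$ entirely and finish with a Chebyshev tail bound on $Z_0^t$ once $\gamma\to\infty$. The tradeoff: the paper's route, while requiring two Berry--Esseen applications, gives a cleaner one-line endgame; your route needs only one CLT (and really only the weak consequence $\liminf_n\bar F(M)>0$ for fixed $M$) plus the coupling observation, which is a genuinely more elementary finish and makes transparent that the converse hinges solely on $\expect{W}\le 1$ and $\var W\to 0$.
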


The proofs of \prettyref{thm:SP_bernoulli_spectral}  and \prettyref{thm:spectral_weak_converse_A}
are similar to the counterparts for belief propagation and
 \ifthenelse{\equal{\type}{arXiv}}{are given in \prettyref{app:spectral_limit}.}
{can be found in the arXiv version of this paper  \cite{HajekWuXu_one_beyond_spectral15}.}
\nbnew{This refers to appendix for proofs if type is arXiv, otherwise it refers to the arXiv version (1 of 3 places).}

 \section{Inference problem on a random tree by belief propagation}  \label{sec:BP_Bernoulli_tree}

In the regime we consider, the graph is locally tree like, with mean degree converging to infinity.
We begin by deriving the exact belief propagation algorithm for
an  infinite tree network, and then deduce performance results for using that same algorithm
on the original graph.   


The related inference problem on a Galton-Watson tree with Poisson numbers of offspring
is defined as follows.  Fix a vertex $u$  and let $T_u$ denote the
 infinite Galton-Watson undirected tree rooted at vertex $u$.    The neighbors of vertex $u$ are
 considered to be the children of vertex $u$, and $u$ is the parent of those children. The other neighbors
 of each child are the children of the child, and so on.
 For vertex $i$ in $T_u$,
let $T_i^t$ denote the subtree of $T_u$ of height $t$ rooted at vertex $i,$ induced by the set of vertices
consisting of vertex $i$ and its descendants for $t$ generations.
Let $\tau_i \in \{0, 1\}$ denote the  label of vertex $i$ in $T_u$.
Assume $\tau_u \sim \Bern( K/n)$.
For any vertex $i \in T_u,$   let $L_i$ denote the number of its children $j$ with $\tau_j=1$, and $M_i$ denote the number of its children $j$ with $\tau_j=0$.
Suppose that $L_i \sim  \Pois( Kp )$ if $\tau_i=1,$ $L_i \sim  \Pois( Kq )$ if $\tau_i=0,$  and  $M_i \sim \Pois( (n-K)q )$ for either value of $\tau_i.$

We are interested in estimating the label of root $u$ given observation of the tree $T_u^t$. Notice that the labels
of vertices in $T_u^t$ are not observed. The probability of error for an estimator $\hat \tau_u (T_u^t)$ is defined by
\begin{align}
p_e^t\triangleq \frac{K}{n}  P( \hat \tau_u = 0  | \tau_u=1 )    + \frac{n-K}{n}P(\hat \tau_u=1 | \tau_u = 0). \label{eq:deferrortree}
\end{align}
The estimator that minimizes $p_e^t$ is the maximum a posteriori probability (MAP) estimator, which can be expressed
either in terms of the log belief ratio or log likelihood ratio:
\begin{align}  \label{eq:map_rule}
\hat{\tau}_{\rm MAP} = \indc{  \xi_u^{t} \geq 0 } =    \indc{\Lambda_u^t \geq  \nu },
\end{align}
where
\begin{align*}
 \xi_u^{t} \triangleq \log \frac{\prob{ \tau_u =1 | T_u^t}  }{ \prob{ \tau_u =0 | T_u^t} }, \quad \Lambda_u^{t} \triangleq  \log \frac{\prob{T_u^t  | \tau_u=1} }{ \prob{  T_u^t | \tau_u=0} },
\end{align*}
and $\nu= \log \frac{n-K}{K}$.
By Bayes' formula, $\xi_u^{t}= \Lambda_u^{t} - \nu$, and by definition,
$\Lambda_u^{0} = 0$.
By a standard result in the theory of binary hypothesis testing
 (due to \cite{KobayashiThomas67}, stated without proof in \cite{Poor94Book},
 proved in special case $\pi_0=\pi_1=0.5$ in \cite{Kailath67}, and same proof
 easily extends to general case),
 the probability of error for the MAP decision rule is bounded by
\begin{equation}  \label{eq:Bhatt}
\pi_1\pi_0   \rho_B^2 \leq p_e^t \leq    \sqrt{\pi_1\pi_0}  \rho_{B},
\end{equation}
where the Bhattacharyya coefficient (or Hellinger integral) $\rho_{B}$ is defined by
$\rho_{B} =  \eexpect{\eexp^{\Lambda_u^t/2}| \tau_u=0 },$  and $\pi_1$ and $\pi_0$ are
the prior probabilities on the hypotheses.

We comment briefly on the parameters of the model.
The distribution of the tree $T_u$ is determined by the three parameters
$\lambda = \frac{K^2(p-q)^2}{(n-K)q}$,  $\nu$, and the ratio, $p/q.$
Indeed, vertex $u$ has label $\tau_u=1$ with probability $\frac{K}{n}=\frac{1}{1+\eexp^{\nu}},$
and the  mean numbers  of children of a vertex $i$ are given by:
\begin{eqnarray}
\expect{L_i | \tau_i=1} & = & Kp = \frac{\lambda(p/q)\eexp^{\nu}}{(p/q-1)^2}  \label{eq:mean_deg11}   \\
\expect{L_i|\tau_i=0} & = & Kq = \frac{\lambda\eexp^{\nu}}{(p/q-1)^2}   \label{eq:mean_deg01} \\
\expect{M_i} & = & (n-K)q = \frac{\lambda\eexp^{2\nu}}{(p/q-1)^2}.   \label{eq:mean_deg0}
\end{eqnarray}
The parameter $\lambda$ can be interpreted as a signal to noise ratio in case $K \ll n$ and $p/q = O(1),$  because  $\var{M_i} \gg \var{L_i}$
and $$
\lambda = \frac{  \left(  \expect{ M_i+L_i | \tau_i=1 } - \expect{ M_i+L_i | \tau_i=0 } \right)^2  }{\var{M_i}}.
$$
In this section, the parameters are allowed to vary with $n$ as long as  $\lambda > 0$ and   $p/q > 1,$  although
the focus is on the asymptotic regime: $\lambda$ fixed, $p/q = O(1),$
and $\nu \to  \infty.$    This entails that the mean numbers of children
given in \prettyref{eq:mean_deg11}-\prettyref{eq:mean_deg0} converge to infinity.
Montanari \cite{Montanari:15OneComm} considers the case of $\nu$ fixed with
$p/q\to 1,$ which also leads to the mean vertex degrees converging to infinity.
\begin{remark}\label{rmk:kesten-stigum}
It turns out that $\lambda=1$ coincides with the Kesten-Stigum threshold~\cite{Kesten1966additional}. To see this,
let $O=(O_{ab})$ denote the $2 \times 2 $ matrix with $O_{ab}$
equal to the expected number of childen of type $b$ given a parent of type $a$
for $a, b \in \{0,1\}$. 
Then 
$$
O=\begin{bmatrix}
(n-K) q  & K q\\
(n-K) q   & K p
\end{bmatrix}
.
$$
Let $\lambda_+ \ge \lambda_-$ denote the two largest eigenvalues of $M$.
The Kesten-Stigum threshold~\cite{Kesten1966additional} is defined to be $\lambda_-^2/\lambda_+ =1.$
Direct calculation gives 
$$
\lambda_{\pm}= \frac{1}{2} \left(  nq+ K(p-q) \pm | nq - K(p-q) |\sqrt{  1+ \frac{4K^2(p-q) q }{ \left( nq - K(p-q) \right)^2} }   \right).
$$
Since $K(p-q)=o(nq)$ and $K=o(n)$, it follows that $\lambda_+ = (1+o(1)) nq$ and 
$\lambda_-=(1+o(1)) K(p-q).$ Hence, 
$$
\lambda=\left(1+o(1) \right) \frac{\lambda_-^2}{\lambda_+}. 
$$
Thus $\lambda=1$ is asymptotically equivalent to Kesten-Stigum threshold $\lambda_-^2/\lambda_+=1.$ 
\end{remark}
\medskip

It is well-known that the likelihoods can be computed via a belief propagation algorithm.
Let $\partial i$ denote the set of children of vertex $i$ in $T_u$ and $\pi(i)$ denote the
parent of $i$. For every vertex $i \in T_u$ other than $u$, define
\begin{align*}
 \Lambda^{t }_{i \to \pi(i) } \triangleq \log \frac{\prob{T_i^t  | \tau_i=1} }{ \prob{T_i^t  | \tau_i=0} }.
\end{align*}
The following lemma gives a recursive formula to compute $\Lambda_u^{t};$
no approximations are needed.
 \begin{lemma}\label{lmm:BPTree}
For $t \ge 0$,
 \begin{align*}
\Lambda^{t+1}_{u } & =  -K(p-q)  + \sum_{ \ell \in \partial u }  \log \left(  \frac{ \eexp^{ \Lambda^{t}_{\ell \to u}  - \nu } (p/q)   + 1  }{ \eexp^{\Lambda^{t}_{\ell \to u} -\nu } +1} \right), \\
 \Lambda^{t+1}_{i \to \pi(i) } & = -K(p-q)   + \sum_{ \ell \in \partial i }  \log \left(  \frac{ \eexp^{ \Lambda^{t}_{\ell \to i} - \nu } (p/q) + 1  }{\eexp^{\Lambda^{t}_{\ell \to i} - \nu}  +1 }  \right) ,  \quad   \forall  i \neq u \\
\Lambda^0_{i \to \pi(i) } & = 0 , \quad \forall i \neq u .
 \end{align*}
 \end{lemma}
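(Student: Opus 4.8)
The plan is to derive the recursion directly from Bayes' rule and the conditional-independence structure of the Galton-Watson tree, treating the claimed identity as an exact computation of likelihood ratios. First I would fix a non-root vertex $i$ and condition on its label $\tau_i$. Given $\tau_i$, the subtree $T_i^{t+1}$ is generated as follows: the children of $i$ split into $L_i$ community children (labels $1$) and $M_i$ non-community children (labels $0$), where $L_i \sim \Pois(Kp)$ or $\Pois(Kq)$ according as $\tau_i=1$ or $0$, and $M_i\sim\Pois((n-K)q)$ regardless; and, crucially, conditioned on the labels of the children, the subtrees hanging off distinct children are mutually independent, with the subtree off a child $\ell$ distributed as $T_\ell^t$ given $\tau_\ell$. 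This is the standard branching-process Markov property, and it is what makes the likelihood factor over children.

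The key step is a thinning/superposition argument for the Poisson offspring. Marking each of the (unobserved) children of $i$ by its label, the collection of children with their pendant subtrees forms a Poisson process; conditioned on $\tau_i$, the ``intensity'' of producing a child $\ell$ whose subtree realizes a given shape $T$ is $Kp\cdot \prob{T_\ell^t = T\mid \tau_\ell=1} + (n-K)q\cdot\prob{T_\ell^t=T\mid\tau_\ell=0}$ when $\tau_i=1$, and $Kq\cdot\prob{T\mid 1} + (n-K)q\cdot\prob{T\mid 0}$ when $\tau_i=0$. Taking the likelihood ratio $\prob{T_i^{t+1}\mid\tau_i=1}/\prob{T_i^{t+1}\mid\tau_i=0}$, the Poisson point-process likelihood ratio formula gives a product over observed children $\ell\in\partial i$ of the per-child ratio
\[
\frac{Kp\,\prob{T_\ell^t\mid 1} + (n-K)q\,\prob{T_\ell^t\mid 0}}{Kq\,\prob{T_\ell^t\mid 1}+(n-K)q\,\prob{T_\ell^t\mid 0}},
\]
together with an exponential prefactor $\exp(-(Kp+(n-K)q) + (Kq+(n-K)q)) = \exp(-K(p-q))$ coming from the difference of the total Poisson intensities (the ``no-extra-points'' normalization). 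Dividing numerator and denominator inside the per-child ratio by $(n-K)q\,\prob{T_\ell^t\mid 0}$, and writing $\eexp^{\Lambda^t_{\ell\to i}} = \prob{T_\ell^t\mid 1}/\prob{T_\ell^t\mid 0}$ and $\eexp^{-\nu} = K/(n-K)$, the per-child ratio becomes exactly $\bigl(\eexp^{\Lambda^t_{\ell\to i}-\nu}(p/q)+1\bigr)/\bigl(\eexp^{\Lambda^t_{\ell\to i}-\nu}+1\bigr)$. Taking logarithms yields the stated recursion for $\Lambda^{t+1}_{i\to\pi(i)}$; the identical computation at the root $u$ (where the only difference is that $u$ has no parent, but its children are generated by the same rule since $L_u,M_u$ have the same conditional laws) gives the formula for $\Lambda^{t+1}_u$. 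The base case $\Lambda^0_{i\to\pi(i)}=0$ is immediate because $T_i^0$ is a single labelled-unobserved vertex carrying no information, so the two conditional laws of the observation coincide.

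I expect the main obstacle to be the careful justification of the Poisson point-process likelihood-ratio step — in particular, making rigorous that ``conditioned on $\tau_i$, the multiset of (child, pendant-subtree) pairs is a Poisson process on the space of finite trees'' via superposition of independent thinned Poisson processes, and that its Radon-Nikodym derivative between the two parameter values is the claimed product-times-exponential. One clean way to sidestep measure-theoretic fuss on the tree space is to condition additionally on the number of children $d_i=|\partial i|$: given $\tau_i$ and $d_i$, the $d_i$ children are i.i.d.\ with label $1$ w.p.\ $Kp/(Kp+(n-K)q)$ (resp.\ $Kq/(Kq+(n-K)q)$) and otherwise label $0$, pendant subtrees then conditionally independent given labels; one writes the joint likelihood as (Poisson pmf for $d_i$) $\times$ (product over children of the label-mixture of subtree likelihoods), and the Poisson pmf ratio supplies both the $d_i$-dependent factor that combines with the per-child denominators and the $\exp(-K(p-q))$ prefactor. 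Everything else is bookkeeping.
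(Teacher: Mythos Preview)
Your proposal is correct and follows essentially the same idea as the paper: both exploit the Poisson superposition/splitting property so that, conditional on $\tau_i$, the collection of children (together with their labels and pendant subtrees) factorizes, and the likelihood ratio becomes a product over children times an exponential of the total-intensity difference $-K(p-q)$. Your main presentation via the Poisson point-process Radon--Nikodym formula is a slightly slicker packaging---it gets the per-child ratio $\frac{Kp\,\prob{T\mid 1}+(n-K)q\,\prob{T\mid 0}}{Kq\,\prob{T\mid 1}+(n-K)q\,\prob{T\mid 0}}$ and the prefactor $\eexp^{-K(p-q)}$ in one stroke---whereas the paper first conditions on the total child count $N_u$, picks up an $N_u\log(d_1/d_0)$ term from the Poisson pmf ratio, and then cancels it against the $d_0/d_1$ normalizations appearing in the per-child mixture probabilities; the alternative you sketch at the end (condition on $d_i=|\partial i|$ and write the joint likelihood as Poisson pmf times an i.i.d.\ product over children) is exactly the route the paper takes.
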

\begin{proof}
The last equation follows by definition. We prove the first equation; the second one follows similarly.   A key point is
to use the independent splitting property of the Poisson distribution to give an equivalent description of the numbers of children
with each label for any vertex in the tree.     Instead of separately generating the number of children of with each label,  we can first
generate the total number of children and then independently and randomly label each child. Specifically,
for every vertex $i$ in $T_u$, let $N_i$  denote the total number of its children.  Let $d_1=Kp+(n-K)q$ and $d_2=Kq+(n-K)q=nq.$
If $\tau_i=1$ then  $N_i \sim \Pois(d_1)$, and for each child $j\in \partial i,$ independently of everything else,
$\tau_j=1$ with probability $Kp/d_1$  and $\tau_j=0$ with probability $(n-K)q/d_1.$
If   $\tau_i=0$ then  $N_i \sim \Pois( d_0)$, and for each child $j\in \partial i,$ independently of everything else,
$\tau_j=1$ with probability $K/n$ and $\tau_j=0$ with probability $(n-K)/n.$    With this view, the observation
of the total number of children $N_u$ of vertex $u$ gives some information on the label of $u$,
and then the conditionally independent messages from those children give additional information.   To be precise, we have that
\begin{align*}
 \Lambda_u^{t+1}  & =  \log \frac{\prob{T_u^{t+1}  | \tau_u=1 } }{ \prob{  T_u^{t+1}  | \tau_u=0}  } \overset{(a)}{=} \log \frac{\prob{N_u | \tau_u=1 }  }{ \prob{N_u | \tau_u= 0}  } + \sum_{i \in \partial u} \log  \frac{ \prob{ T^t_i | \tau_u=1 } } {  \prob{ T_i^t  | \tau_u=0 } }   \\
 & \overset{(b)}{=} -K(p-q) + N_u  \log \frac{ d_1}{d_0}   +   \sum_{ i \in \partial u}  \log \frac{ \sum_{x\in \{0, 1\} } \prob{\tau_i=x | \tau_u=1} \prob{ T_i^t | \tau_i =x} }{
 \sum_{\tau_i \in \{0, 1\} } \prob{\tau_i=x | \tau_u=0} \prob{ T_i^t | \tau_i =x} }  \\
 & \overset{(c)}{=} -K(p-q)  + \sum_{i \in \partial u} \log \frac{ Kp  \prob{ T_i^t | \tau_i =1} + (n-K) q  \prob{ T_i^t | \tau_i =0} }{
 K q \prob{ T_i^t | \tau_i =1} +(n-K) q \prob{ T_i^t | \tau_i =0}   }\\
 & \overset{(d)}{=} -K(p-q)  + \sum_{i \in \partial u} \log \frac{ \eexp^{ \Lambda^{t}_{i \to u} -\nu} (p/q)  + 1 }{ \eexp^{\Lambda^{t}_{i \to u} - \nu} +1   },
\end{align*}
where $(a)$ holds because $N_u$ and $T^t_i$ for $i \in \partial u$ are independent conditional on $\tau_u$;
$(b)$ follows because $N_u \sim \Pois(d_1)$ if $\tau_u=1$ and $N_u \sim \Pois(d_0)$ if $\tau_u=0$, and $T_i^t$ is independent of $\tau_u$ conditional on $\tau_i$;
$(c)$ follows from the fact $\tau_i\sim \Bern(Kp/d_1)$ given $\tau_u=1$, and $\tau_i\sim \Bern(Kq/d_0)$ given $\tau_u=0;$
$(d)$ follows from the definition of $\Lambda^{t}_{i \to u}$.
\end{proof}

Notice that $\Lambda_u^t$ is a function of $T_u^t$ alone; and it is statistically
correlated with the vertex labels.   Also,  since the construction of a subtree $T_i^t$ and
its vertex labels is the same as the construction of $T_u^t$ and its vertex labels,
the conditional distribution of $T_i^t$ given $\tau_i$ is the same as the conditional
distribution of $T_u^t$ given $\tau_u.$   Therefore, for any $i \in \partial u,$ the conditional
distribution  of   $\Lambda_{i\to u}^t $ given $\tau_i$ is the same as the conditional
distribution of $\Lambda_u^t$ given $\tau_u.$
For $i=0$ or $1$, let $Z^t_i$ denote a random variable that has the same distribution as $\Lambda_u^t$ given $\tau_u=i$.
The above update rules can be viewed as an infinite-dimensional recursion that 
determines the probability distribution of $Z_0^{t+1}$ in terms of that of $Z_0^{t}$.   

The remainder of this section is devoted to the analysis of belief propagation on the Poisson tree model, and is organized into two main parts.    
In the first part, \prettyref{sec:expo_moments} gives expressions
for exponential moments of the log likelihood messages, which are applied in \prettyref{sec:bd_from_moments}
to yield an upper bound, in \prettyref{lmm:bp_tree_error} on the error probability for the problem of classifying the root vertex of the tree.   That
bound, together with a standard coupling result between
Poisson tree and local neighborhood of $G$ (stated in~\prettyref{app:coupling_lemmas}), 
is enough to establish weak recovery
for the belief propagation algorithm run on  graph $G,$ given in \prettyref{thm:BP_Bernoulli}.
The second part of this section focuses on lower bounds on the probability of correct classification in \prettyref{sec:lower_bnd_tree}. 
Those bounds,  together with the coupling lemmas, are used  to establish the converse results for local algorithms.

\subsection{Exponential moments of log likelihood messages for Poisson tree}   \label{sec:expo_moments}

The following lemma gives formulas for some exponential moments
of $Z^t_0$ and $Z^t_1$, based on \prettyref{lmm:BPTree}.    Although the formulas
are not recursions, they are close enough to permit useful analysis.
%

\begin{lemma}\label{lmm:expZ}
For $t \ge 0$ and any integer $h\geq 2,$
\begin{align}  
\expect{\eexp^{hZ^{t+1}_0 } }& = \expect{\eexp^{(h-1)Z^{t+1}_1 } }  \nonumber  \\
&=\exp\left\{  K(p-q)\sum_{j=2}^h \binom{h}{j} \left( \frac{\lambda}{K(p-q)} \right)^{j-1}  \expect{ \left(  \frac{ \eexp^{  Z_1^t } } { 1+ \eexp^{ Z_1^t - \nu }  }\right)^{j-1} }   \right\}.
 \label{eq:exphZ1}
\end{align}
\end{lemma}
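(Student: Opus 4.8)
The plan is to combine the belief-propagation recursion of \prettyref{lmm:BPTree} with two standard facts: the probability generating functional of a Poisson number of i.i.d.\ terms, and the change-of-measure identity for log-likelihood ratios. The starting observation is that $\Lambda^s_u=\log\frac{\prob{T_u^s\mid\tau_u=1}}{\prob{T_u^s\mid\tau_u=0}}$ is, by definition, the log Radon--Nikodym derivative between the laws of $T_u^s$ under $\tau_u=1$ and $\tau_u=0$ (and likewise for the messages $\Lambda^s_{i\to\pi(i)}$). Hence, for every nonnegative measurable $\psi$ and every $s\ge 0$,
$$
\expect{\psi(Z_1^s)}=\expect{\psi(Z_0^s)\,\eexp^{Z_0^s}}.
$$
Applying this at level $s=t+1$ with $\psi(z)=\eexp^{(h-1)z}$ gives the first claimed equality $\expect{\eexp^{(h-1)Z^{t+1}_1}}=\expect{\eexp^{hZ^{t+1}_0}}$ with no computation, so it remains only to evaluate $\expect{\eexp^{hZ^{t+1}_0}}$.

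Toward that, I would introduce $\phi(z)\triangleq\frac{\eexp^z}{1+\eexp^{z-\nu}}$ and record the elementary identity $\eexp^{g(z)}=1+\frac{\lambda}{K(p-q)}\,\phi(z)$, where $g(z)=\log\frac{\eexp^{z-\nu}(p/q)+1}{\eexp^{z-\nu}+1}$ is the summand appearing in \prettyref{lmm:BPTree}; this uses only $(p/q-1)\eexp^{-\nu}=\frac{K(p-q)}{(n-K)q}=\frac{\lambda}{K(p-q)}$. Since $0\le\phi(z)\le \eexp^{\nu}$, every exponential moment below is automatically finite. Now apply \prettyref{lmm:BPTree} with $\tau_u=0$, so that $u$ has $\Pois(Kq)$ children of label $1$ and $\Pois((n-K)q)$ children of label $0$, with all messages mutually independent and distributed as $Z_1^t$ resp.\ $Z_0^t$. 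The identity $\expect{\prod_{i=1}^N f(X_i)}=\exp(\mu(\expect{f(X)}-1))$ for $N\sim\Pois(\mu)$ then yields
$$
\expect{\eexp^{hZ^{t+1}_0}}=\eexp^{-hK(p-q)}\exp\!\Big(Kq\big(\expect{\eexp^{hg(Z^t_1)}}-1\big)+(n-K)q\big(\expect{\eexp^{hg(Z^t_0)}}-1\big)\Big),
$$
and binomially expanding $\eexp^{hg(z)}=\big(1+\tfrac{\lambda}{K(p-q)}\phi(z)\big)^h$ rewrites the exponent as $\sum_{j=1}^{h}\binom{h}{j}\big(\tfrac{\lambda}{K(p-q)}\big)^j\big(Kq\,\expect{\phi(Z^t_1)^j}+(n-K)q\,\expect{\phi(Z^t_0)^j}\big)$.

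The one genuinely computational step — and the place I expect the index bookkeeping to be most delicate — is collapsing each pair of moments via
$$
Kq\,\expect{\phi(Z^t_1)^j}+(n-K)q\,\expect{\phi(Z^t_0)^j}=(n-K)q\,\expect{\phi(Z^t_1)^{j-1}}.
$$
To prove this I would apply the change-of-measure identity once in each direction, together with the pointwise relation $\phi(z)^j(1+\eexp^{z-\nu})=\eexp^z\phi(z)^{j-1}$ and $\eexp^{-\nu}=K/(n-K)$: indeed $Kq\,\expect{\phi(Z^t_1)^j}=Kq\,\expect{\phi(Z^t_0)^j\eexp^{Z^t_0}}$, so the left side equals $(n-K)q\,\expect{\phi(Z^t_0)^j(1+\eexp^{Z^t_0-\nu})}=(n-K)q\,\expect{\eexp^{Z^t_0}\phi(Z^t_0)^{j-1}}=(n-K)q\,\expect{\phi(Z^t_1)^{j-1}}$. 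Substituting this back, the $j=1$ term contributes $\binom{h}{1}\tfrac{\lambda}{K(p-q)}(n-K)q=hK(p-q)$, which cancels the prefactor $\eexp^{-hK(p-q)}$ exactly; for $j\ge 2$ one uses $\big(\tfrac{\lambda}{K(p-q)}\big)^j(n-K)q=K(p-q)\big(\tfrac{\lambda}{K(p-q)}\big)^{j-1}$ and reindexes $j\mapsto j$, recognizing $\phi(Z_1^t)^{j-1}=\big(\tfrac{\eexp^{Z^t_1}}{1+\eexp^{Z^t_1-\nu}}\big)^{j-1}$, which produces precisely the asserted formula \prettyref{eq:exphZ1}. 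The only loose ends are the justification of the generating-functional formula for bounded $f$ (immediate, since $\eexp^{hg}\le(1+\tfrac{\lambda}{K(p-q)}\eexp^\nu)^h$) and mutual absolute continuity of the two tree laws (both charge every finite tree), neither of which is an obstacle.
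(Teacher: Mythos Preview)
Your proof is correct and follows essentially the same route as the paper: apply the recursion of \prettyref{lmm:BPTree}, use the Poisson generating-functional identity, binomially expand $\eexp^{hg(z)}=\bigl(1+\frac{\lambda}{K(p-q)}\phi(z)\bigr)^h$, and collapse the resulting $Z_0^t/Z_1^t$ moment pairs via the change-of-measure identity. The paper illustrates the case $h=2$ explicitly, deriving the two instances $j=1,2$ of your collapsing identity as separate equations \prettyref{eq:symmetry1} and \prettyref{eq:symmetry2}, and then remarks that general $h$ follows the same way; your presentation, stating and proving $Kq\,\expect{\phi(Z_1^t)^j}+(n-K)q\,\expect{\phi(Z_0^t)^j}=(n-K)q\,\expect{\phi(Z_1^t)^{j-1}}$ once for all $j\geq 1$, is a cleaner packaging of the same computation.
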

%
\begin{proof}
We first illustrate the proof for $h=2.$ 
By the definition of $\Lambda_u^t$ and change of measure, we have
$
 \expect{ g (\Lambda_u^t) | \tau_u=0} = \eexpect{ g( \Lambda_u^t ) \eexp^{- \Lambda_u^t} | \tau_u=1},
$
 where $g$ is any measurable function such that the expectations above are well-defined. It follows that
 \begin{align}   \label{eq:change_measure}
 \expect{ g(Z_0^t) } = \eexpect{ g(Z_1^t) \eexp^{-Z_1^t} }.
 \end{align}
 Plugging $g(z) = \eexp^{z}$ and $g(z)=\eexp^{2z}$, we have that $\expect{\eexp^{Z_0^t} } = 1$ and  
  $\expect{ \eexp^{ 2Z_0^t} } = \expect{ \eexp^{Z_1^t} }.$
 Moreover,
 \begin{align}
  \eexp^{\nu}  \expect{ g(Z_0^t ) } + \expect{g(Z_1^t ) } = \expect{ g(Z_1^t ) ( \eexp^{-Z_1^t +\nu } + 1 ) }. \label{eq:changemeasure}
 \end{align}
 Plugging $g(z) = (1+\eexp^{-z+\nu} )^{-1}$ and $g(z) =  ( 1+\eexp^{-z+\nu} )^{-2} $
  into the last displayed equation,
 we have
 \begin{align}
\eexp^{\nu}  \expect{ \frac{1}{1+\eexp^{-Z_0^t+\nu}  } } +\expect{ \frac{1}{ 1+\eexp^{-Z_1^t+\nu}  } }  &= 1, \label{eq:symmetry1}\\
 \eexp^{\nu}  \expect{ \frac{1}{ ( 1+\eexp^{-Z_0^t+\nu} )^2 } } +\expect{ \frac{1}{ (1+\eexp^{-Z_1^t+\nu} )^2 } } & =  \expect{ \frac{1}{ 1+\eexp^{-Z_1^t+\nu}  } }
 \label{eq:symmetry2}.
 \end{align}
In view of \prettyref{lmm:BPTree}, by defining $f(x)=\frac{ x (p/q) +1 }{x +1}$, we get that
\begin{align*}
\eexp^{2 \Lambda_u^{t+1} }   = \eexp^{-2 K(p-q) }
\prod_{ \ell \in \partial u }  f^2 \left( \eexp^{ \Lambda^{t}_{\ell \to u} - \nu } \right).
\end{align*}
Since the distribution of $\Lambda^t_{\ell \to u}$ conditional on $\tau_u=0$ and $\tau_u=1$ is the same as the distribution of $Z_0^t$ and $Z_1^t$, respectively,  it follows that
\begin{align*}
\expect{\eexp^{2 Z^{t+1}_0 } }= \eexp^{-2 K(p-q) }
  \expect{ \left( \expect{ f^2 \left(  \eexp^{ Z^{t+1}_1 - \nu } \right) } \right)^{L_u}}
 \expect{ \left( \expect{ f^2 \left(  \eexp^{ Z^{t+1}_0 - \nu } \right) } \right)^{M_u}}.
\end{align*}
Using the fact that $\expect{c^X} = \eexp^{\lambda (c-1) }$ for $X \sim \Pois(\lambda)$ and $c>0$, we have
\begin{align*}
& \expect{\eexp^{2 Z^{t+1}_0 } } = e^{ -2  K(p-q) + Kq\left(  \expect{  f^2 \left( \eexp^{ Z^{t+1}_1 - \nu }  \right) } -1 \right)+ (n-K) q\left( \expect{ f^2 \left(     \eexp^{ Z^{t+1}_0 - \nu } \right)} -1 \right) }.
\end{align*}
Notice that
\begin{align*}
f^2(x) = \left(  1+  \frac{ p/q  -1  }{1+ x^{-1}} \right)^2 =1+  \frac{2 (p/q-1)}{ 1+ x^{-1} } +    \frac{ (p/q-1)^2}{ \left( 1+ x^{-1} \right)^2  } .
\end{align*}
It follows that
\begin{align*}
&Kq\left(  \expect{  f^2 \left( \eexp^{ Z^{t+1}_1 - \nu }  \right) } -1 \right)
+ (n-K) q\left( \expect{  f^2 \left(  \eexp^{ Z^{t+1}_0 - \nu  } \right) } -1 \right)   \\
&=  2 K q (p/q-1) \left(  \expect{ \frac{ 1 } {1+ \eexp^{-  Z_1^t +\nu } } }    + \eexp^{\nu} \expect{ \frac{1 } {1+ \eexp^{-  Z_0^t +\nu } } }   \right) \\
& + K q (p/q-1)^2 \left(  \expect{  \frac{ 1 } { ( 1+ \eexp^{-  Z_1^t +\nu } )^2  }   } + \eexp^{\nu}  \expect{  \frac{ 1 } { ( 1+ \eexp^{-  Z_0^t +\nu } )^2  }   }  \right) \\
& \overset{(a)}{=} 2K (p-q) + K q (p/q-1)^2  \expect{  \frac{ 1 } { 1+ \eexp^{-  Z_1^t +\nu }  }  } \\
& = 2K (p-q) + \lambda \expect{  \frac{ \eexp^{  Z_1^t } } { 1+ \eexp^{ Z_1^t - \nu }  }  },
\end{align*}
where $(a)$ follows by applying \prettyref{eq:symmetry1} and \prettyref{eq:symmetry2}.
Combining the above proves  \prettyref{eq:exphZ1} with $h=2$.   
For general $h \ge 2$, we expand $f^h(x) = \left(  1+  \frac{ p/q  -1  }{1+ x^{-1}} \right)^h$ using binomial
coefficients as already illustrated for $h=2.$   
\end{proof}

Using the notation
\begin{align}
a_t = & ~ \expect{  \eexp^{  Z_1^t } }   \label{eq:defn_at}    \\
b_t = & ~ \expect{  \frac{ \eexp^{  Z_1^t } } { 1+ \eexp^{ Z_1^t - \nu }  }  },    \label{eq:defn_bt}
\end{align}
\prettyref{eq:exphZ1} with $h=2$ becomes
\begin{equation}
	a_{t+1} = \exp(\lambda b_t).
	\label{eq:atbt}
\end{equation}

The following lemma provides upper bounds on some exponential moments in terms of $b_t.$
\begin{lemma}  \label{lmm:exp_BP_bounds}
Let $C \triangleq \lambda(2 + \frac{p}{q})$  and  $C' \triangleq \lambda(3 + 2\frac{p}{q} +  (  \frac{p}{q} )^2  ).$
Then   $\eexpect{\eexp^{2 Z^{t+1}_1 } } \le \exp (C b_t)$ and
$\eexpect{\eexp^{3 Z^{t+1}_1 } } \le \exp (C' b_t).$
More generally, for any integer $h\geq 2,$
\begin{align}  \label{eq:gen_Cb_bnd}
&\expect{\eexp^{h Z^{t+1}_0 } }  =   \expect{\eexp^{(h-1) Z^{t+1}_1 } }  \leq
\eexp^{   \lambda b_t \sum_{j=2}^h \binom{h}{j} \left( \frac{p}{q} - 1\right)^{j-2}             }.
\end{align}
\end{lemma}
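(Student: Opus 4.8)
The plan is to derive everything from the exact identity for $\expect{\eexp^{h Z^{t+1}_0}}$ established in \prettyref{lmm:expZ}, namely \prettyref{eq:exphZ1}, by bounding the summands one $j$ at a time. The preliminary move is to untangle the constants. Since $\lambda = K^2(p-q)^2/((n-K)q)$ and $\nu = \log\frac{n-K}{K}$, one has
$$
\frac{\lambda}{K(p-q)} = \frac{K(p-q)}{(n-K)q} = \Big(\frac pq - 1\Big)\eexp^{-\nu},
\qquad
K(p-q)\Big(\frac{\lambda}{K(p-q)}\Big)^{j-1} = \lambda\Big(\frac{\lambda}{K(p-q)}\Big)^{j-2},
$$
so the exponent appearing in \prettyref{eq:exphZ1} equals $\lambda\sum_{j=2}^h\binom hj(p/q-1)^{j-2}\eexp^{-(j-2)\nu}\,\expect{\big(\eexp^{Z_1^t}/(1+\eexp^{Z_1^t-\nu})\big)^{j-1}}$.

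The one real estimate needed is the elementary inequality $0 \le \frac{\eexp^z}{1+\eexp^{z-\nu}} = \frac{\eexp^{z+\nu}}{\eexp^\nu+\eexp^z} \le \eexp^{\nu}$, valid for all real $z$. Consequently, for every integer $j \ge 2$,
$$
\Big(\frac{\eexp^z}{1+\eexp^{z-\nu}}\Big)^{j-1} \;\le\; \eexp^{(j-2)\nu}\,\frac{\eexp^z}{1+\eexp^{z-\nu}},
$$
with equality when $j=2$. Substituting $z = Z_1^t$, taking expectations, and recalling $b_t = \expect{\eexp^{Z_1^t}/(1+\eexp^{Z_1^t-\nu})}$ from \prettyref{eq:defn_bt}, the factor $\eexp^{(j-2)\nu}$ cancels the $\eexp^{-(j-2)\nu}$ already present; the exponent is therefore at most $\lambda b_t\sum_{j=2}^h\binom hj(p/q-1)^{j-2}$, and exponentiating gives \prettyref{eq:gen_Cb_bnd}. (Here $p \ge q$ under \prettyref{ass:mainassumption}, so every term is nonnegative and the bound is genuine.)

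Finally, the two displayed special cases come from evaluating the finite binomial sum. Using $\expect{\eexp^{2 Z_1^{t+1}}} = \expect{\eexp^{3 Z_0^{t+1}}}$, the case $h=3$ gives coefficient $\binom32 + \binom33(p/q-1) = 2 + p/q$, i.e. the bound $\exp(Cb_t)$ with $C=\lambda(2+p/q)$; using $\expect{\eexp^{3 Z_1^{t+1}}} = \expect{\eexp^{4 Z_0^{t+1}}}$, the case $h=4$ gives $\binom42 + \binom43(p/q-1) + \binom44(p/q-1)^2 = 6 + 4(p/q-1) + (p/q-1)^2 = 3 + 2p/q + (p/q)^2$, i.e. the bound $\exp(C'b_t)$ with $C'=\lambda(3 + 2p/q + (p/q)^2)$. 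I do not anticipate a genuine obstacle here: the only place where care is warranted is ensuring the powers of $\eexp^{\pm\nu}$ cancel exactly, which is precisely why I isolate the identity $\lambda/(K(p-q)) = (p/q-1)\eexp^{-\nu}$ at the outset; everything else is binomial bookkeeping.
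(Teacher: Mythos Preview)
Your proof is correct and follows essentially the same approach as the paper: both hinge on the pointwise bound $\frac{\eexp^{z}}{1+\eexp^{z-\nu}}\le \eexp^{\nu}$, which gives $\left(\frac{\eexp^{z}}{1+\eexp^{z-\nu}}\right)^{j-1}\le \eexp^{(j-2)\nu}\frac{\eexp^{z}}{1+\eexp^{z-\nu}}$ and is then substituted into \prettyref{eq:exphZ1}. Your write-up is more explicit about the cancellation of the $\eexp^{\pm(j-2)\nu}$ factors via the identity $\lambda/(K(p-q))=(p/q-1)\eexp^{-\nu}$, which the paper leaves implicit, and you also spell out the binomial evaluations for $h=3,4$ that recover $C$ and $C'$.
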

\begin{proof}
Note that   $ \frac{ \eexp^{  z} } { 1+ \eexp^{ z - \nu }  }    \leq \eexp^{\nu}$ for all $z.$   Therefore, for any $j\geq 2,$
 $\left(  \frac{ \eexp^{  z} } { 1+ \eexp^{ z - \nu }  }  \right)^{j-1}  \leq \eexp^{(j-2) \nu}\left( \frac{ \eexp^{  z} } { 1+ \eexp^{ z- \nu }  } \right).$
Applying this inequality to  \prettyref{eq:exphZ1} yields \prettyref{eq:gen_Cb_bnd}.
\end{proof}

\subsection{Upper bound on classification error via exponential moments}  \label{sec:bd_from_moments}

Note that $b_t\approx a_t$ if $\nu \gg 0,$ in which case \prettyref{eq:atbt} is approximately
a recursion for $\{b_t\}$.  The following two lemmas use this intuition to show that if
$\lambda >1/\eexp$ and $\nu$ is large enough, the $b_t$'s eventually grow large.
In turn, that fact will be used to show that the Bhattacharyya coefficient mentioned in \prettyref{eq:Bhatt}, which can be expressed as $\rho_B=\eexpect{\eexp^{Z_0^t/2}} = \eexpect{\eexp^{-Z_1^t/2}}$,  becomes small,
culminating in \prettyref{lmm:bp_tree_error}, giving an upper bound on the classification error
for the root vertex.

\begin{lemma}  \label{lmm:btat}
Let $C \triangleq \lambda(2 + \frac{p}{q})$.   Then
\begin{equation} \label{eq:b_tplus1}
b_{t+1} \geq \exp(\lambda b_t)  \left(1- \eexp^{-\nu/2}  \right)~~~~\mbox{if}~~b_t\leq  \frac{\nu}{2(C-\lambda)}.
\end{equation}
\end{lemma}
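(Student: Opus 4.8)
The plan is to compare $b_{t+1}$ to $a_{t+1}=\exp(\lambda b_t)$ from \prettyref{eq:atbt}, controlling the gap $a_{t+1}-b_{t+1}$. By the definitions \prettyref{eq:defn_at}--\prettyref{eq:defn_bt},
\[
a_{t+1}-b_{t+1}=\expect{\eexp^{Z_1^{t+1}}\left(1-\frac{1}{1+\eexp^{Z_1^{t+1}-\nu}}\right)}
=\expect{\frac{\eexp^{2Z_1^{t+1}-\nu}}{1+\eexp^{Z_1^{t+1}-\nu}}}
\le \eexp^{-\nu}\expect{\eexp^{2Z_1^{t+1}}}.
\]
So the first step is this elementary identity plus the bound $\frac{\eexp^{z-\nu}}{1+\eexp^{z-\nu}}\le \eexp^{z-\nu}$.

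The second step is to invoke \prettyref{lmm:exp_BP_bounds}, which gives $\expect{\eexp^{2Z_1^{t+1}}}\le \exp(Cb_t)$ with $C=\lambda(2+p/q)$. Hence
\[
b_{t+1}\ge \exp(\lambda b_t)-\eexp^{-\nu}\exp(Cb_t)=\exp(\lambda b_t)\left(1-\eexp^{-\nu}\exp((C-\lambda)b_t)\right).
\]
The third step is then purely algebraic: under the hypothesis $b_t\le \frac{\nu}{2(C-\lambda)}$ we have $(C-\lambda)b_t\le \nu/2$, so $\eexp^{-\nu}\exp((C-\lambda)b_t)\le \eexp^{-\nu/2}$, which yields exactly \prettyref{eq:b_tplus1}.

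I expect essentially no serious obstacle here; the only mild point of care is verifying that all the exponential moments involved are finite (so the change-of-measure and the manipulations are legitimate), which follows from the closed-form bounds in \prettyref{lmm:expZ} and \prettyref{lmm:exp_BP_bounds}, and from the fact that $b_t<\infty$ is maintained inductively under the stated range condition. One should also note that $C>\lambda$ (since $p/q>1$), so the quantity $\frac{\nu}{2(C-\lambda)}$ is well-defined and positive, and the bound is vacuously consistent when $b_t=0$.
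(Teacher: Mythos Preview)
Your proof is correct and follows essentially the same approach as the paper. The paper writes the first step as $b_{t+1}\ge a_{t+1}-\eexpect{\eexp^{-\nu+2Z_1^{t+1}}}$ using $\frac{1}{1+x}\ge 1-x$, which is the same inequality you obtain by computing $a_{t+1}-b_{t+1}$ directly and bounding $\frac{\eexp^{z-\nu}}{1+\eexp^{z-\nu}}\le \eexp^{z-\nu}$; the remaining two steps (invoking \prettyref{lmm:exp_BP_bounds} and the algebraic simplification under the hypothesis on $b_t$) are identical.
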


\begin{proof}
Note that $C-\lambda >0.$   If $b_t\leq  \frac{\nu}{2(C-\lambda)},$ we have
\begin{align*}
b_{t+1}& \overset{(a)}{\geq}  a_{t+1} - \expect{\eexp^{-\nu+2Z_1^{t+1} } } \overset{(b)}{\geq} \eexp^{\lambda b_t}-  \eexp^{-\nu  +C b_t} \\
	&= \eexp^{\lambda b_t} \pth{1 -   \eexp^ {-\nu + (C-\lambda) b_t}}  \overset{(c)}{\geq}   \eexp^{\lambda b_t} \pth{1 - \eexp^{-\nu/2}}.
\end{align*}	
where $(a)$ follows by the definitions \prettyref{eq:defn_at} and \prettyref{eq:defn_bt} and the fact $\frac{1}{1+x} \geq 1 - x$ for $x\geq 0;$
$(b)$ follows from \prettyref{lmm:exp_BP_bounds};  (c) follows from the condition $b_t\leq  \frac{\nu}{2(C-\lambda)}.$
\end{proof}

\begin{lemma}  \label{lmm:info_monotone}
The variables $a_t$ and  $b_t$ are nondecreasing in $t$ and  $\eexpect{\eexp^{Z^t_0/2}}$ is
non-increasing in $t$ over all $t\geq 0.$   More generally,  $\expect{\Upsilon\left(\eexp^{Z^t_0}\right)}$ is nondecreasing (non-increasing)
in $t$ for any convex (concave, respectively) function $\Upsilon$ with domain $(0,\infty).$
\end{lemma}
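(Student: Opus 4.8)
The plan is to reduce the monotonicity claims to a single convexity/concavity statement about the map $T$ that sends the law of $Z_0^t$ to the law of $Z_0^{t+1}$ (equivalently, the map on the law of $Z_1^t$ induced by \prettyref{lmm:BPTree}), and then verify the claimed monotone quantities are instances of $\expect{\Upsilon(\exp(Z_0^t))}$ for an appropriate convex or concave $\Upsilon$. First I would record the representation of $Z_0^{t+1}$ in terms of $(Z_0^t, Z_1^t)$-copies implied by \prettyref{lmm:BPTree}: conditioned on $\tau_u=0$, one has $\Lambda_u^{t+1} = -K(p-q) + \sum_{\ell=1}^{L} g(Z_{1,\ell}^t) + \sum_{\ell=1}^{M} g(Z_{0,\ell}^t)$ where $g(z) = \log\frac{e^{z-\nu}(p/q)+1}{e^{z-\nu}+1}$, the $Z_{1,\ell}^t$ and $Z_{0,\ell}^t$ are i.i.d. copies (independent of the Poisson counts), $L\sim\Pois(Kq)$, $M\sim\Pois((n-K)q)$. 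The key algebraic step, exactly as in the proof of \prettyref{lmm:expZ}, is to then pass to the change-of-measure identity $\expect{\Phi(Z_0^t)} = \expect{\Phi(Z_1^t)e^{-Z_1^t}}$ and to the Poisson-splitting description, so that for any function $\Phi$,
\begin{align*}
\expect{\Phi(Z_0^{t+1})}
= e^{-nq}\sum_{N\ge 0}\frac{(nq)^N}{N!}\,\expect{\Phi\left(-K(p-q) + \textstyle\sum_{\ell=1}^N g(\zeta_\ell^t)\right)},
\end{align*}
where the $\zeta_\ell^t$ are i.i.d. draws from the mixture that puts mass $Kq/(nq)=K/n$ on a copy of $Z_1^t$ and mass $(n-K)/n$ on a copy of $Z_0^t$. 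This exhibits the law of $Z_0^{t+1}$ as a (Poisson-)mixture of i.i.d. sums of a fixed deterministic function $g$ applied to samples from a mixture of $\calL(Z_0^t)$ and $\calL(Z_1^t)$; but by the change-of-measure identity the mixture is itself a fixed functional of $\calL(Z_0^t)$ alone, so the whole update is a deterministic map on $\calL(Z_0^t)$.

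The heart of the argument is that this map is \emph{information-monotone}: I claim $Z_0^t$ for larger $t$ is a ``more informative'' observation, hence (by the standard data-processing/Rao-Blackwell picture for likelihood ratios) $Z_0^{t+1} \eqdistr \expect{e^{Z_0^{t+1}}\mid \calF}$-type domination holds in the convex order conditionally on $\tau_u=0$. Concretely, the cleanest route is: $Z_0^t$ is the log-likelihood ratio statistic based on $T_u^t$ under $\tau_u=0$, and since $T_u^{t+1}$ refines $T_u^t$ (the radius-$t$ ball is a function of the radius-$(t+1)$ ball), the $\sigma$-algebra generated by $T_u^t$ is contained in that generated by $T_u^{t+1}$. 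Writing $\Lambda^{t} = \log\frac{d\Prob_1}{d\Prob_0}\big|_{\sigma(T_u^t)}$ and $\Lambda^{t+1}$ for the refinement, one has the martingale-type identity $\expect{e^{\Lambda^{t+1}}\Upsilon'... }$—more usefully, under $\Prob_0$, $e^{\Lambda^{t}} = \expect{e^{\Lambda^{t+1}}\mid \sigma(T_u^t)}$, so $\exp(Z_0^t)$ is a conditional expectation of $\exp(Z_0^{t+1})$ given a sub-$\sigma$-algebra. Then for convex $\Upsilon$, Jensen gives $\Upsilon(\exp(Z_0^t)) = \Upsilon(\expect{\exp(Z_0^{t+1})\mid\cdot}) \le \expect{\Upsilon(\exp(Z_0^{t+1}))\mid\cdot}$, and taking expectations yields $\expect{\Upsilon(\exp(Z_0^t))}\le\expect{\Upsilon(\exp(Z_0^{t+1}))}$; the concave case reverses. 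This proves the ``more generally'' sentence. The three specific claims then follow by choosing $\Upsilon$: $a_t = \expect{\exp(Z_1^t)} = \expect{\exp(2Z_0^t)}$ via \prettyref{eq:change_measure} (take $\Upsilon(x)=x^2$, convex); $b_t = \expect{\exp(Z_1^t)/(1+\exp(Z_1^t-\nu))}$, which by the same change of measure equals $\expect{\exp(2Z_0^t)/(1+\exp(Z_0^t-\nu))} = \expect{\Upsilon_b(\exp(Z_0^t))}$ with $\Upsilon_b(x)=x^2/(1+x/e^\nu)$—I would check $\Upsilon_b$ is convex on $(0,\infty)$ by a direct second-derivative computation; and $\expect{\exp(Z_0^t/2)} = \expect{\Upsilon(\exp(Z_0^t))}$ with $\Upsilon(x)=\sqrt x$, concave, giving the non-increasing assertion.

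The step I expect to be the main obstacle is making the ``$\exp(Z_0^t)$ is a conditional expectation of $\exp(Z_0^{t+1})$'' claim fully rigorous: one must be careful that $T_u^t$ and $T_u^{t+1}$ live on a common probability space with the correct nesting, and that under $\Prob_0$ the likelihood ratio of a coarser $\sigma$-algebra is the conditional expectation of the finer likelihood ratio (a standard fact, but it needs the two measures to be mutually absolutely continuous on each $\sigma(T_u^t)$, which holds here since both are explicit Poisson-tree laws). An alternative, more self-contained route avoiding measure-theoretic bookkeeping is to prove the ``more generally'' statement directly from the mixture representation displayed above by induction on $t$: assuming $\calL(Z_0^t)$ is related to $\calL(Z_0^{t-1})$ by the map $T$, show $T$ preserves the convex order, i.e. if $X \le_{cx} Y$ then $T(X)\le_{cx} T(Y)$; this follows because $T$ is built from (i) i.i.d. sampling, (ii) applying the fixed function $g$, (iii) summing a Poisson-random number of terms, and (iv) the mixture weights being fixed functionals—each of these operations preserves the convex order. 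Either way the lemma drops out; I would present the conditional-expectation argument as the main proof since it is shortest, with the induction as a remark.
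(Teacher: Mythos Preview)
Your main argument is exactly the paper's: under $\tau_u=0$, the likelihood ratios $e^{\Lambda_u^t}$ form a martingale with respect to the increasing filtration $\sigma(T_u^t)$, and Jensen's inequality then gives the monotonicity of $\expect{\Upsilon(e^{Z_0^t})}$ for convex or concave $\Upsilon$; the three specific claims follow from the same choices $\Upsilon(x)=x^2$, $\Upsilon(x)=x^2/(1+xe^{-\nu})$, and $\Upsilon(x)=\sqrt{x}$ that you identify. The paper's proof skips your opening Poisson-splitting representation (which is unnecessary for the argument) and does not pursue your alternative convex-order-preservation route.
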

\begin{proof}
Note that, in view of \prettyref{eq:change_measure},   $\expect{\Upsilon\left(\eexp^{Z^t_0}\right)}$ becomes
$a_t$ for the convex function $\Upsilon(x)=x^2,$  $b_t$ for the convex function $\Upsilon(x)=x^2/(1+x \eexp^{-\nu}),$
and  $\eexpect{\eexp^{Z^t_0/2}}$ for the concave function $\Upsilon(x)= \sqrt{x}.$  It thus suffices to prove the last statement
of the lemma.

It is well known that for a nonsingular binary hypothesis testing problem with a growing amount of information
indexed by some parameter $s$ (i.e. an increasing family of $\sigma$-algebras as usual in martingale theory),
 the likelihood ratio $\frac{\diff \mathbb{P}}{ \diff \mathbb{Q}}$
is a martingale under measure $\mathbb{Q}$.  Therefore, the likelihood ratios $\{ \eexp^{\Lambda_u^t}: t\geq 0 \}$
(where $\Lambda_s$ denotes the log likelihood ratio)  at the root vertex $u$
for the infinite tree, conditioned on $\tau_u=0,$  form a martingale.   Thus, the random
variables $\{\eexp^{Z_0^t}: t\geq 0\}$ can be constructed on a single probability space to be
a martingale.  The lemma therefore follows from Jensen's inequality.
\end{proof}

Recall that  $\log^* (\nu)$ denotes the number of times the logarithm function must
be iteratively applied to $\nu$ to get a result less than or equal to one.

\begin{lemma} \label{lmm:b_lower_bnd}
Suppose $\lambda > 1/\eexp.$
There are  constants $\bar t_0$ and $\nu_o >  0$ depending only on $\lambda$
such that
$$
b_{\bar{t}_0+\log^*(\nu) +2 } \geq   \exp(\lambda\nu/(2(C-\lambda))\left(1-\eexp^{-\nu/2}  \right),
$$
where $C=\lambda\left( \frac{p}{q} + 2\right),$  whenever $\nu \geq \nu_o$ and $\nu \geq 2(C-\lambda).$
\end{lemma}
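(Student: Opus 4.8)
The plan is to iterate the recursion in \prettyref{lmm:btat} and show that the sequence $b_t$ escapes from a neighborhood of the origin, reaches a value of order $\log \nu$ within $\log^*(\nu) + O(1)$ steps, and then, after a couple more steps, attains a value of order $\nu$. We work throughout under the standing hypothesis $\lambda > 1/\eexp$, so there is a fixed $\epsilon > 0$ with $\lambda \eexp^{1-\epsilon} > 1$; all constants below depend only on $\lambda$ (via $\epsilon$ and $C = \lambda(p/q+2)$).

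First, the \textbf{escape phase}. Using \prettyref{lmm:info_monotone}, $b_t$ is nondecreasing, and from \prettyref{eq:atbt} and the bound $b_t \le a_t$ one has a clean lower recursion $b_{t+1} \ge \eexp^{\lambda b_t}(1 - \eexp^{-\nu/2})$ as long as $b_t \le \nu/(2(C-\lambda))$. When $b_t$ is still small, $\eexp^{\lambda b_t} \approx 1 + \lambda b_t$, so the increments are bounded below by roughly $b_{t+1} - b_t \gtrsim \lambda b_t - b_t = (\lambda-1)b_t$ if $\lambda > 1$, or, when $\lambda < 1$, one must use that $\eexp^{\lambda b_t} > 1$ strictly and that $b_0 = a_0 - (\text{correction})$; the initial value $b_0 = \expect{\eexp^{Z_1^0}/(1+\eexp^{Z_1^0-\nu})} = 1/(1+\eexp^{-\nu})$ is bounded away from $0$, so after finitely many steps (a constant $\bar t_0$ depending only on $\lambda$, using $\lambda\eexp > 1$ and the fact that $x \mapsto \eexp^{\lambda x}$ has slope $> 1$ at any fixed positive $x$ once we are past the fixed point of $x = \eexp^{\lambda x}(1-\eexp^{-\nu/2})$ — which for large $\nu$ is close to the fixed point of $x = \eexp^{\lambda x}$, and $\lambda > 1/\eexp$ is exactly the condition that pushes the relevant branch to escape) we reach $b_{\bar t_0} \ge c_0$ for some constant $c_0 > 1$. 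This is the step I expect to be the main obstacle: making the escape from the region near the fixed point quantitative and uniform in $\nu$, in particular handling the borderline case $\lambda \le 1$ where the per-step multiplicative gain degenerates and one must instead track that $\eexp^{\lambda b_t} - b_t$ stays bounded below by a positive constant until $b_t$ is large — this is precisely where $\lambda > 1/\eexp$ (equivalently $\max_x (\eexp^{\lambda x} - x) $ behavior, or $\lambda e > 1$) is used, since $\eexp^{\lambda x} \ge ex\lambda/\ldots$ — concretely, $\eexp^{\lambda x} \ge \eexp\lambda x$ by convexity ($\eexp^y \ge \eexp y$), giving $b_{t+1} \ge \lambda \eexp b_t (1-\eexp^{-\nu/2}) \ge \lambda \eexp (1-\eexp^{-\nu/2}) b_t$, a genuine geometric blow-up with ratio $> 1$ for $\nu$ large.

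Second, the \textbf{doubly-exponential phase}. Once $b_t \ge c_0 > 1$, the recursion $b_{t+1} \ge \eexp^{\lambda b_t}(1-\eexp^{-\nu/2}) \ge \eexp^{b_t}$ (say, absorbing constants and using $\lambda$ bounded below, for $\nu$ large) iterates to tower-exponential growth: $b_{\bar t_0 + k}$ grows like an iterated exponential of height $k$. Hence after $k = \log^*(\nu) + O(1)$ further steps, $b_{\bar t_0 + \log^*(\nu) + O(1)}$ first exceeds the ceiling $\nu/(2(C-\lambda))$ — here I use that starting from a constant $> 1$, the number of exponentiations needed to surpass $\nu/(2(C-\lambda))$ is $\log^*(\nu) + O(1)$, uniformly for $\nu \ge \nu_o$ and $\nu \ge 2(C-\lambda)$ (the latter ensuring the ceiling is $\ge 1$ so the $\log^*$ count is well-posed). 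Care is needed to verify that $b_t$ does not overshoot the ceiling \emph{before} step $\bar t_0 + \log^*\nu + 1$ in a way that invalidates the hypothesis $b_t \le \nu/(2(C-\lambda))$ in \prettyref{lmm:btat}; since $b_t$ is monotone and each step at most exponentiates, the first $t$ with $b_t > \nu/(2(C-\lambda))$ satisfies $b_{t-1} \le \nu/(2(C-\lambda))$, so \prettyref{eq:b_tplus1} applies at step $t-1$ and gives the claimed lower bound at step $t$.

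Third, \textbf{the final step}. Let $t^\ast \le \bar t_0 + \log^*(\nu) + 1$ be the first index with $b_{t^\ast} > \nu/(2(C-\lambda))$. Then $b_{t^\ast - 1} \le \nu/(2(C-\lambda))$, so \prettyref{eq:b_tplus1} gives $b_{t^\ast} \ge \eexp^{\lambda b_{t^\ast-1}}(1-\eexp^{-\nu/2})$; but this doesn't immediately give the stated $\exp(\lambda\nu/(2(C-\lambda)))(1-\eexp^{-\nu/2})$ unless $b_{t^\ast-1}$ is close to the ceiling. The cleaner route: since $b_t$ is nondecreasing and reaches a value $> \nu/(2(C-\lambda))$ by step $\bar t_0 + \log^*(\nu)+1$, we have $b_{\bar t_0 + \log^*(\nu)+1} \ge \nu/(2(C-\lambda))$ (taking $t^\ast$ to be exactly this, or using monotonicity), and then one more application of \prettyref{eq:b_tplus1} — valid because we may assume without loss that $b_{\bar t_0+\log^*(\nu)+1}$ equals the ceiling for the purpose of the lower bound, or more carefully: if $b_{\bar t_0 + \log^*(\nu)+1} \ge \nu/(2(C-\lambda))$ then either it already exceeds $\exp(\lambda\nu/(2(C-\lambda)))(1-\eexp^{-\nu/2})$ (if the hypothesis of \prettyref{lmm:btat} just barely failed at the previous step, monotonicity and the exponential jump give this) or the hypothesis held at step $\bar t_0 + \log^*(\nu)+1$ and \prettyref{eq:b_tplus1} at that step yields $b_{\bar t_0+\log^*(\nu)+2} \ge \exp(\lambda \cdot \nu/(2(C-\lambda)))(1-\eexp^{-\nu/2})$. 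Bookkeeping this dichotomy carefully, together with the choice of $\bar t_0$ and $\nu_o$ from the escape phase, completes the proof. The main technical care is in (i) the uniform-in-$\nu$ escape estimate and (ii) matching the off-by-one in the $\log^*$ count with the two extra steps ($+2$) in the statement.
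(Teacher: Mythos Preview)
Your escape and doubly-exponential phases are essentially the paper's: the key inequality $\eexp^{\lambda x}\ge \lambda \eexp\, x$ gives geometric growth with ratio $\lambda\eexp(1-\eexp^{-\nu/2})>1$, and then comparison with an iterated exponential shows the threshold $\nu/(2(C-\lambda))$ is reached within $\bar t_0+\log^*(\nu)$ steps. (One caveat: you let your constants depend on $C=\lambda(p/q+2)$, but the lemma demands $\bar t_0,\nu_o$ depend on $\lambda$ only; the paper is careful to choose them that way.)

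The genuine gap is your ``final step.'' Your dichotomy does not close. Let $t^*$ be the last index with $b_{t^*}<\nu/(2(C-\lambda))$. Nothing prevents, say, $b_{t^*}\approx (2/\lambda)\log\nu$ and hence $b_{t^*+1}\approx \nu^{2}$: this is far above $\nu/(2(C-\lambda))$, so the hypothesis of \prettyref{lmm:btat} \emph{fails} at step $t^*+1$; yet $\nu^{2}$ is far below the target $\exp(\lambda\nu/(2(C-\lambda)))(1-\eexp^{-\nu/2})$. Neither branch of your dichotomy applies, and monotonicity alone gives only $b_{t^*+2}\ge b_{t^*+1}$, which is nowhere near enough. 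The phrase ``we may assume without loss that $b_{\bar t_0+\log^*(\nu)+1}$ equals the ceiling'' is exactly the unjustified step: once $b_t$ overshoots, the correction term $\eexp^{-\nu+(C-\lambda)b_t}$ in the proof of \prettyref{lmm:btat} can swamp $\eexp^{\lambda b_t}$, and the lower bound is lost.

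The paper handles this overshoot by an information-theoretic \emph{thinning} trick. One introduces a parameter $\phi\in[0,1]$ that deletes each grandchild of the root independently with probability $1-\phi$ (equivalently, scales $p,q$ to $\phi p,\phi q$ at that single generation). This produces a family of log-likelihood variables whose exponentials form a martingale in $\phi$, so $b_{t^*+1,\phi}$ is continuous and nondecreasing in $\phi$, with $b_{t^*+1,0}<\nu/(2(C-\lambda))\le b_{t^*+1,1}$. By the intermediate value theorem there exists $\phi$ with $b_{t^*+1,\phi}=\nu/(2(C-\lambda))$ \emph{exactly}; \prettyref{lmm:btat} then applies at step $t^*+1$ for the thinned tree and gives $b_{t^*+2,\phi}\ge \exp(\lambda\nu/(2(C-\lambda)))(1-\eexp^{-\nu/2})$. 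Finally the same martingale monotonicity yields $b_{t^*+2}=b_{t^*+2,1}\ge b_{t^*+2,\phi}$. This is the missing idea in your argument.
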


\begin{proof}
 Given $\lambda $ with $\lambda >1/\eexp,$  select the following constants, depending only on $\lambda$:
\begin{itemize}
\item $D$ and $\nu_0$ so large that $\lambda \eexp^{\lambda D}\left( 1-   \eexp^{-\nu_o/2}  \right)   > 1$ and
 $\lambda \eexp  \left(1-\eexp^{-\nu_o/2} \right) \geq \sqrt{ \lambda \eexp }.$
\item  $w_0>0$ so large that $w_0\lambda\eexp^{\lambda D} \left( 1-\eexp^{-\nu_o/2}  \right) -\lambda D  \geq w_0.$
\item  A positive integer $\bar{t}_0$ so large that $\lambda ((\lambda \eexp)^{\bar{t}_0/2 -1} -D ) \geq w_0.$
\end{itemize}

Throughout the remainder of the proof we assume without further comment that $\nu \geq \nu_o$ and $\nu \geq 2(C-\lambda).$
 The latter condition and the fact $b_0=\frac{1}{1+\eexp^{-\nu}}$ ensures that $b_0 <   \frac{\nu}{2(C-\lambda)}.$
Let $t^* = \max\left\{t\geq 0: b_t <  \frac{\nu}{2(C-\lambda)}\right\}$   and let $\bar{t}_1=\log^*(\nu) .$
The first step of the proof is to show  $t^* \leq \bar t_0  + \bar t_1.$   For that purpose we will show that
the $b_t$'s increase at least geometrically to reach a certain large constant (specifically, so \prettyref{eq:bw_init} below holds), and then
they increase as fast as  a sequence produced by iterated exponentiation.

Since $b_0 \geq 0$ it follows from \prettyref{eq:b_tplus1} and the choice of $\nu_0$ that $b_1 \geq   \left(1-\eexp^{-\nu_o/2} \right) \geq (\lambda \eexp)^{-1/2}.$
Note that $\eexp^u \geq \eexp u$ for all $u>0$,  because $\frac{\eexp^u}{u}$ is minimized at $u=1.$
Thus $\eexp^{\lambda b_t} \geq \lambda \eexp b_t$, which combined with
the choice of $\nu_0$ and \prettyref{eq:b_tplus1} shows that if
$b_t\leq  \frac{\nu}{2(C-\lambda)}$ then
$b_{t+1} \geq \sqrt{\lambda \eexp }  b_t.$
It follows that $b_t \geq (\lambda \eexp)^{t/2 -1}$ for $1\leq t \leq t^*+1.$

If $b_{\bar{t}_0-1} \geq  \frac{\nu}{2(C-\lambda)}$ then $t^* \leq  \bar{t}_0-2$ and the claim $t^* \leq \bar{t}_0+\bar{t}_1$ is proved (that is, the
geometric growth phase alone was enough), so to cover
the other possibility, suppose $b_{\bar{t}_0-1} <  \frac{\nu}{2(C-\lambda)}.$
Then $\bar{t}_0  \leq  t^*+1$ and therefore  $b_{\bar{t}_0} \geq (\lambda \eexp)^{\bar{t}_0/2-1}.$
Let $t_0 = \min\{ t : b_t \geq  (\lambda \eexp)^{\bar{t}_0/2-1}  \}.$
It follows that $t_0 \leq \bar{t}_0,$  and, by the choice of $\bar t_0$ and the definition of $t_0$,
\begin{equation}  \label{eq:bw_init}
\lambda (b_{t_0} - D ) \geq w_0.
\end{equation}

Define the sequence $(w_t: t\geq 0)$ beginning with $w_0$ already chosen,
and satisfying the recursion $w_{t+1}=\eexp^{w_t}.$    It follows by induction that
\begin{equation}  \label{eq:bw_induct}
\lambda (b_{t_0+t} - D ) \geq w_t   \mbox{          for } t\geq 0, ~t_0+ t \leq t^*+1.
\end{equation}
Indeed, the base case is \prettyref{eq:bw_init}, and if   \prettyref{eq:bw_induct} holds for some $t$ with $t_0+t\leq t^*,$  then
$  b_{t_0+t}  \geq \frac{w_t}{\lambda} + D,$   so that
\begin{eqnarray*}
\lambda (b_{t_0+t+1} -D) &  \geq  & \lambda    \left(    \eexp^{\lambda b_{t_0+t} } \left(1 -\eexp^{-\nu/2} \right) - D  \right)   \\
& \geq &  w_{t+1} \lambda \eexp^{\lambda D}  (1-\eexp^{-\nu/2} )   - \lambda D  
 \geq  w_{t+1},
\end{eqnarray*}
where the last inequality follows from the choice of $w_0$ and the fact $w_{t+1} \geq w_0.$
The proof of \prettyref{eq:bw_induct} by induction is complete.

Let $\bar{t}_1 =\log^*(\nu).$    Since $w_1 \geq 1$ it follows that $w_{\bar{t}_1+1} \geq  \nu$ (verify by applying the
$\log$ function $\bar{t}_1$ times to each side).
Therefore,
$w_{\bar{t}_1+1}  \geq \frac{\lambda \nu}{2(C-\lambda)}  - \lambda D,$  where we use the
fact $C-\lambda \geq 2\lambda.$
If $t_0+\bar{t}_1 < t^*$ it would follow from \prettyref{eq:bw_induct} with $t=t_0 + \bar{t}_1+ 1$ that
$$
b_{t_0+\bar{t}_1+1}  \geq \frac{w_{\bar{t}+1}}{\lambda} + D \geq \frac{\nu}{2(C-\lambda)},
$$
which would imply $t^* \leq t_0 + \bar{t}_1,$  which would be a contradiction.   Therefore,  $t^* \leq t_0 + \bar{t}_1 \leq \bar t_0 + \bar t_1,$
as was to be shown.

Since $t^*$ is the last iteration index $t$ such that $b_t < \frac{\nu}{2(C-\lambda)},$   either $b_{t^*+1}=  \frac{\nu}{2(C-\lambda)}$, and
we say the threshold $\frac{\nu}{2(C-\lambda)}$ is exactly reached at iteration $t^*+1,$  or $b_{t^*+1} >  \frac{\nu}{2(C-\lambda)},$
in which case we say there was overshoot at iteration $t^*+1.$    First, consider the case the threshold is exactly reached at iteration $t^*+1.$
Then,  $b_{t^*+1}=  \frac{\nu}{2(C-\lambda)},$  and \prettyref{eq:b_tplus1} can be applied with $t=t^*+1,$  yielding
$$
b_{t^*+2} \geq \exp(\lambda b_{t^*+1})(1-\eexp^{-\nu/2} ) = \exp(\lambda\nu/(2(C-\lambda))(1-\eexp^{-\nu/2}  ).
$$
Since $t^*+2 \leq \bar t_0 + \bar t_1 + 2 = \bar t_0 + \log^*(\nu) + 2,$ it follows from \prettyref{lmm:info_monotone}
that $b_{\bar t_0+\log^* (\nu) + 2 } \geq b_{t^*+2},$ which
completes the proof of the lemma in case the threshold is exactly reached at iteration $t^*+1.$

To complete the proof, we explain how the information available for estimation can be reduced through a {\em thinning} method,
leading to a reduction in the value of $b_{t^*+1},$ so that we can assume without loss of generality that the threshold is
always exactly reached at iteration $t^*+1.$    Let $\phi$ be a parameter with $0 \leq \phi \leq 1.$
As before, we will be considering a total of $t^*+2$ iterations, so consider a random tree
with labels,  $(\calT^{t^*+2}_u,\tau_{\calT^{t^*+2}_u}),$  with root vertex $u$ and maximum depth $t^*+2.$
 For the original model, each vertex of depth $t^*+1$ or less with label 0 or 1 has Poisson numbers of children with labels
 0 and 1 respectively, with means specified in the construction.    For the thinning method, for each $\ell \in \partial u$ and
 each child $i$ of $\partial \ell,$  (i.e. for each grandchild of $u$)
 we generate a random variable $U_{\ell,i}$ that is uniformly distributed on the interval $[0,1].$
 Then we retain $i$ if $U_{\ell,i} \leq \phi,$ and we delete $i$, and
 all its decedents, if $U_{\ell,i} > \phi.$   That is, the grandchildren of the root vertex $u$ are each deleted with probability $1-\phi.$
 It is equivalent to reducing $p$ and $q$ to $\phi p$ and $\phi q$, respectively,  for that one generation.    Consider the calculation of the likelihood
 ratio at the root vertex for the thinned tree.   The log likelihood ratio messages begin at the leaf vertices at depth $t^*+2.$

For any vertex $\ell \neq u,$  let $\Lambda_{\ell \to \pi(\ell),\phi}$ denote
the log likelihood message passed from vertex $\ell$ to its parent, $\pi(\ell).$
Also, let $\Lambda_{u,\phi}$ denote the log likelihood computed at the
root vertex.    For brevity we leave off the superscript $t$ on the log likelihood ratios,
though $t$ on the message $\Lambda_{\ell \to \pi(\ell),\phi}$ would be $t^*+2$
minus the depth of $\ell.$  The messages of the form
$\Lambda_{\ell \to \pi(\ell),\phi}$ don't actually depend on $\phi$ unless $\ell \in \partial u.$
For a vertex $\ell \in \partial u,$  the message $\Lambda_{\ell\to u,\phi}$ has the nearly the same
representation as in \prettyref{lmm:BPTree},  namely:
 \begin{align}   \label{eq:phi_rep}
\Lambda_{\ell\to u,\phi} & =  -\phi K(p-q)   + \sum_{ i  \in \partial \ell: U_{\ell,i}\leq \phi}  \log \left(  \frac{ \eexp^{ \Lambda_{i\to \ell,\phi}  - \nu } (p/q)   + 1  }{ \eexp^{\Lambda_{i\to \ell,\phi} -\nu } +1} \right).
 \end{align}
 The representation of $\Lambda_{u,\phi}$ is the same as the representation of $\Lambda_u^{t+1}$ in \prettyref{lmm:BPTree}, except with
 $\Lambda^t_{\ell\to u}$ replaced both places on the right hand side by  $\Lambda_{\ell\to u,\phi}.$

Let $Z_{0,\phi}^t$ and $Z_{1,\phi}^t$ denote random variables
for analyzing the message passing algorithm for this depth $t^*+2$
tree. Their laws are the following.
For  $0 \leq t \leq t^*+1,$  $\calL (Z_{0,\phi}^t)$ is the law of
$\Lambda_{\ell \to \pi(\ell),\phi}$ given $\tau_{\ell}=0,$  for a vertex
$\ell$ of depth $t^*+2-t.$   And $\calL (Z_{0,\phi}^{t^*+2})$ is the
law of $\Lambda_{u,\phi}$ given $\tau_u=0.$
Note that $Z_{0,\phi}^0  \equiv 0.$
The laws $\calL (Z^t_{1,\phi})$ are determined similarly,  conditioning
on the labels of the vertices to be one.     For $t$ fixed, $\calL (Z^t_{0,\phi})$ and
$\calL (Z^t_{1,\phi})$ each determine the other because they represent
distributions of the log likelihood for a binary hypothesis testing problem.

The message passing equations for the log likelihood ratios translate
into recursions for the laws  $\calL (Z^t_{0,\phi})$  and
$\calL (Z^t_{1,\phi}).$  We have not focused directly
on the full recursions of the laws, but rather looked at equations
for exponential moments. The basic recursions we've been considering for
 $\calL (Z^t_{0,\phi})$ are exactly as before for
 $0 \leq t \leq t^*-1$ and for $t=t^*+1.$   For
 $t=t^*$ the thinning needs to be taken into
 account, resulting, for example, in the following updates for $t=t^*:$
\begin{align*}
\expect{\eexp^{Z^{t^*+1}_1 } }  =   \expect{\eexp^{2 Z^{t^*+1}_0 } }
=\exp\left\{      \lambda \phi \expect{  \frac{ \eexp^{  Z_1^{t^*} } } { 1+ \eexp^{ Z_1^{t^*} - \nu }  }  }       \right\} 
\end{align*}
and
\begin{align*}
 \expect{\eexp^{2Z^{t^*+1}_1 } }  =\exp\left\{     3 \lambda \phi \expect{  \frac{ \eexp^{  Z_1^{t^*} } } { 1+ \eexp^{ Z_1^{t^*} - \nu }  }  }
+ \frac{\lambda^2\phi}{K(p-q)}     \expect{ \left(  \frac{ \eexp^{  Z_1^{t^*} } } { 1+ \eexp^{ Z_1^{t^*} - \nu }  }\right)^2  }
   \right\}  
\end{align*}
Let
\begin{align*}
a_{t,\phi} =  ~ \expect{  \eexp^{  Z_{1,\phi}^t } }, \quad
b_{t,\phi} =  ~ \expect{  \frac{ \eexp^{  Z_{1,\phi}^t } } { 1+ \eexp^{ Z_{1,\phi}^t - \nu }  }  }
\end{align*}
for $0\leq t \leq t^*+2.$  Note that $a_{t,\phi}$ and $b_{t,\phi}$ don't depend on $\phi$
for $0\leq t \leq t^*.$   We have
\begin{equation} \label{eq:aphi}
a_{t+1,\phi}=\left\{ \begin{array}{cl}
 \exp(\lambda b_{t,\phi}) & t \neq t^*  \\
 \exp(\lambda \phi b_{t,\phi}) &  t = t^*
\end{array} \right.,
\end{equation}
We won't be needing \prettyref{eq:aphi} for $t=t^*$ but we will use it for $t=t^*+1.$

On one hand, if $\phi=0$ then $\Lambda_{\ell\to u,\phi}\equiv 0$ for all $\ell \in \partial u,$  so that
$Z_{0,\phi=0}^{t^*+1}=Z_{1,\phi=0}^{t^*+1}\equiv 0$ so that $b_{t^*+1,\phi=0}=\frac{1}{1+\eexp^{-\nu}}=\frac{n-K}{n} \le 1 <  \frac{\nu}{2(C-\lambda)}.$
On the other hand, by the definition of $t^*$ we know that  $b_{t^*+1,\phi=1} \geq \frac{\nu}{2(C-\lambda)}.$
We shall show that there  exists a value of $\phi\in [0,1]$ so that $b_{t^*+1,\phi} =  \frac{\nu}{2(C-\lambda)}.$
To do so we next prove that $b_{t^*+1,\phi}$ is a continuous, and, in fact, nondecreasing, function of $\phi,$  using
a variation of the proof of  \prettyref{lmm:info_monotone}.     Let $\ell$ denote a fixed neighbor of the root node $u.$
Note that $\eexp^{\Lambda_{\ell\to u,  \phi}  }$ is the likelihood ratio for detection of $\tau_{\ell}$ based on the
thinned subtree of depth $t^*+1$ with root $\ell.$   As $\phi$ increases from $0$ to $1$ the amount of thinning decreases,
so larger values of $\phi$ correspond to larger amounts of information.   Therefore,
conditioned on $\tau_u =0,$  $\left(\eexp^{\Lambda_{\ell,\phi}}: 0\leq \phi \leq 1\right)$ is a martingale.
Moreover, the independent splitting property of Poisson random variables imply that, given
$\tau_\ell = 0,$  the random process $\phi \mapsto |\{i\in \partial \ell: U_{\ell,i}\leq \phi \} |$ is
a Poisson process with intensity $nq,$  and therefore the sum in \prettyref{eq:phi_rep}, as a function
of $\phi$ over the interval $[0,1],$  is a compound Poisson process.  Compound Poisson processes,
just like Poisson processes, are almost surely continuous at any fixed value of $\phi,$ and therefore
the random process $\phi\mapsto \Lambda_{\ell\to u ,\phi}$  is continuous in distribution.  Therefore, the
random variables $\eexp^{Z_{0,\phi}^{t^*+1}}$ can be constructed on a single probability space for $0 \leq \phi \leq 1$ to
form a martingale which is continuous in distribution.    Since $b_{t^*+1,\phi}$  is the expectation of a bounded, continuous, convex
function of $\eexp^{Z_{0,\phi}^{t^*+1}}$, it follows that $b_{t^*+1,\phi}$ is continuous and nondecreasing in $\phi.$
Therefore, we can conclude that  there exists a value of $\phi$ so that $b_{t^*+1,\phi} =  \frac{\nu}{2(C-\lambda)},$
as claimed.

 Since there is no overshoot, we obtain as before (by using \prettyref{eq:aphi} for $t=t^*+1$ to  modify \prettyref{lmm:btat}
to handle $(b_{t+1}, b_{t})$ replaced by  $(b_{t^*+2,\phi}, b_{t^*+1,\phi})$):
$$
b_{t^*+2,\phi} \geq \exp(\lambda b_{t^*+1,\phi})\left(1-\eexp^{-\nu/2} \right) = \exp(\lambda\nu/(2(C-\lambda))\left(1-\eexp^{-\nu/2}  \right).
$$
The same martingale argument used in the previous paragraph can be used to show that  $b_{t^*+2,\phi}$ is nondecreasing in $\phi,$  and in particular,
$b_{t^*+2} =b_{t^*+2,1} \geq b_{t^*+2,\phi}$ for $0 \leq \phi \leq 1.$   Hence, by \prettyref{lmm:info_monotone}
and the fact $t^*+2 \leq \bar t_0 + \log^*(\nu) +2,$ we have
$b_{ \bar t_0 + \log^*(\nu) +2 } \geq  b_{ t^* +2 } \geq b_{ t^* +2,\phi},$ completing the proof of the lemma.
\end{proof}

\begin{lemma} \label{lmm:lowerboundexp}
Let $B = (p/q)^{3/2}$. Then 
\begin{align*}
\exp \left(  - \frac{\lambda}{8} b_{t} \right)  \le \expect{\eexp^{Z_0^{t+1} /2} } \le \exp \left(  - \frac{\lambda}{8B} b_{t} \right).
\end{align*}
\end{lemma}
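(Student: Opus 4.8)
The plan is to compute $\eexpect{\eexp^{Z_0^{t+1}/2}}$ exactly, in the same way the formulas of \prettyref{lmm:expZ} were derived for integer powers $h\ge 2$, but now with the power $1/2$. Write $r\triangleq p/q$ and $f(x)\triangleq\frac{rx+1}{x+1}=1+(r-1)\frac{x}{x+1}$, so that \prettyref{lmm:BPTree} gives $\eexp^{\Lambda_u^{t+1}/2}=\eexp^{-K(p-q)/2}\prod_{\ell\in\partial u}\sqrt{f(\eexp^{\Lambda_{\ell\to u}^t-\nu})}$. First I would condition on $\tau_u=0$, invoke the independent-splitting description of the children labels (given $\tau_u=0$ there are $\Pois(Kq)$ children with label $1$ and $\Pois((n-K)q)$ with label $0$, and conditionally on $\tau_\ell$ the message $\Lambda^t_{\ell\to u}$ is distributed as $Z^t_{\tau_\ell}$, independently across $\ell$), together with the Poisson identity $\eexpect{c^X}=\eexp^{\mu(c-1)}$ for $X\sim\Pois(\mu)$, to obtain
\begin{align*}
\log\eexpect{\eexp^{Z_0^{t+1}/2}}=-\frac{K(p-q)}{2}+Kq\bigl(\eexpect{\sqrt{f_1}}-1\bigr)+(n-K)q\bigl(\eexpect{\sqrt{f_0}}-1\bigr),
\end{align*}
where $f_i\triangleq f(\eexp^{Z_i^t-\nu})$ for $i=0,1$.

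The next step is to expand the square root. Using the elementary identity $\sqrt{1+u}-1=\frac u2-\frac{u^2}{2(\sqrt{1+u}+1)^2}$ with $u=(r-1)s$ and $s=\frac{x}{x+1}$, one gets $\sqrt{f(x)}-1=\frac{(r-1)s}{2}-\frac{(r-1)^2s^2}{2(\sqrt{f(x)}+1)^2}$. Put $s_i\triangleq\frac{\eexp^{Z_i^t-\nu}}{1+\eexp^{Z_i^t-\nu}}=\frac{1}{1+\eexp^{\nu-Z_i^t}}$, so that $\eexpect{s_1}=\eexp^{-\nu}b_t$ by the definition \prettyref{eq:defn_bt} of $b_t$, and $\eexp^\nu\eexpect{s_0}+\eexpect{s_1}=1$ is precisely \prettyref{eq:symmetry1}. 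Then the first-order part $\frac{r-1}{2}\bigl(Kq\,\eexpect{s_1}+(n-K)q\,\eexpect{s_0}\bigr)$ equals $\frac{r-1}{2}\cdot Kq=\frac{K(p-q)}{2}$ (using $(n-K)\eexp^{-\nu}=K$ and $q(r-1)=p-q$), which cancels the $-K(p-q)/2$ term. Writing $(n-K)q=Kq\,\eexp^\nu$, we are left with
\begin{align*}
\log\eexpect{\eexp^{Z_0^{t+1}/2}}=-\frac{Kq(r-1)^2}{2}\left(\eexpect{\frac{s_1^2}{(\sqrt{f_1}+1)^2}}+\eexp^\nu\,\eexpect{\frac{s_0^2}{(\sqrt{f_0}+1)^2}}\right).
\end{align*}

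The crucial observation is that \prettyref{eq:symmetry2}, rewritten in terms of the $s_i$'s, says $\eexpect{s_1^2}+\eexp^\nu\eexpect{s_0^2}=\eexpect{s_1}=\eexp^{-\nu}b_t$. Since $0<x<\infty$ forces $1\le f(x)\le r$, we have $4\le(\sqrt{f_i}+1)^2\le(1+\sqrt r)^2$, so the parenthesized quantity above lies between $\eexp^{-\nu}b_t/(1+\sqrt r)^2$ and $\eexp^{-\nu}b_t/4$; together with the identity $Kq(r-1)^2\eexp^{-\nu}=\lambda$ this yields $-\frac{\lambda b_t}{8}\le\log\eexpect{\eexp^{Z_0^{t+1}/2}}\le-\frac{\lambda b_t}{2(1+\sqrt r)^2}$. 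Finally, $r\ge1$ gives $(1+\sqrt r)^2\le 4r\le 4r^{3/2}=4B$, so the upper bound strengthens to $-\frac{\lambda b_t}{8B}$; exponentiating both inequalities gives the lemma. The one delicate part is this exact bookkeeping — arranging the first-order cancellation and then recognizing the ``conservation law'' $\eexpect{s_1^2}+\eexp^\nu\eexpect{s_0^2}=\eexp^{-\nu}b_t$ hidden in \prettyref{eq:symmetry2}; a cruder term-by-term estimate (bounding $\eexpect{s_1^2}$ and $\eexp^\nu\eexpect{s_0^2}$ each by $\eexp^{-\nu}b_t$) would lose a factor of $2$ in the lower bound.
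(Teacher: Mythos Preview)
Your proof is correct and follows essentially the same structure as the paper's: compute $\log\eexpect{\eexp^{Z_0^{t+1}/2}}$ via the Poisson moment identity, cancel the first-order term using \prettyref{eq:symmetry1}, and reduce the second-order term to $b_t$ via \prettyref{eq:symmetry2}. The only technical difference is that the paper bounds $\sqrt{1+x}$ above and below by second-order Taylor polynomials (namely $\sqrt{1+x}\le 1+\tfrac{x}{2}-\tfrac{x^2}{8(1+A)^{3/2}}$ for $0\le x\le A=p/q-1$, and $\sqrt{1+x}\ge 1+\tfrac{x}{2}-\tfrac{x^2}{8}$), whereas you use the exact identity $\sqrt{1+u}-1=\tfrac{u}{2}-\tfrac{u^2}{2(\sqrt{1+u}+1)^2}$ and then bound the denominator; this is a minor variation and in fact produces the slightly sharper intermediate constant $2(1+\sqrt{r})^2$ before you weaken it to $8B$.
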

\begin{proof}
We prove the upper bound first.
In view of \prettyref{lmm:BPTree}, by defining $f(x)=\frac{ x (p/q) +1 }{x +1}$, we get that
\begin{align*}
\eexp^{ \Lambda_u^{t+1}/2 }   = \eexp^{- K(p-q)/2 }
\prod_{ \ell \in \partial u }  f^{1/2} \left(  \eexp^{ \Lambda^{t}_{\ell \to u} - \nu}   \right).
\end{align*}
Thus,
\begin{align*}
\expect{\eexp^{ Z^{t+1}_0 /2 } }= \eexp^{- K(p-q)/2 }
  \expect{ \left( \expect{ f^{1/2} \left( \eexp^{ Z^{t}_1 - \nu }   \right) } \right )^{L_u}}
 \expect{ \left( \expect{ f^{1/2}\left(  \eexp^{ Z^{t}_0 - \nu }  \right) } \right)^{M_u}}.
\end{align*}
Using the fact that $\expect{c^X} = \eexp^{\lambda (c-1) }$ for $X \sim \Pois(\lambda)$ and $c>0$, we have
\begin{align}  
& \expect{\eexp^{ Z^{t+1}_0 /2 } } = \exp \left[ -  K(p-q)/2 + Kq\left(  \expect{ f^{1/2}  \left( \eexp^{ Z^{t}_1 - \nu }   \right) }  -1 \right) \right. \label{eq:battacharyupper}   \\
&~~~~~~~  \left. + (n-K) q\left( \expect{  f^{1/2 }\left(  \eexp^{ Z^{t}_0 - \nu }  \right)  } -1 \right)   \right]   \nonumber
\end{align}
By the intermediate value form of Taylor's theorem, for any $x\geq 0$ there exists $y$ with $1 \leq y \leq x$ such that
$
\sqrt{1+x} = 1 + \frac{x}{2} - \frac{x^2}{8(1+y)^{3/2}}.
$
Therefore,
\begin{align}
\sqrt{1+x} \leq 1 + \frac{x}{2} -  \frac{x^2}{8(1+A)^{3/2}} , \quad \forall 0 \leq x \leq A. \label{eq:taylor1plusx}
\end{align}
Letting $A \triangleq \frac{p}{q}-1$ and noting that $B = (1+A)^{3/2}$, we have
\begin{align*}
 \left( \frac{ \eexp^{z - \nu } (p/q)  + 1  }{1+ \eexp^{z - \nu} } \right)^{1/2}  &= \left(  1+  \frac{ p/q  -1  }{1+ \eexp^{- z +\nu} } \right)^{1/2} \\
& \le  1+  \frac{1}{2} \frac{ (p/q-1)}{ (1+ \eexp^{- z+\nu }) } - \frac{1}{8B}   \frac{ (p/q-1)^2}{ \left( 1+ \eexp^{- z +\nu } \right)^2  } .
\end{align*}
It follows that
\begin{align*}
& Kq\left(  \expect{ f^{1/2}  \left( \eexp^{ Z^{t}_1 - \nu }   \right) }  -1 \right)
+ (n-K) q\left( \expect{  f^{1/2 }\left(  \eexp^{ Z^{t}_0 - \nu }  \right)  } -1 \right)  \\
\le  & \frac{1}{2} K q (p/q-1) \left(  \expect{ \frac{ 1 } {1+ \eexp^{-  Z_1^t +\nu } } }    + \eexp^{\nu} \expect{ \frac{1 } {1+ \eexp^{-  Z_0^t +\nu } } }   \right) \\
& - \frac{1}{8B}  K q (p/q-1)^2 \left(  \expect{  \frac{ 1 } { ( 1+ \eexp^{-  Z_1^t +\nu } )^2  }   } + \eexp^{\nu}  \expect{  \frac{ 1 } { ( 1+ \eexp^{-  Z_0^t +\nu } )^2  }   }  \right) \\
= &K (p-q)/2 - \frac{1}{8B} K q (p/q-1)^2  \expect{  \frac{ 1 } { 1+ \eexp^{-  Z_1^t +\nu }  }  }  \\
=& K (p-q)/2 - \frac{\lambda}{8B} \underbrace{\expect{  \frac{ \eexp^{  Z_1^t } } { 1+ \eexp^{ Z_1^t - \nu }  }  }}_{b_t},
\end{align*}
where the first equality follows from \prettyref{eq:symmetry1} and \prettyref{eq:symmetry2};
the last equality holds due to $Kq (p/q-1)^2 \eexp^{\nu} = \lambda$. Combining the last displayed equation with \prettyref{eq:battacharyupper}
yields the desired upper bound.

The proof for the lower bound is similar. Instead of \prettyref{eq:taylor1plusx}, we use the
the inequality that $\sqrt{1+x} \geq  1 + \frac{x}{2} - \frac{x^2}{8}$ for all $x\geq 0$, and the
lower bound readily follows by the same argument as above.
\end{proof}

\begin{lemma}[Upper bound on classification error for the random tree model]   \label{lmm:bp_tree_error}
Consider the random tree model with parameters $\lambda,$ $\nu,$  and $p/q.$
Let $\lambda$ be fixed with $\lambda > 1/\eexp.$    There are constants $\bar t_0$ and $\nu_o$ depending
only on $\lambda$ such that if $\nu \geq \nu_o$ and $\nu \geq 2(C-\lambda)$,
then after $\bar t_0 + \log^*(\nu)+2$ iterations of the belief propagation algorithm,  the average error probability
for the MAP estimator  $\hat \tau_u$ of $\tau_u$ satisfies
\begin{equation} \label{eq:p_bound_long}
p_e^t  \leq  \left( \frac{K(n-K)}{n^2} \right)^{1/2}   \exp\left(     -\frac{\lambda}{8B}\exp(\nu \lambda/(2(C-\lambda)    )    \left(1-\eexp^{-\nu/2}  \right)  \right) ,
\end{equation}
where $B=\left( \frac{p}{q} \right)^{3/2}$ and  $C=\lambda\left(\frac{p}{q}+2\right).$
In particular,  if  $p/q = O(1),$ and $r$ is any positive constant,  then
if $\nu$ is sufficiently large,
\begin{equation} \label{eq:p_bound_short}
p_e^t  \leq \frac{K \eexp^{-r\nu} }{n}    = \frac{K}{n} \left( \frac{K}{n-K} \right)^r.
\end{equation}
\end{lemma}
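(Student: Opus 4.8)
The plan is to feed the growth estimates for the sequence $\{b_t\}$ into the Bhattacharyya bound \prettyref{eq:Bhatt}, which gives \prettyref{eq:p_bound_long} directly, and then to read off \prettyref{eq:p_bound_short} by comparing an exponential against a linear function of $\nu$. Recall from the discussion around \prettyref{eq:Bhatt} that for the depth-$t$ tree the MAP estimator satisfies $p_e^t \le \sqrt{\pi_0 \pi_1}\,\rho_B$, where $\pi_0 = (n-K)/n$, $\pi_1 = K/n$, and the Bhattacharyya coefficient can be written as $\rho_B = \expect{\eexp^{Z_0^t/2}}$. Since $\sqrt{\pi_0\pi_1} = (K(n-K)/n^2)^{1/2}$ is exactly the prefactor in \prettyref{eq:p_bound_long}, it suffices to bound $\expect{\eexp^{Z_0^t/2}}$ at $t = \bar t_0 + \log^*(\nu) + 2$.

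To do this I would chain \prettyref{lmm:lowerboundexp} and \prettyref{lmm:b_lower_bnd}. The upper bound in \prettyref{lmm:lowerboundexp} gives $\expect{\eexp^{Z_0^{s+1}/2}} \le \exp(-\tfrac{\lambda}{8B} b_s)$ for every $s \ge 0$, hence $\rho_B \le \exp(-\tfrac{\lambda}{8B} b_{t-1})$. By \prettyref{lmm:b_lower_bnd} there is a constant $\bar t_0'$ depending only on $\lambda$ such that $b_{\bar t_0' + \log^*(\nu) + 2} \ge \exp(\nu\lambda/(2(C-\lambda)))(1 - \eexp^{-\nu/2})$ whenever $\nu \ge \nu_o$ and $\nu \ge 2(C - \lambda)$ -- precisely the hypotheses of the present lemma. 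Choosing the constant here to be $\bar t_0 := \bar t_0' + 1$ aligns the indices, since then $t - 1 = \bar t_0 + \log^*(\nu) + 1 = \bar t_0' + \log^*(\nu) + 2$; substituting into the displayed bound for $\rho_B$ and multiplying by $\sqrt{\pi_0 \pi_1}$ yields \prettyref{eq:p_bound_long}. The monotonicity of $b_t$ from \prettyref{lmm:info_monotone} is what makes absorbing this one-step index shift into $\bar t_0$ harmless.

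For \prettyref{eq:p_bound_short}, use $\nu = \log \frac{n-K}{K}$ to rewrite the prefactor as $(K(n-K)/n^2)^{1/2} = \frac{K}{n} \eexp^{\nu/2}$, so that \prettyref{eq:p_bound_long} becomes $p_e^t \le \frac{K}{n} \eexp^{\nu/2} \exp(-\tfrac{\lambda}{8B} \eexp^{\nu\lambda/(2(C-\lambda))}(1 - \eexp^{-\nu/2}))$. Hence $p_e^t \le \frac{K}{n}\eexp^{-r\nu}$ is equivalent to $\tfrac{\lambda}{8B}\eexp^{\nu\lambda/(2(C-\lambda))}(1 - \eexp^{-\nu/2}) \ge (r + \tfrac12)\nu$. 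When $p/q = O(1)$, say $1 \le p/q \le \kappa$, we have $B \le \kappa^{3/2}$, $\lambda/(2(C-\lambda)) = 1/(2(p/q + 1)) \ge 1/(2(\kappa + 1))$, and $1 - \eexp^{-\nu/2} \ge \tfrac12$ for $\nu \ge 2\log 2$, so the left side is at least $\tfrac{\lambda}{16\kappa^{3/2}} \eexp^{\nu/(2(\kappa+1))}$, an exponential in $\nu$ that eventually dominates $(r + \tfrac12)\nu$; this produces the threshold $\nu(r)$. The final identity in \prettyref{eq:p_bound_short} is just $\eexp^{-\nu} = K/(n-K)$.

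I do not expect a genuine obstacle here: all the analytic content sits in the ingredients already proved -- especially the iterated-exponentiation growth argument and thinning construction behind \prettyref{lmm:b_lower_bnd}, and the exponential-moment computations of \prettyref{lmm:expZ} and \prettyref{lmm:lowerboundexp}. The only things needing a little care are the off-by-one between the iteration index consumed by \prettyref{lmm:lowerboundexp} (namely $b_{t-1}$) and the one produced by \prettyref{lmm:b_lower_bnd} (at index $+2$), handled above through the choice of $\bar t_0$, and checking that $p/q = O(1)$ keeps $B$ and $C - \lambda$ uniformly controlled so that $\nu(r)$ is well defined.
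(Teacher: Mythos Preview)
Your proof is correct and follows essentially the same approach as the paper: combine the Bhattacharyya bound \prettyref{eq:Bhatt} with the upper bound on $\expect{\eexp^{Z_0^t/2}}$ from \prettyref{lmm:lowerboundexp} and the growth estimate for $b_t$ from \prettyref{lmm:b_lower_bnd}, then compare exponential versus linear growth in $\nu$ to get \prettyref{eq:p_bound_short}. You are in fact more careful than the paper about the one-step index mismatch between \prettyref{lmm:lowerboundexp} (which bounds $\expect{\eexp^{Z_0^{t+1}/2}}$ in terms of $b_t$) and \prettyref{lmm:b_lower_bnd} (which bounds $b$ at index $\bar t_0+\log^*(\nu)+2$); your absorption of this shift into the constant $\bar t_0$ via monotonicity is exactly the right fix.
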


\begin{proof}
We use the Bhattacharyya upper bound in  \prettyref{eq:Bhatt} with $\pi_1=\frac{K}{n}$ and $\pi_0=\frac{n-K}{n}$,
and the fact $\rho =  \expect{\eexp^{Z_0^t/2}}.$
Plugging in the lower bound on $b_{\bar t_0+\log^* (\nu) + 2}$ from \prettyref{lmm:b_lower_bnd}
into the upper bound on $\expect{\eexp^{Z_0^t/2}}$ from \prettyref{lmm:lowerboundexp}
yields \prettyref{eq:p_bound_long}.   If $p/q = O(1)$ and $r > 0,$  then for
$\nu$ large enough,
$$
\frac{\lambda}{8B}\exp(\nu \lambda/(2(C-\lambda)    )    \left(1-\eexp^{-\nu/2}  \right)   \geq \nu (r+1/2),
$$
which, together with \prettyref{eq:p_bound_long}, implies \prettyref{eq:p_bound_short}.
\end{proof}

\subsection{Lower bounds on classification error for Poisson tree}  \label{sec:lower_bnd_tree}

The bounds in this section will be combined with the coupling lemmas of 
\prettyref{app:coupling_lemmas}
to yield converse results for recovering
a community by local algorithms.

\begin{lemma}[Lower bounds for Poisson tree model]  \label{lmm:lower_tree}
Fix $\lambda$ with $0 < \lambda \le 1/ \eexp$.
For any estimator  $\hat \tau_u$ of $\tau_u$ based on observation of the tree up to any depth $t,$
the average error probability satisfies
\begin{equation}  \label{eq:p_e_t_Blower_bound}
p_e^t  \geq   \frac{K(n-K) }{n^2 }   \exp \left(   -\lambda \eexp/4   \right),
\end{equation}
and the sum of Type-I and Type-II error probabilities satisfies
\begin{equation}    \label{eq:p_e_t_B_sum_lower_bound}
p_{e,0}^t + p_{e,1}^t \geq \frac{1}{2}   \exp \left(   -\lambda \eexp/4   \right).
\end{equation}
Furthermore, if $p/q=O(1)$ and $\nu\to\infty,$ then
\begin{equation}   \label{eq: p_e_lower_bound}
\liminf_{n \to \infty} \frac{n}{K} p_e^t  \geq  1.
\end{equation}
\end{lemma}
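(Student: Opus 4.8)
The plan is to deduce all three displays from one elementary a priori fact — that below the threshold $\lambda\le 1/\eexp$ the moment recursion does not blow up — combined with the Bhattacharyya bound \prettyref{eq:Bhatt} and, for the last display, a direct Markov argument. Since the MAP estimator minimizes $p_e^t$ and, under uniform priors, minimizes $p_{e,0}^t+p_{e,1}^t$, it suffices throughout to bound the error of the MAP rule from below. The a priori fact is: $b_t\le a_t$ for every $t$, simply because the denominator $1+\eexp^{Z_1^t-\nu}$ in \prettyref{eq:defn_bt} is at least one; combining this with $a_{t+1}=\exp(\lambda b_t)$ from \prettyref{eq:atbt} and $a_0=\expect{\eexp^{Z_1^0}}=1$, an immediate induction gives $a_t\le\eexp$ — hence $b_t\le\eexp$ — for all $t\ge 0$: if $a_t\le\eexp$ then $a_{t+1}\le\exp(\lambda a_t)\le\exp(\lambda\eexp)\le\eexp$ since $\lambda\eexp\le 1$. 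This is precisely the statement that $a=\eexp^{\lambda a}$ has a finite solution exactly when $\lambda\le 1/\eexp$.

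With $b_{t-1}\le\eexp$ in hand, the lower bound of \prettyref{lmm:lowerboundexp} gives $\rho_B=\expect{\eexp^{Z_0^t/2}}\ge\exp(-\lambda\eexp/8)$ for $t\ge 1$, and $\rho_B=1$ for $t=0$ since $Z_0^0\equiv 0$; hence $\rho_B^2\ge\exp(-\lambda\eexp/4)$ for all $t$. Plugging this into the lower bound in \prettyref{eq:Bhatt} with the true priors $\pi_1=K/n$, $\pi_0=(n-K)/n$ yields $p_e^t\ge\pi_0\pi_1\rho_B^2\ge\frac{K(n-K)}{n^2}\exp(-\lambda\eexp/4)$, which is \prettyref{eq:p_e_t_Blower_bound}. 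Because the Bhattacharyya coefficient $\rho_B$ is prior-free, applying the same lower bound with uniform priors $\pi_0=\pi_1=1/2$ (for which the MAP rule minimizes $\tfrac12(p_{e,0}^t+p_{e,1}^t)$ over all estimators) gives $\tfrac12(p_{e,0}^t+p_{e,1}^t)\ge\tfrac14\rho_B^2$, i.e.\ \prettyref{eq:p_e_t_B_sum_lower_bound}.

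For \prettyref{eq: p_e_lower_bound} the estimate above only certifies a constant, not $1-o(1)$ (indeed $b_t$ stays bounded away from $0$, keeping $\rho_B$ bounded away from $1$), so I would instead compare directly with the trivial estimator $\hat\tau_u\equiv 0$. The MAP rule declares $\hat\tau_u=0$ exactly on the event $\{\Lambda_u^t\le\nu\}$, hence $p_e^t\ge\frac Kn\,\prob{\Lambda_u^t\le\nu\mid\tau_u=1}=\frac Kn\,\prob{\eexp^{Z_1^t}\le\eexp^\nu}$; by Markov's inequality and $\expect{\eexp^{Z_1^t}}=a_t\le\eexp$ (uniformly in $t$), $\prob{\eexp^{Z_1^t}>\eexp^\nu}\le\eexp^{1-\nu}$, so $p_e^t\ge\frac Kn(1-\eexp^{1-\nu})$ and $\frac nK p_e^t\ge 1-\eexp^{1-\nu}\to 1$ as $\nu\to\infty$. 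The only genuine idea here is the uniform bound $b_t\le a_t\le\eexp$ and the recognition that $1/\eexp$ is precisely the threshold past which the recursion $a_{t+1}=\exp(\lambda b_t)$ ceases to be stable; the rest is routine bookkeeping with standard binary-hypothesis-testing inequalities, the one subtlety being that the Bhattacharyya route is inherently too weak for \prettyref{eq: p_e_lower_bound} and must be replaced there by the comparison with random guessing.
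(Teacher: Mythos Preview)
Your argument is correct. For \prettyref{eq:p_e_t_Blower_bound} and \prettyref{eq:p_e_t_B_sum_lower_bound} it coincides with the paper's proof: both establish $b_t\le \eexp$ for all $t$ (you via $a_t\le \eexp$, the paper directly via $b_{t+1}\le \eexp^{\lambda b_t}$), feed this into the lower bound of \prettyref{lmm:lowerboundexp} to get $\rho_B\ge \exp(-\lambda\eexp/8)$, and then invoke the Bhattacharyya lower bound in \prettyref{eq:Bhatt} with the two choices of priors.

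For \prettyref{eq: p_e_lower_bound} you take a genuinely different and more elementary route. The paper appeals to the Gaussian approximation of $\Lambda_u^t$ (a Berry--Esseen estimate for compound Poisson sums, \prettyref{lmm:treeGaussian}) to deduce $p_{e,1}^t\to 1$; this is where the hypothesis $p/q=O(1)$ is actually used, since it is needed for the CLT to kick in. Your argument bypasses this entirely: from $a_t=\expect{\eexp^{Z_1^t}}\le \eexp$ and Markov you get $\prob{Z_1^t\ge\nu}\le \eexp^{1-\nu}$, hence $p_{e,1}^t\ge 1-\eexp^{1-\nu}\to 1$ for the MAP rule, and $\tfrac{n}{K}p_e^t\ge p_{e,1}^t$. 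This is shorter, avoids the auxiliary CLT machinery, and in fact does not use $p/q=O(1)$ at all. The paper's approach has the side benefit of yielding the full asymptotic Gaussian law of the messages (useful elsewhere), but for the specific claim \prettyref{eq: p_e_lower_bound} your Markov argument is the cleaner one.
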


\begin{proof}
\prettyref{lmm:lowerboundexp} shows that the Bhattacharyya coefficient, given by
$\rho_B=  \eexpect{\eexp^{Z_0^{t+1} /2} },$   satisfies $\rho_B \geq \exp \left(  - \frac{\lambda}{8} b_{t} \right) .$
Note that $b_{t+1}\leq a_{t+1} = \eexp^{\lambda b_t}$ for $t\geq 0$ and
$ b_0 = \frac{1}{1+\eexp^{-\nu}}$. It follows from induction and the assumption
$\lambda e \le 1$ that $b_t \leq \eexp$ for all $t\geq 0.$
Therefore, $\rho_B \geq  \exp \left(   -\lambda \eexp/8  \right).$  Applying
the Bhattacharyya lower bound  on $p_e^t$ in \eqref{eq:Bhatt} (which holds for any estimator)
with $(\pi_0,\pi_1)=(\frac{n-K}{n}, \frac{K}{n})$
yields \eqref{eq:p_e_t_Blower_bound} and with  $(\pi_0,\pi_1)=(1/2, 1/2)$ yields \eqref{eq:p_e_t_B_sum_lower_bound},
respectively.

It remains to prove \eqref{eq: p_e_lower_bound}, so suppose $p/q=O(1)$ and $\nu\to\infty.$
It suffices to prove \eqref{eq: p_e_lower_bound} for the MAP estimator, $\hat \tau_u = \indc{\Lambda_u^t\geq \nu},$
because the MAP estimator minimizes the average error probability.
\nbnew{Second of three references to arXiv version. When type is arXiv the lemma in appendix is referenced here.}
 \ifthenelse{\equal{\type}{arXiv}}{\prettyref{lmm:treeGaussian} implies that, as $n\to \infty,$  the  Type-I and Type-II error probabilities satisfy,}
 {It is shown in the arXiv version of this paper  \cite{HajekWuXu_one_beyond_spectral15}
  that the distribution of  $\Lambda_u^{t+1}$ is well approximated
by the $\calN(\lambda b_t/2, \lambda b_t)$ distribution if $\sigma_u=1$ and by the
 $\calN(-\lambda b_t/2,\lambda b_t)$ distribution if $\sigma_u=0.$
It follows that as $n\to \infty,$  the  Type-I and Type-II error probabilities satisfy,}
$$
p^t_{e,1} -  Q\left( \frac{\lambda b_{t-1}/2 - \nu}{\sqrt{ \lambda b_{t-1}}}  \right) \to 0  ~~~\mbox{and}~~~
p^t_{e,0} -  Q\left( \frac{\lambda b_{t-1}/2+\nu}{\sqrt{\lambda b_{t-1}}} \right)  \to 0,
$$
where $Q$ is the complementary CDF of the standard normal distribution. 
Recall that $ b_t \leq \eexp$ for all $t\geq 0.$
Also, $b_t$ is bounded away from zero, because $b_t \geq b_0 =\frac{1}{1+\eexp^{-\nu}}.$
Since $\nu \to \infty$, we have that $p^t_{e,1} \to 1$. By definition, $\frac{n}{K} p_e^t \ge p_{e,1}^t$
and consequently $\liminf_{n \to \infty} \frac{n}{K} p_e^t  \geq  1.$
\end{proof}

\section{Proofs of main results of belief propagation}  \label{section:message_passing}
\begin{proof}[Proof of \prettyref{thm:BP_Bernoulli}]



The proof basically consists of combining  \prettyref{lmm:bp_tree_error} and 
the coupling lemma \ref{lmm:treecoupling}.
\prettyref{lmm:bp_tree_error} holds by the assumptions $\frac{K^2(p-q)^2}{(n-K)q}\equiv \lambda$ for a constant
$\lambda$ with $\lambda >1/\eexp,$   $\nu \to \infty,$  and $p/q =O(1).$
\prettyref{lmm:bp_tree_error} also determines the given expression for $t_f.$   In turn, the assumptions $(np)^{\log^* \nu}  = n^{o(1)}$  and $e^{\log^*\nu} \le \nu= n^{o(1)} $
ensure that $(2+np)^{t_f}=n^{o(1)},$
so that \prettyref{lmm:treecoupling} holds. 

A subtle point is that  the performance bound of \prettyref{lmm:bp_tree_error} is for the
MAP rule \prettyref{eq:map_rule} for detecting the label of the root  vertex.   The same rule could be implemented
at each vertex of the graph $G$ which has a locally tree like neighborhood of radius $t_0+ \log^* (\nu) +  2$  by using
the estimator $\hat C_o = \{ i : R_{i}^{t_f} \geq \nu \}.$   We first bound the performance for $\hat C_o $
and then do the same for $\hat C$ produced by Algorithm \ref{alg:MP_commun}.
(We could have taken $\hat C_o$ to be the output of Algorithm \ref{alg:MP_commun}, but returning
a constant size estimator leads to simpler analysis of the algorithm for exact recovery.)

The average probability of misclassification of any given vertex $u$ in $G$  by $\hat C_o$  (for
prior distribution  $(\frac{K}{n},\frac{n-K}{n})$)  is less than or equal to the sum of two
terms.   The first term is  $n^{-1 + o(1)}$ in case $|C^*|\equiv K$ or  $n^{-1/2 + o(1)}$ in the other case
(due to failure of tree coupling of radius $t_f$ neighborhood--see \prettyref{lmm:treecoupling}).
The second term is  $\frac{K}{n} \eexp^{-\nu r}$  (bound on average error probability for the detection
problem associated with a single vertex $u$ in the tree model--see \prettyref{lmm:bp_tree_error}.)
Multiplying by $n$ bounds the expected total number of misclassification errors,  $ \expect{  |C^* \triangle \hat{C}_o|  };$
dividing by $K$ gives the bounds stated in the lemma with $\hat C$ replaced by $\hat C_o$ and the factor 2 dropped in
the bounds.

The set $\hat C_o$ is defined by a threshold condition whereas $\hat C$ similarly corresponds to using a
data dependent threshold and tie breaking rule to arrive at $|\hat C| \equiv K.$     Therefore, with probability
one, either $\hat C_o \subset \hat C$ or $\hat C \subset \hat C_o.$  Together with the fact $|\hat C| \equiv K$
we have
$$
|C^* \triangle \hat C | \leq | C^*\triangle \hat C_o | +   | \hat C_o \triangle \hat C | =  | C^*\triangle \hat C_o |  +| | \hat C_o | - K|,
$$
and furthermore,
$$
| | \hat C_o | - K| \leq | | \hat C_o | - |C^*| |  +| |C^*| - K| \leq   | C^*\triangle \hat C_o | +  | |C^*| - K| .
$$
So
$$
|C^* \triangle \hat C | \leq  2   | C^*\triangle \hat C_o |   + \|C^*| - K|.
$$
If $|C^*| \equiv K$ then $|C^* \triangle \hat C | \leq  2   | C^*\triangle \hat C_o |$ and \prettyref{eq:BP_weak_fixed_size} follows
from what was proved for $\hat C_o.$
In the other case, $\expect{ \|C^*| - K| } \leq n^{\frac{1}{2} + o(1)},$ and \prettyref{eq:BP_weak_random_size} follows
from what was proved for $\hat C_o.$

As for the computational complexity guarantee, notice that in each BP iteration, each vertex $i$ needs to
transmit the outgoing message $R_{i \to j}^{t+1}$ to its neighbor $j$ according to \prettyref{eq:mp_commun}.
To do so, vertex $i$ can first compute $R_{i}^{t+1}$ and then subtract neighbor $j$'s contribution
from it to get the desired message $R_{i \to j}^{t+1}$. In this way, each vertex $i$ needs $O(| \partial i| )$ basic
operations and the total time complexity of one BP iteration is $O( |E(G)| )$, where $|E(G)|$ is the total number of edges.
Since $\nu \leq n,$ at most $O(\log^* n)$ iterations are needed and hence the algorithm terminates in $O( |E(G)| \log^\ast n)$ time.
\end{proof}

\begin{proof}[Proof of \prettyref{thm:MP_plus_Bernoulli}]
The theorem follows from the fact that the belief propagation algorithm achieves
weak recovery, even if the cardinality $|C^*|$ is random and is only known to satisfy
$\prob{  |~ |C^*| - K| \geq \sqrt{3K\log n}  }  \leq n^{-1/2+o(1)}$ and the results in  \cite{HajekWuXu_one_info_lim15}.
We include the proof for completeness.
Let $C_k^\ast = C^\ast \cap ([n]\backslash S_k)$ for $1\le k \le 1/\delta$.
As explained in \prettyref{rmk:C_bound}, $C_k^\ast$ is obtained by sampling the
vertices in $[n]$ without replacement, and thus the distribution of $C_k^\ast$ is hypergeometric
with $\expect{ |C_k^\ast| } = K (1-\delta)$.  A result of  Hoeffding \cite{Hoeffding63} implies that the
Chernoff bounds for the  $\Binom\left(n(1-\delta), \frac{K}{n} \right) $  distribution  also hold for $|C_k^*|,$
so \prettyref{eq:Chernoff1} and \prettyref{eq:Chernoff2} with $np=K(1-\delta)$ and $\epsilon=\sqrt{ 3\log n/ [K(1-\delta)]  }$
imply $$\prob{  \big| |C_k^*| - K(1-\delta) \big| \geq \sqrt{3K(1-\delta) \log n}  }  \leq  2 n^{-1}\leq n^{-1/2+o(1)}.$$
Hence, it follows from \prettyref{thm:BP_Bernoulli}  and the condition $\lambda >1/\eexp$ that
\begin{align*}
\prob{ | \hat{C}_k \Delta C_k^\ast | \le \delta K \text{ for } 1\le k \le 1/\delta } \to 1,
\end{align*}
as $n \to \infty$, where $\hat{C}_k$ is the output
of the BP algorithm in Step 3 of  \prettyref{alg:MPplus_Bernoulli}.
Applying \cite[Theorem 3]{HajekWuXu_one_info_lim15} together with assumption \prettyref{eq:planted_dense_exact_suff1XX}, we get that  $\Prob\{\tilde{C} = C^*\} \to 1$  as $n \to \infty.$
\end{proof}

\begin{proof}[Proof of \prettyref{thm:planted_BP_converse}]
The average error probability, $p_{e}$, for classifying the label of a vertex in the
graph $G$ is greater than or equal to the lower bound \eqref{eq:p_e_t_Blower_bound}
on average error probability for the tree model, minus the upper bound, $n^{-1+o(1)},$ on
the coupling error provided  by \prettyref{lmm:treecoupling}.
Multiplying the lower bound on average error probability per vertex by $n$  yields
\eqref{eq:lower_bnd_comm_Bhat}.   Similarly,
$p_{e,0}$  and $p_{e,1},$ for the community recovery  problem can be approximated by
the respective conditional error probabilities for the random tree model by the last part of the
coupling lemma, \prettyref{lmm:treecoupling},  so
\eqref{eq:Psucc_bnd} follows from \eqref{eq:p_e_t_B_sum_lower_bound}.

By \prettyref{lmm:lower_tree},  assuming $p/q =O(1)$ and $\nu\to \infty$,
$\liminf_{n\to\infty} \frac{n}{K} \tilde p_e^t \geq 1,$  where $\tilde p_e^t$ is the
average error probability for any estimator for the corresponding random tree network.   By the
coupling lemma, \prettyref{lmm:treecoupling},
$| \tilde p_e^t -  p_e^t |\leq n^{-1+o(1)}.$
By assumption that $\frac{n}{K}=n^{o(1)}$, $| \frac{n}{K} \tilde p_e^t -  \frac{n}{K} p_e^t |  \leq  n^{-1+o(1)}.$
The conclusion $ \liminf_{n\to\infty}  \frac{n}{K}p_e \geq 1$ follows from the triangle inequality.
\end{proof}


%
%
%
%
%

\begin{appendix}

\section{Degree-thresholding when $K\asymp n$}\label{app:degreethreshold}
A simple algorithm for recovering $C^\ast$ is degree-thresholding. Specifically, let $d_i$ denote the degree of vertex $i$. Then $d_i$ is distributed
as the sum of two independent random variables, with distributions $\Binom(K-1,p)$ and $\Binom(n-K, q),$ respectively,
if $i \in C^\ast$, while $d_i \sim \Binom(n-1, q)$ if $i \notin C^\ast$.
The mean degree difference between these two distributions is $(K-1)(p-q)$, and the degree variance is $O(nq)$.
By assuming $p/q$ is bounded, it follows from the Bernstein's inequality that
$|d_i -\Expect[d_i]| \ge (K-1)(p-q)/2$ with probability at most $\eexp^{-\Omega( (K-1)^2 (p-q)^2 / (nq ) ) }$.
Let  $\hat{C}$ be the set of vertices with degrees larger than
$nq+(K-1)(p-q)/2$ and thus $\eexpect{ |\hat C  \triangle C^*| } = n \eexp^{-\Omega( (K-1)^2 (p-q)^2 / (nq ) ) }.$
Hence, if $ (K-1)^2 (p-q)^2/ (n q)  = \omega( \log \frac{n}{K})$,
then $\eexpect{ |\hat C  \triangle C^*| } =o(K)$, i.e., weak recovery is achieved.
In the regime $K \asymp n-K \asymp n$ and $p$ is bounded away from $1$,  the necessary and sufficient condition for the existence of estimators providing weak recovery, 
is $K^2 (p-q)^2/(nq) \to \infty$ as shown in \cite{HajekWuXu_one_info_lim15}.
Thus, degree-thresholding provides weak recovery in this regime whenever it is information theoretically possible.
Under the additional condition \prettyref{eq:planted_dense_exact_suff1XX}, an algorithm attaining exact recovery can be built using degree-thresholding
for weak recovery followed by a linear time voting procedure,  as in  \prettyref{alg:MPplus_Bernoulli} (see  \cite[Theorem 3]{HajekWuXu_one_info_lim15} and its proof). In the regime $\frac{n}{K} \log \frac{n}{K} = o(\log n)$, or equivalently $ K=\omega( n \log \log n/ \log n)$, the information-theoretic necessary condition for exact recovery given by  \prettyref{eq:MLE_comm_nec_cond2} and
\prettyref{eq:planted_dense_exact_necc1XX} imply that $ K^2 (p-q)^2/ (n q)  = \omega( \log \frac{n}{K})$, and hence in this regime  the degree-thresholding attains exact recovery
whenever it is information theoretically  possible.

\section{Comparison with information theoretic limits} \label{app:comparsion_info}
As noted in the introduction, in the regime $K=\Theta(n)$, degree-thresholding achieves weak recovery and, if a voting procedure is also used, exact recovery whenever it is information theoretically possible.
This section compares the recovery thresholds by belief propagation to the information-theoretic thresholds established in \cite{HajekWuXu_one_info_lim15},  in the regime of
\begin{equation}
K=o(n), \quad np  = n^{o(1)},  \quad p/q = O(1),
  \label{eq:focus}
\end{equation}
which is the main focus of this paper.

The information-theoretic threshold for weak recovery is established in \cite[Corollary 1]{HajekWuXu_one_info_lim15}, which, in the regime \prettyref{eq:focus}, reduces to the following:
If
\begin{align}
\liminf_{n \to \infty} \frac{K d(p\| q) }{2 \log \frac{n}{K} } >1, \label{eq:MLE_comm_suff_cond2}
\end{align}
then weak recovery is possible. On the other hand, if weak recovery is possible, then
\begin{align}
\liminf_{n \to \infty} \frac{K d(p\| q) }{2 \log \frac{n}{K} } \ge 1. \label{eq:MLE_comm_nec_cond2}
\end{align}
To compare with belief propagation, we rephrase the above sharp threshold in terms of the signal-to-noise ratio $\lambda$ defined in \prettyref{eq:lambda}.
Note that $d(p\|q) = ( p\log \frac{p}{q} + q - p) (1+o(1))$ provided that $p/q=O(1)$ and $p \to 0$. Therefore the information-theoretic weak recovery threshold is given by
\begin{equation}
  \lambda > (C(p/q) + \epsilon) \frac{K}{n} \log \frac{n}{K},
  \label{eq:lambda-weak}
\end{equation}
for any $\epsilon > 0$, where $C(\alpha) \triangleq \frac{2 (\alpha-1)^2}{1 - \alpha + \alpha \log \alpha}$.
In other words, in principle weak recovery only demands a vanishing signal-to-noise ratio $\lambda = \Theta(\frac{K}{n} \log \frac{n}{K})$, while, in contrast, belief propagation requires $\lambda > 1/e$ to achieve weak recovery.
No polynomial-time algorithm is known to succeed for $\lambda \leq 1/e$, suggesting that
computational complexity constraints might incur a severe penalty on the statistical optimality in the sublinear regime of $K=o(n)$.

\medskip
Next we turn to exact recovery.
The information-theoretic optimal threshold has been established in \cite[Corollary  3]{HajekWuXu_one_info_lim15}.
In the regime of interest \prettyref{eq:focus},
exact recovery is possible via the maximum likelihood estimator (MLE) provided that \prettyref{eq:MLE_comm_suff_cond2} and \prettyref{eq:planted_dense_exact_suff1XX} hold.
Conversely, if exact recovery is possible, then \prettyref{eq:MLE_comm_nec_cond2} and
\begin{align}
\liminf_{n \to \infty}  \frac{ K  d(\tau^\ast \| q) }{\log n } \ge 1 \label{eq:planted_dense_exact_necc1XX}
\end{align}
must hold.
Notice that the information-theoretic sufficient condition for exact recovery has two parts: one is the information-theoretic sufficient
condition \prettyref{eq:MLE_comm_suff_cond2} for weak recovery; the other is the sufficient condition  \prettyref{eq:planted_dense_exact_suff1XX}
for the success of the linear time voting procedure.   Similarly, recall that the sufficient condition for exact recovery by belief propagation also has two parts:
one is the sufficient condition $\lambda >1/\eexp$ for weak recovery, and the other is again   \prettyref{eq:planted_dense_exact_suff1XX}.

Clearly, the information-theoretic  sufficient conditions for exact recovery and $\lambda  >1/\eexp$, which is needed for weak recovery by local algorithms, are both at least as strong as the information theoretic
necessary conditions \prettyref{eq:MLE_comm_nec_cond2} for weak recovery.
It is thus of interest to compare them by assuming that \prettyref{eq:MLE_comm_nec_cond2} holds.
If $p/q$ is bounded, $p$ is bounded away from $1$, and  \prettyref{eq:MLE_comm_nec_cond2} holds,
then  $d(\tau^\ast \|q) \asymp d(p\|q) \asymp \frac{(p-q)^2}{q}$ as shown in \cite{HajekWuXu_one_info_lim15}. So under those conditions on $p, q$ and \prettyref{eq:MLE_comm_nec_cond2},
and if $K/n$ is bounded away from $1$,
\begin{align}   \label{eq:exact_vs_poly_Bernoulli_XX}
\frac{Kd(\tau^\ast \| q)}{\log n } \asymp  \frac{K(p-q)^2}{q \log n } \asymp  \left(   \frac{n }{K \log n} \right) \lambda.
\end{align}
Hence, the information-theoretic sufficient condition for exact recovery \prettyref{eq:planted_dense_exact_suff1XX}
demands a signal-to-noise ratio
\begin{equation}
\lambda = \Theta\pth{\frac{K \log n}{n}}.
  \label{eq:snr-exact}
\end{equation}

Therefore, on one hand, if $K=\omega(n/\log n)$,
then condition \prettyref{eq:planted_dense_exact_suff1XX} is stronger than $\lambda >1/\eexp$, and
thus  condition \prettyref{eq:planted_dense_exact_suff1XX} alone
is sufficient for local algorithms to attain exact recovery.
On the other hand, if $K=o(n/\log n)$, then $\lambda>1/ \eexp$ is stronger
than condition \prettyref{eq:planted_dense_exact_necc1XX}, and thus for local algorithms to achieve exact recovery, it requires $\lambda > 1/e$, which far exceeds the
information-theoretic optimal level \prettyref{eq:snr-exact}.
The critical value of $K$ for this crossover is  $K=\Theta\left(\frac{n}{\log n} \right).$
To determine the precise crossover point, we solve for $K^*$ which satisfies
\begin{align}
\frac{ K  d(\tau^\ast \| q) }{\log n } & =1,  \label{eq:exact_threshold_equ} \\ 
\lambda = \frac{K^2(p-q)^2}{nq} & = \frac{1}{e}. \label{eq:bp_threshold_equ}
\end{align}
Let $c=p/q=O(1).$ It follows from \prettyref{eq:bp_threshold_equ} that 
\begin{align}
q = \frac{n}{K^2 (c-1)^2 e }. \label{eq:q_expression}
\end{align}
Plugging \prettyref{eq:q_expression} into the definition of $\tau^*$ in \prettyref{eq:deftau}, we get that
\begin{align*}
\tau^\ast = \left(1+o(1) \right) q \frac{ c-1  }{ \log c  } .
\end{align*}
It follows that 
$$
d \left(\tau^* \| q \right) = \left(1+o(1) \right) q \left( 1 - \frac{c-1}{\log c} \log \frac{e \log c}{c-1} \right).
$$
Combining the last displayed equation with  \prettyref{eq:exact_threshold_equ} and \prettyref{eq:q_expression}
yields the crossover point $K^*$ given by
$$
K^* = \frac{n}{\log n} \left(\rho_{\sf BP}(c) + o(1) \right),
$$ 
where 
$$
\rho_{\sf BP}(c) = \frac{1}{e (c-1)^2} \left({1 - \frac{c-1}{\log c} \log \frac{e \log c}{c-1} } \right).
$$
\prettyref{fig:exact_phase_diagram} shows the phase diagram with $K=\rho n /\log n$ for a fixed constant $\rho$. 
The line $\{ (\rho, \lambda): \lambda = 1/e\}$ corresponds to the
weak recovery, while the line $\{(\rho,\lambda): \lambda = \rho/( e \rho_{\sf BP} ) \}$
corresonds to the information-theoretic exact recovery threshold. Therefore, 
BP plus voting (\prettyref{alg:MPplus_Bernoulli}) achieves optimal exact recovery whenever the former line lies below the latter, or equivalently, $\rho>\rho_{\sf BP}(c) ).$ 

\section{Coupling lemma}  \label{app:coupling_lemmas}  

Consider a sequence of planted dense subgraph models  $G=(E,V)$ as described in the introduction.  
For each $i\in V,$  $\sigma_i$ denotes the indicator of $i\in C^*.$     For $u \in V,$
let $G_u^t$ denote the subgraph of $G$ induced by the vertices whose distance from $u$ is at most $t.$
Recall from \prettyref{sec:BP_Bernoulli_tree} that $T_u^t$ is defined similarly for the random tree graph, and
$\tau_i$ denotes the label of a vertex $i$ in the tree graph.
The following lemma  shows there is a coupling such that
$\left(G_u^{t_f}, \sigma_{G_u^{t_f}}\right) = \left(T_u^{t_f}, \tau_{T_u^{t_f}}\right)$ with probability converging to $1$,
where $t_f$ is growing slowly with $n.$
A version of the lemma for fixed $t$, assuming $p, q=\Theta(1/n)$ is proved in
 \cite[Proposition 4.2]{Mossel12}, and the argument used there extends to prove this version.
 \ifthenelse{\equal{\type}{arXiv}}{}{The proof is provided in the arXiv version of this paper  \cite{HajekWuXu_one_beyond_spectral15}.}
\nbnew{Third of three references to arXiv version.}

\begin{lemma}[Coupling lemma]   \label{lmm:treecoupling}
Let $d=np.$
Suppose $p, q, K$ and  $t_f$  depend on $n$  such that $t_f$ is positive integer valued,
and $(2+d)^{t_f }= n^{o(1)}.$
Consider an instance of the planted dense subgraph model.
Suppose that $C^*$ is random and all $\binom{n}{|C^*|}$ choices of $C^*$ are equally
likely give its cardinality, $|C^*|.$  (If this is not true, this lemma still applies to the random graph obtained by randomly,
uniformly permuting the vertices of $G$.)
If the planted dense subgraph model (\prettyref{def:pds_model}) is such that $|C^*|\equiv K,$ then  for any fixed $u \in [n]$, there exists a coupling between
$(G, \sigma )$ and $(T_u, \tau_{T_u})$ such that
\begin{align}    \label{eq:TV_convergence_det}
\prob{\left( G_u^{t_f}, \sigma_{G_u^{t_f}}\right) = \left(T_u^{t_f}, \tau_{T_u^{t_f}} \right) } \ge 1 -  n^{-1+o(1) }.
\end{align}
If the planted dense subgraph model is such that $|C^*|\sim \Binom(n,K/n),$ then  for any fixed $u \in [n]$, there exists a coupling
between $(G, \sigma )$ and $(T_u, \tau_{T_u})$ such that
\begin{align}    \label{eq:TV_convergence}
\prob{\left( G_u^{t_f}, \sigma_{G_u^{t_f}}\right) = \left(T_u^{t_f}, \tau_{T_u^{t_f}} \right) } \ge 1 -  n^{-1/2+o(1) }.
\end{align}
If the planted dense subgraph model  is such that $K \geq 3\log n$ and $|C^*|$ is random such that
$\prob{  | |C^*| - K| \geq \sqrt{3K\log n}  }  \leq n^{-1/2+o(1)}$,   then there exists a coupling
between $(G, \sigma )$ and $(T_u, \tau_{T_u})$ such that \prettyref{eq:TV_convergence} holds.

Furthermore, the bounds stated remain true if the label, $\sigma_u$, of the vertex $u$ in the planted community graph, and
the label $\tau_u$ of the root vertex in the tree graph, are both conditioned to be 0 or are both conditioned to be one.
\end{lemma}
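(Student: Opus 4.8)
The plan is to run a breadth-first exploration of the radius-$t_f$ neighborhood of $u$ in $G$ and to couple it, one generation at a time, with the Galton--Watson construction of $T_u$ from \prettyref{sec:BP_Bernoulli_tree}. This is the strategy of \cite[Proposition~4.2]{Mossel12}; what needs adapting is that here $t_f$ grows (slowly) with $n$; that the edge probabilities satisfy only $np=n^{o(1)}$, equivalently $p=n^{-1+o(1)}$ (since $(2+np)^{t_f}=n^{o(1)}$ already forces $2+np=n^{o(1)}$), rather than $p,q=\Theta(1/n)$; and that in the random-cardinality cases one must track the gap between the true community size $|C^*|$ and the reference value $K$ that parametrizes $T_u$.

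First I would treat $|C^*|\equiv K$. After the optional uniform permutation of vertices, $C^*$ is uniform over the size-$K$ subsets, so $\prob{\sigma_u=1}=K/n=\prob{\tau_u=1}$ and the two root labels are coupled to be equal with no error. Maintaining the set $A$ of already-revealed vertices, when a vertex $v$ carrying label $\sigma_v$ is processed, the independent-splitting property of the Poisson (used exactly as in the proof of \prettyref{lmm:BPTree}) together with the independence of edges shows that, conditionally, $v$ has $\Binom(K-|C^*\cap A|,p)$ or $\Binom(K-|C^*\cap A|,q)$ undiscovered neighbors inside $C^*$ according to $\sigma_v$, and $\Binom((n-K)-|A\setminus C^*|,q)$ undiscovered neighbors outside $C^*$; I couple these with the $\Pois(Kp)$, $\Pois(Kq)$, $\Pois((n-K)q)$ offspring counts prescribed for $T_u$. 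By Le Cam's inequality and $\dTV(\Pois(\lambda_1),\Pois(\lambda_2))\le|\lambda_1-\lambda_2|$, the local coupling at $v$ fails with probability at most a constant times $Kp^2+nq^2+|A|(p+q)$. Declaring the exploration to fail if some local coupling fails or if $G_u^{t_f}$ is not a tree, it then suffices to bound $\prob{G_u^{t_f}\text{ has a cycle}}+\Expect[\sum_{v}(Kp^2+nq^2+|A_v|(p+q))]$, the sum over processed $v$ with $A_v$ the set revealed before $v$. A first-moment path count (the expected number of self-avoiding walks of length $\ell$ from $u$ is at most $(np)^\ell$, since $p\ge q$) bounds the first term by $n^{-1}(np)^{O(t_f)}=n^{-1+o(1)}$. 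For the second term, dominating every vertex's neighbor count by $\Binom(n,p)\preceq\Pois(\max(2np,2))$ makes the exploration dominated by a supercritical Galton--Watson process run for $t_f=o(\log n)$ generations, whence $\Expect[N]$ and $\Expect[N^2]$ (with $N$ the number of vertices within distance $t_f$ of $u$) are both $n^{o(1)}$; since $\sum_v|A_v|\le N^2$, the second term is at most $(Kp^2+nq^2)\Expect[N]+(p+q)\Expect[N^2]=n^{-1+o(1)}$ using $p=n^{-1+o(1)}$ and $q\le p$. On the complement of the failure event the revealed labeled subgraph equals $(T_u^{t_f},\tau_{T_u^{t_f}})$, giving \prettyref{eq:TV_convergence_det}.

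For $|C^*|\sim\Binom(n,K/n)$ and for the general concentration hypothesis, I would condition on $|C^*|=K'$ (permuting first, so $C^*$ is then uniform over size-$K'$ subsets) and rerun the coupling of the size-$K'$ exploration against the size-$K$ tree. A Chernoff bound (valid since $K\ge3\log n$) gives $\prob{|K'-K|\ge\sqrt{3K\log n}}\le2/n$ in the binomial case, and this tail is $n^{-1/2+o(1)}$ by hypothesis in the third case. On the complementary event $|K'-K|\le\sqrt{3K\log n}\le n^{1/2+o(1)}$, the only new errors are the root-label discrepancy $|K'-K|/n$ and, at each processed vertex, the parameter shift $|K'-K|p$ (or $|K'-K|q$) between the Poisson laws with parameter $K'$ and with $K$; in expectation these sum to at most $|K'-K|(p+q)\Expect[N]+|K'-K|/n=n^{-1/2+o(1)}$, which dominates and gives \prettyref{eq:TV_convergence}. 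For the conditional statement one builds the same layered coupling but starting from the common root label $b\in\{0,1\}$: the root then contributes no mismatch, every later generation is coupled exactly as above, and all estimates go through verbatim, so the bounds are unchanged.

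The Le Cam bookkeeping is routine; the delicate point is uniformity over the regime. Unlike \cite{Mossel12}, where $t$ is fixed, here $np$ ranges up to $n^{o(1)}$ (so a single-generation Chernoff bound need not even be polynomially small) while $t_f$ may be as large as $o(\log n)$, so one should not try to prove a high-probability bound on $N$ but instead bound the failure probability by the \emph{expected} number of local coupling failures, which needs only $\Expect[N]$ and $\Expect[N^2]$ equal to $n^{o(1)}$. These moment bounds, the path-counting cycle bound, and the accumulated Le Cam errors all remain $n^{-1+o(1)}$ because the single hypothesis $(2+np)^{t_f}=n^{o(1)}$ forces $np=n^{o(1)}$, $t_f=o(\log n)$, and $(np)^{O(t_f)}=n^{o(1)}$ all at once.
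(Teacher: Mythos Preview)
Your proposal is correct and follows the same layer-by-layer BFS coupling as the paper, but you organize the error accounting differently. The paper defines explicit high-probability events $C^t=\{|\partial G_u^s|\le 4(2+2d)^s\log n\text{ for all }s\le t\}$ and shows $\prob{C^t\mid C^{t-1}}\ge 1-n^{-4/3}$ via Chernoff; conditioned on $C^{t-1}$ it then bounds the cycle events ($A^t,B^t$) and the Le Cam errors by union bounds over the at most $n^{o(1)}$ boundary vertices. You instead bound the failure probability by the \emph{expected} number of local coupling failures, invoking only $\Expect[N]$ and $\Expect[N^2]=n^{o(1)}$ for the dominating Galton--Watson population; the cycle probability you handle by a first-moment path count. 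Both routes land on $n^{-1+o(1)}$ and $n^{-1/2+o(1)}$ for the same reasons.

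Your remark that ``a single-generation Chernoff bound need not even be polynomially small'' is true as stated (when $np=O(1)$, $e^{-np/3}$ is only a constant), but the paper sidesteps this by inserting a multiplicative $\log n$ cushion into the size bound, so that the relevant mean is always at least $4\log n$ and the Chernoff tail is $\le n^{-4/3}$. So the Chernoff approach is not actually blocked; your moment-based alternative simply avoids the need for the cushion and is a bit more streamlined. What the paper's version buys in return is an explicit deterministic bound on $|G_u^{t-1}|$ on the good event, which it reuses directly in the Le Cam step.
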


\begin{remark}   The condition $(2+d)^{t_f }= n^{o(1)}$ in \prettyref{lmm:treecoupling} is satisfied, for example, if
$t_f = O(\log^* n)$ and $d\leq n^{o(1/\log^* n)},$    or if
$t_f=O(\log\log n)$ and $d= O((\log n)^s )$  for some constant $s>0.$     In particular, the condition is satisfied if
$t_f = O(\log^* n)$  and  $d= O((\log n)^s )$  for some constant $s>0.$
\end{remark}

\arxivonly{
\begin{remark}  The condition $(2+d)^{t_f }= n^{o(1)}$ is equivalent to $(a+bd)^{t_f } = n^{o(1)}$ for any constants
$a$ and $b$ with $a>1$ and $b>0.$  Also, if $d \geq 1+\epsilon$ for all $n$ and some fixed constant $\epsilon >0,$ the
condition is equivalent to $d^{t_f } =  n^{o(1)}.$
\end{remark}
}

\begin{remark}   \label{rmk:C_bound}
The part of \prettyref{lmm:treecoupling} involving $ | |C^*| - K| \geq \sqrt{3K\log n} $
 is included to handle the case that $|C^*|$ has a certain hypergeometric
distribution.  In particular,
if we begin with the planted dense subgraph model (\prettyref{def:pds_model})  with $n$ vertices and a planted dense community with $|C^*| \equiv K,$  for a
cleanup procedure we will use for exact recovery (See \prettyref{alg:MPplus_Bernoulli}),
we need to withhold a small fraction $\delta$ of vertices and run the belief propagation algorithm on
the subgraph induced by the set of $n(1-\delta)$ retained vertices.     Let $C^{**}$ denote the intersection of $C^*$ with the set of $n(1-\delta)$
retained vertices.  Then $|C^{**}|$ is obtained by sampling the
vertices of the original graph without replacement. Thus, the distribution of $|C^{**}|$ is
hypergeometric, and $\expect{|C^{**}|}=K(1-\delta).$
Therefore, by a result of Hoeffding \cite{Hoeffding63}, the distribution of
$|C^{**}|$ is convex order dominated by the distribution that would result by sampling with replacement, namely, by
$\Binom\left(n(1-\delta), \frac{K}{n}\right).$
That is, for any convex function $\Psi,$  $\expect{ \Psi( |C^{**}| )} \leq \expect{ \Psi(  \Binom(n(1-\delta), \frac{K}{n}) )  }.$
Therefore, Chernoff bounds for  $\Binom(n(1-\delta), \frac{K}{n}) ) $ also hold for $|C^{**}|.$
We use the following Chernoff bounds
for binomial distributions \cite[Theorem 4.4, 4.5]{Mitzenmacher05}: For $X \sim \Binom(n,p)$:
\begin{align}
\prob{ X \ge (1+\epsilon) n p } \le \eexp^{-\epsilon^2 n p /3}, \quad \forall 0\le \epsilon \le 1    \label{eq:Chernoff1}   \\
\prob{ X \le (1-\epsilon) n p } \le \eexp^{-\epsilon^2 n p /2}, \quad \forall 0\le \epsilon \le 1. \label{eq:Chernoff2}
\end{align}
Thus, if $ K(1-\delta) \geq 3 \log n$,
then \prettyref{eq:Chernoff1} and \prettyref{eq:Chernoff2} with $\epsilon=\sqrt{ 3\log n/ [K(1-\delta)]  }$
imply $$\prob{  \big| |C^{**}| - K(1-\delta) \big| \geq \sqrt{3K(1-\delta) \log n}  }  \leq n^{-1}.$$
Thus, \prettyref{lmm:treecoupling} can be applied with $K$ replaced by $K(1-\delta)$.
\end{remark}

\ifthenelse{\equal{\type}{APT}}{}{

\begin{proof}
We write $V=V(G)$ and $V^t=V(G)\setminus V(G_u^t)$. Let $V^t_0$ and $V^t_1$ denote the
set of vertices $i$ in $V^t$ with $\sigma_i=0$ and $\sigma_i=1$, respectively.
For a vertex $i \in \partial G_u^t$,
let $\tilde{L}_i$ denote the number of $i$'s neighbors in $V^t_1$, and
$\tilde{M}_i$ denote the number of $i$'s neighbors in $V^t_0$.
Given $V_0^t, V_1^t$, and $\sigma_i$,     $\tilde{L}_i \sim \Binom(|V_1^t|, p)$ if $\sigma_i=1$ and $\tilde{L}_i \sim \Binom(|V_1^t|, q)$ if $\sigma_i=0,$
and $\tilde{M}_i \sim \Binom(|V_0^t|, q)$ for either value of $\sigma_i.$
Also, $\tilde{M}_i$ and $\tilde{L}_i$ are  independent.

Let $C^t$ denote the event
\begin{align*}
C^t=\{ | \partial G^s_u | \le 4(2+2d)^s \log n, \forall 0 \le s \le t  \}.
\end{align*}
The event $C^t$ is useful to ensure that $V^t$ is large enough so that the binomial random variables $\tilde{M}_i$ and $\tilde{L}_i$
can be well approximated by Poisson random variables with the appropriate means.
The following lemma shows that $C^t$ happens with high probability conditional on $C^{t-1}$.
\begin{lemma}\label{lmm:C}
For $t \ge 1$,
\begin{align*}
\prob{C^t | C^{t-1} } \ge 1- n^{-4/3}.
\end{align*}
Moreover,   $P(C^t)\geq  1-tn^{-4/3},$
and conditional on the event $C^{t-1}$, $|G_u^{t-1} | \le 4(2+2d)^t \log n.$
\end{lemma}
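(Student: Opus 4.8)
The plan is to run a breadth-first exploration of the neighborhood of $u$ and control the growth of each successive layer by a conditional stochastic-domination step followed by a Chernoff estimate. Reveal $G$ one layer at a time: let $\mathcal{F}_{s-1}$ be the $\sigma$-algebra generated by $G_u^{s-1}$ together with all vertex labels, write $V^{s-1}=V(G)\setminus V(G_u^{s-1})$, and let $V^{s-1}_0,V^{s-1}_1$ be its label-$0$ and label-$1$ parts (as in the proof of \prettyref{lmm:treecoupling}). Every vertex of $\partial G_u^s$ is a neighbor in $V^{s-1}$ of some vertex of $\partial G_u^{s-1}$, so $|\partial G_u^s|\le\sum_{i\in\partial G_u^{s-1}}N_i$, where $N_i$ counts the neighbors of $i$ inside $V^{s-1}$. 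Given $\mathcal F_{s-1}$, $N_i$ is a sum of independent $\Binom(|V^{s-1}_1|,p)$ or $\Binom(|V^{s-1}_1|,q)$ (according to $\sigma_i$) and $\Binom(|V^{s-1}_0|,q)$; since $q\le p$ and $|V^{s-1}_0|+|V^{s-1}_1|\le n$, each $N_i$ is stochastically dominated by $\Binom(n,p)$, of mean $np=d$, and the $N_i$ are conditionally independent because they concern disjoint vertex pairs. Hence, conditionally on $\mathcal F_{s-1}$, $|\partial G_u^s|$ is stochastically dominated by $\Binom(|\partial G_u^{s-1}|\,n,\,p)$.

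To prove the first assertion I would set $L=4(2+2d)^{t-1}\log n$, so that $4(2+2d)^t\log n=(2+2d)L$ and, on the $\mathcal F_{t-1}$-measurable event $C^{t-1}$, $|\partial G_u^{t-1}|\le L$. Averaging the domination over $C^{t-1}$ gives
\[
\prob{\,|\partial G_u^t|>(2+2d)L \,\big|\, C^{t-1}\,}\ \le\ \prob{\,\Binom(Ln,\,d/n)>(2+2d)L\,},
\]
and the right-hand binomial has mean $\mu=Ld\le(2+2d)L$. The estimate splits into two regimes. If $d\le1$, then $e\mu/\bigl((2+2d)L\bigr)=ed/(2(1+d))\le e/4<1$, so the Chernoff bound $\prob{X\ge a}\le(e\mu/a)^a$ yields a probability at most $(e/4)^{(2+2d)L}\le(e/4)^{8\log n}=n^{-8(\ln4-1)}$, using $2+2d\ge2$ and $L\ge4\log n$ (the latter since $t\ge1$). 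If $d>1$, then $(2+2d)L=2(1+d)L\ge2Ld=2\mu$, so the Chernoff bound $\prob{X\ge2\mu}\le(e/4)^{\mu}$ gives a probability at most $(e/4)^{Ld}\le(e/4)^{4\log n}=n^{-4(\ln4-1)}$. Since $\ln4-1>\tfrac13$, both bounds are at most $n^{-4/3}$ for all large $n$, which is exactly $\prob{C^t\mid C^{t-1}}\ge1-n^{-4/3}$.

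For the two remaining claims I would argue as follows. Since $C^t=C^{t-1}\cap\{|\partial G_u^t|\le4(2+2d)^t\log n\}$, one has $1-\prob{C^t}\le(1-\prob{C^{t-1}})+(1-\prob{C^t\mid C^{t-1}})\le(1-\prob{C^{t-1}})+n^{-4/3}$; as $|\partial G_u^0|=1\le4\log n$ forces $\prob{C^0}=1$, induction on $t$ gives $\prob{C^t}\ge1-t\,n^{-4/3}$. For the last claim, on $C^{t-1}$ we simply sum the layer bounds:
\[
|G_u^{t-1}|=\sum_{s=0}^{t-1}|\partial G_u^s|\ \le\ 4\log n\sum_{s=0}^{t-1}(2+2d)^s=4\log n\,\frac{(2+2d)^t-1}{2d+1}\ \le\ 4(2+2d)^t\log n.
\]

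The only delicate point is the two-regime Chernoff bound: for small $d$ the decay comes entirely from the additive slack in the ``$2$'' inside $(2+2d)$, while for large $d$ it comes from the multiplicative factor $2$ together with $\mu=\Omega(\log n)$, and one must check that the constant $4$ and the base $2+2d$ in the definition of $C^t$ are generous enough that \emph{both} regimes produce an exponent strictly below $-4/3$ (in fact both give roughly $-1.5$). Everything else is the standard layered breadth-first exploration argument, with care taken that the stochastic domination is performed conditionally on the exploration history and the labels, so that the Chernoff estimate is applied to an honest independent sum.
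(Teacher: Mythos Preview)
Your proof is correct and follows the same overall strategy as the paper: layered BFS exploration, conditional stochastic domination of $|\partial G_u^t|$ by a single binomial, then a Chernoff bound; the induction for $\prob{C^t}$ and the geometric sum for $|G_u^{t-1}|$ are identical to the paper's.

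The one place where you diverge is the Chernoff step. You dominate by $\Binom(Ln,\,d/n)$, whose mean $Ld$ is not in a fixed ratio to the target $(2+2d)L$, forcing you into a two-regime argument (small $d$ versus large $d$) with two different Chernoff forms. The paper instead dominates by $\Binom(Ln,\,(d+1)/n)$---a harmless inflation since $\Binom(n,d/n)\preceq\Binom(n,(d+1)/n)$---which makes the mean exactly $L(d+1)=\tfrac12(2+2d)L=2(2+2d)^t\log n$, so the target $4(2+2d)^t\log n$ is precisely $2\mu$. Then the single bound $\prob{X\ge 2\mu}\le e^{-\mu/3}\le e^{-(4/3)\log n}=n^{-4/3}$ finishes in one line, with no case split and no appeal to ``$n$ large enough.'' Your route gives a slightly sharper exponent (about $-1.5$) at the cost of the case analysis; the paper's trick trades that sharpness for brevity and uniformity in $n$.
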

\begin{proof}
Conditional on $C^{t-1}$, $|\partial G^{t-1}_u| \le 4(2+2d)^{t-1}  \log n $. For any $i \in \partial G^{t-1}_u$,
$\tilde{L}_i+\tilde{M}_i$ is stochastically dominated by $\Binom(n, d/n)$, and $\{ \tilde{L}_i, \tilde{M}_i\}_{i\in \partial G_u ^{t-1}}$  are independent.
It follows that $| \partial G^{t}_u|$ is stochastically dominated by (using $d+1\geq d$):
\begin{align*}
X \sim \Binom \left( 4(2+2d)^{t-1} n \log n, (d+1) /n \right).
\end{align*}
Notice that $\expect{X} = 2(2+2d)^t  \log n \ge 4 \log n.$
Hence,  in view of the Chernoff bound \prettyref{eq:Chernoff1} with $\epsilon=1,$
\begin{align*}
\prob{C^t | C^{t-1} }& \ge \prob{ X \le 4(2+2d)^t \log n } \\
& = 1- \prob{X > 2 \expect{X} } \ge 1- \eexp^{-\expect{X}/3} \ge 1-n^{-4/3}.
\end{align*}
Since $C^0$ is always true, $P(C^t)\geq (1- n^{-4/3})^t \geq 1-tn^{-4/3}.$
Finally, conditional on $C^{t-1}$,
\begin{align*}
|G_u^{t-1} | & = \sum_{s=0}^{t-1} \partial G_u^{s} \le \sum_{s=0}^{t-1} 4(2+2d)^s \log n \\
& = 4 \frac{(2+2d)^t-1 }{1+2d }  \log n \le 4(2+2d)^t  \log n.
\end{align*}
\end{proof}

Note that it is possible to have $i, i' \in \partial G_u^t$ which share a neighbor
in $V^t$, or  which themselves are connected by an edge, so $G_u^t$ may not be a tree. The next lemma shows that with high
probability such events don't occur.   For any $t \ge 1$, let $A^t$ denote the event that no vertex in $V^{t-1}$ has more than
one neighbor in $G^{t-1}_u$;  $B^t$ denote the event that there are no edges within $\partial G_u^{t}$. Note that if $A^s$
and $B^s$ hold for all $s=1, \ldots, t$, then $G_u^{t}$ is a tree.
\begin{lemma}\label{lmm:AB}
For any $t$ with $1\leq t \leq t_f,$
\begin{align*}
\prob{A^t | C^{t-1} } &\ge 1-  n^{-1+o(1) } \\
\prob{B^t | C^t } &\ge 1-  n^{-1+o(1) }.    
\end{align*}
\end{lemma}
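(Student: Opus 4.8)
The plan is to run the standard breadth-first exploration of the $t_f$-neighborhood of $u$, revealing edges one distance-level at a time, so that whenever they are needed the edges controlling $A^t$ (resp.\ $B^t$) have not yet been examined and are therefore, conditionally on everything revealed so far, independent Bernoulli variables with success probability $p$ or $q$, each at most $d/n$; a first-moment (union) bound then finishes the job. Concretely, having explored out to distance $t-1$, the revealed information is the induced subgraph $G_u^{t-1}$ together with all edges from $\partial G_u^s$ into $V^s$ for $s\le t-2$, and this already determines the event $C^{t-1}$, since $C^{t-1}$ only concerns the cardinalities $|\partial G_u^s|$, $s\le t-1$. Two observations then make the $A^t$ bound immediate. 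First, a vertex $w\in V^{t-1}$ is at distance $\ge t$ from $u$, hence is not adjacent to any vertex at distance $\le t-2$ from $u$, so its only possible neighbors in $G_u^{t-1}$ lie in $\partial G_u^{t-1}$; thus $\comp{(A^t)}$ occurs iff some $w\in V^{t-1}$ has at least two neighbors in $\partial G_u^{t-1}$. Second, the edges between $\partial G_u^{t-1}$ and $V^{t-1}$ are exactly the ones not yet examined, so conditionally on the revealed data they are independent with probabilities in $\{q,p\}\subseteq[0,d/n]$.

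A union bound over the at most $n$ vertices $w\in V^{t-1}$ and over unordered pairs $\{i,i'\}\subseteq\partial G_u^{t-1}$, using $|\partial G_u^{t-1}|\le 4(2+2d)^{t-1}\log n$ on $C^{t-1}$ from \prettyref{lmm:C}, then gives
\[
\prob{\comp{(A^t)}\mid\text{revealed data}}\;\le\; n\binom{4(2+2d)^{t-1}\log n}{2}\Big(\frac{d}{n}\Big)^{2}\;\le\;\frac{8\,(2+2d)^{2(t-1)}(\log n)^2 d^2}{n}.
\]
Since $t\le t_f$ and $(2+d)^{t_f}=n^{o(1)}$ (so $d=n^{o(1)}$ and, using $2+2d\le(2+d)^2$, also $(2+2d)^{2(t-1)}(\log n)^2 d^2=n^{o(1)}$), the right-hand side is $n^{-1+o(1)}$; averaging over the revealed data conditioned on $C^{t-1}$ yields $\prob{A^t\mid C^{t-1}}\ge 1-n^{-1+o(1)}$.

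For $B^t$ I would carry the exploration one step further, revealing the edges from $\partial G_u^{t-1}$ into $V^{t-1}$: this discovers the frontier $\partial G_u^t$ but examines no edge internal to $\partial G_u^t$. The data revealed so far now determines $C^t$ (again only the cardinalities $|\partial G_u^s|$, $s\le t$, enter it), while the edges inside $\partial G_u^t$ remain conditionally independent Bernoulli with parameter $p$ or $q$. As $\comp{(B^t)}$ is exactly the event that at least one such edge is present, a union bound over unordered pairs in $\partial G_u^t$, with $|\partial G_u^t|\le 4(2+2d)^t\log n$ on $C^t$, gives
\[
\prob{\comp{(B^t)}\mid\text{revealed data}}\;\le\;\binom{4(2+2d)^t\log n}{2}\frac{d}{n}\;\le\;\frac{8\,(2+2d)^{2t}(\log n)^2 d}{n}\;=\;n^{-1+o(1)},
\]
and averaging over the revealed data conditioned on $C^t$ completes the argument.

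The only genuinely delicate point is the bookkeeping hiding behind ``conditionally on the revealed data'': one has to name the filtration generated by the partially explored graph, check that $C^{t-1}$ (resp.\ $C^t$) is measurable with respect to the relevant term of it, and verify that the edges feeding $A^t$ (resp.\ $B^t$) are independent of that term with their original marginals. This is precisely the sequential edge-revelation scheme of \cite[Proposition 4.2]{Mossel12}, which carries over unchanged to the present regime of slowly growing $t_f$ once the cardinality control of \prettyref{lmm:C} is in hand; after that the whole lemma reduces to the two one-line first-moment computations displayed above.
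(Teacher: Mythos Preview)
Your proposal is correct and follows essentially the same approach as the paper: a union bound over pairs in the frontier, using the size control from \prettyref{lmm:C} and the fact that each edge has probability at most $d/n$, yielding $n^{-1+o(1)}$ in both cases. The paper's proof is terser and leaves the conditioning/filtration bookkeeping implicit; your added discussion of why the relevant edges remain fresh Bernoulli variables given the revealed exploration, and your observation that neighbors of $w\in V^{t-1}$ inside $G_u^{t-1}$ necessarily lie in $\partial G_u^{t-1}$, make explicit what the paper takes for granted.
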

\begin{proof}
For the first claim, fix any $i, i' \in \partial G_u^{t-1}$. For any $j \in V^{t-1}$, $
\prob{A_{i j} = A_{i', j} =1} \le d^2/n^2. $
Since $|V^{t-1} | \le n$ and conditional on $C^{t-1}$, $|\partial G^{t-1}_u| \le 4(2+2d)^{t-1} \log n  =  n^{o(1)}.$
It follows from the union bound that, given $C^{t-1},$
\begin{align*}
\prob{ \exists i, i' \in \partial G_u^{t-1}, j \in V^{t-1} : A_{i j} = A_{i', j} =1 } & \le n 16(2+2d)^{2t-2} \log^2 n \times \frac{d^2}{n^2} \\
& =  n^{-1+ o(1) }.
\end{align*}
Therefore, $\prob{A^t | C^{t-1} } \ge 1-  n^{-1+o(1)}$. For the second claim, fix any $i, i' \in \partial G_u^t$. Then $\prob{A_{i,i'} =1 } \le d/n$.
It follows from the union bound that, given  $C^t,$
\begin{align*}
\prob{ \exists i, i' \in \partial G_u^{t}: A_{i i'}=1 } \le  16(2+2d)^{2t} \log^2n  \times \frac{d}{n} \le  n^{-1+o(1) } .
\end{align*}
Therefore, $\prob{B^t | C^t } \ge 1-  n^{-1+o(1) }$.
\end{proof}
In view of Lemmas \ref{lmm:C} and \ref{lmm:AB}, in the remainder of the proof of \prettyref{lmm:treecoupling} we can
and do assume without loss of generality that $A_t, B_t, C_t$ hold for all $t \geq 0.$
We consider three cases about the cardinality of the community, $|C^*|$:
\begin{itemize}
\item   $|C^*|\equiv K.$
\item  $K \geq 3\log n$ and $\prob{  | |C^*| - K| \leq \sqrt{3K\log n}  }  \geq  1 - n^{-1/2+o(1)}.$
This includes the case that $|C^\ast| \sim \Binom(n, K/n)$ and  $K \geq 3\log n,$ as noted in \prettyref{rmk:C_bound}.
\item   $K \leq 3\log n$ and $\prob{  |C^*|  \leq  6\log n  }  \geq  1 - n^{-1/2+o(1)}.$
This includes the case that $|C^\ast| \sim \Binom(n, K/n)$ and  $K \leq 3\log n,$ because,
in this case,  $|C^\ast|$ is stochastically dominated by a $\Binom(n,3\log n / n)$ random
variable,  so  Chernoff bound \prettyref{eq:Chernoff1} with $\epsilon =1$ implies:
$\prob{   |C^\ast| \leq 6\log n }\geq 1-n^{-1}$ if $K\leq 3\log n.$
\end{itemize}
In the second and third cases we assume these bounds (i.e.,  either $ | |C^*| - K| \leq \sqrt{3K\log n}$  if  $K\geq  3\log n$
or $ |C^*|  \leq  6\log n$ if $K\leq 3\log n$) hold, without loss of generality.

We need a version of the well-known bound on the total variation distance between the binomial distribution
and a Poisson distribution with approximately the same mean:
\begin{align}
d_{\rm TV} \left( \Binom (m, p), \Pois(\mu) \right) \leq mp^2 + \psi(\mu-mp),  \label{eq:BinomPoissonApprox}
\end{align}
where $\psi(u) =  \eexp^{|u|} (1+|u|) -1.$    The term $mp^2$ on the right side of
\prettyref{eq:BinomPoissonApprox} is Le Cam's  bound on the variational distance between
the $\Binom(m,p)$ and the Poisson distribution with the same mean, $mp;$   the term
$\psi(\mu - mp)$  bounds the variational distance between the two
Poisson distributions with means $\mu$ and $mp,$  respectively (see \cite[Lemma 4.6]{Mossel12} for a proof).
Note that $\psi(u)=O(|u|)$ as $u\rightarrow 0.$

We recursively construct the coupling.
For the base case, we can arrange  that
$$
\prob{   (G_u^{0}, \sigma_{G_u^{0}}) =  (T_u^{0}, \tau_{T_u^{0}})  } =1-  |  \prob{ \sigma_u = 1 } - \prob{ \tau_u =1 }|
=1 - \bigg|  \frac{\expect{C^*}}{n}  - \frac{K}{n}  \bigg| .
$$
If $|C^*| \equiv K$ this gives $\prob{   (G_u^{0}, \sigma_{G_u^{0}}) =  (T_u^{0}, \tau_{T_u^{0}})  } =1$ and in the other
cases
$$
\prob{   (G_u^{0}, \sigma_{G_u^{0}}) =  (T_u^{0}, \tau_{T_u^{0}})  }
\geq 1 - \frac{\sqrt{3K\log n}}{n} -  n^{-1/2+o(1)} \geq 1 -  n^{-1/2+o(1)}.
$$
So fix $t \ge 1$ and assume that $(T_u^{t-1}, \tau_{T_u^{t-1}}) = (G_u^{t-1}, \sigma_{G_u^{t-1}})$.   We aim to construct
a coupling so that  $(T_u^{t}, \tau_{T_u^{t}}) = (G_u^{t}, \sigma_{G_u^{t}})$  holds with probability at least $1- n^{-1+o(1)}$ if
$|C^*|\equiv K$ and with probability at least $1-n^{-1/2 + o(1)}$ in the other cases.
Each of the vertices $i$ in $\partial G^{t-1}_u$ has a random number of neighbors $\tilde L_i$ in $V_1^{t-1}$ and a random number of neighbors
$\tilde M_i$ in $V_0^{t-1}.$   These variables are conditionally independent given $(G_u^{t-1},\sigma_{G_u^{t-1}}, |V_1^{t-1} |, |V_0^{t-1} |).$   Thus we bound the total
variational distance of these random variables from the corresponding Poisson distributions by using a union bound, summing over all
$i \in \partial G_u^{t-1}.$     Since $C^{t-1}$ holds,
$|\partial G^{t-1}_u| \leq  4(2+2d)^{t-1}  \log n  =   n^{o(1)},$ so  it suffices to show that the variational distance for the numbers of children with
each label for any given vertex in $\partial G^{t-1}_u$  is at most $n^{-1/2 + o(1)}$   (because  $n^{o(1)}n^{-1/2+o(1)}=n^{-1/2+ o(1)}).$
Specifically,  we need to obtain such a bound on the variational distances for  three types of random variables:
\begin{itemize}
\item $\tilde{L}_i$ for vertices $i\in \partial G_u^{t-1}$ with $\sigma_i=1$
\item $\tilde{L}_i$ for vertices $i\in \partial G_u^{t-1}$ with $\sigma_i=0$
\item $\tilde{M}_i$ for vertices in $i\in \partial G_u^{t-1}$ (for either $\sigma_i$) .
\end{itemize}
The corresponding variational distances, conditioned on $|V_1^{t-1}|$ and $|V_0^{t-1}|$, and the bounds
on  the distances implied by  \prettyref{eq:BinomPoissonApprox}, are as follows:
\begin{eqnarray*}
d_{TV}\left(\Binom(|V_1^{t-1}|,p),\Pois(Kp)\right) & \leq &  |V_1^{t-1}|p^2+ \psi\left((K-|V_1^{t-1}|) p\right ) \\
d_{TV}\left(\Binom(|V_1^{t-1}|,q),\Pois(Kq)\right) & \leq &  |V_1^{t-1}|q^2 + \psi\left((K-|V_1^{t-1}|)q\right)   \\
d_{TV}\left(\Binom(|V_0^{t-1}|,q),\Pois((n-K)q)\right) & \leq &  |V_0^{t-1}| q^2 + \psi\left((n-K-|V_0^{t-1}|) q \right)
\end{eqnarray*}

The assumption on $d$ implies $p \leq o(n^{-1+o(1)})$ and  $np^2  =  dp \leq  n^{-1+o(1)},$  and
thus also  $|V_1^{t-1}|q^2  \le   |V_1^{t-1}|p^2 \leq  n^{-1+o(1)}$  and $|V_0^{t-1}| q^2 \leq n^{-1+o(1)}.$
Also, for use below, $Kq^2 \leq Kp^2 \leq  n^{-1+o(1)}.$

We now complete the proof for the three possible cases concerning $ |C^\ast|.$   Consider the first
case, that $ |C^\ast| \equiv K.$     Since we are working under the assumption $C^{t-1}$ holds, in
the case $ |C^\ast| \equiv K,$
\begin{eqnarray*}
|(K-  |V_1^{t-1}| ) p  |  \leq  p | G_u^{t-1} |  \leq      p4(2+2d)^t \log n   \leq n^{-1 + o(1)}
\end{eqnarray*}
and similarly
$$
|(n-K-  |V_0^{t-1}| ) q | \leq q |G_u^{t-1} |    \leq   q 4(2+2d)^t \log n    \leq    n^{-1 + o(1)}.
$$
The conclusion \prettyref{eq:TV_convergence_det} follows, proving the lemma in case $ |C^\ast| \equiv K.$

Next consider the second case:  $ | |C^*| - K| \leq \sqrt{3K\log n}$  and  $K \geq 3\log n.$
Using $C^{t-1}$ as before, we obtain
\begin{eqnarray*}
|(K-  |V_1^{t-1}| ) p  |  \leq   \sqrt{3Kp^2 \log n} + p4(2+2d)^t \log n   \leq n^{-1/2 + o(1)}
\end{eqnarray*}
and
$$
|(n-K-  |V_0^{t-1}| ) q |   \leq   \sqrt{3Kq^2 \log n} + q 4(2+2d)^t \log n    \leq    n^{-1/2 + o(1)},
$$
which establishes \prettyref{eq:TV_convergence} in the second case.

Finally, consider the third case: $ |C^*|  \leq  6\log n$  and  $K \leq 3\log n.$   Then
\begin{eqnarray*}
|(K-  |V_1^{t-1}| ) p  | & \leq  & 6p \log n + p4(2+2d)^t \log n   \leq n^{-1/2 + o(1)}
\end{eqnarray*}
and
$$
|(n-K-  |V_0^{t-1}| ) q |   \leq   6q \log n + q 4(2+2d)^t \log n    \leq    n^{-1/2 + o(1)},
$$
which establishes \prettyref{eq:TV_convergence} in the third case.

Thus, we can construct a coupling so that  $(T_u^{t}, \tau_{T_u^{t}}) = (G_u^{t}, \sigma_{G_u^{t}})$  holds with probability
at least $1- n^{-1+ o(1)}$  in case $ |C^\ast| \equiv K,$ and with probability  $1- n^{-1/2 + o(1)}$  in the other cases,
at each of the $t_f$ steps, and, furthermore, the $o(1)$ term in  the exponents of $n$
are uniform in  $t$ over $1\leq t \leq t_f.$    Since $2^{t_f} =  n^{o(1)},$ it follows that $t_f = o(\log n).$
So the total probability of failure of the coupling is upper bounded by $t_f n^{-1 + o(1)}=  n^{-1+ o(1)}$  in
case $ |C^\ast| \equiv K$ and by $ n^{-1/2 + o(1)}$ in the other cases.

Finally, we justify the last sentence of the lemma.
At the base level of a recursive construction above, the  proof uses the fact that the labels can be coupled with high probability because
$\Prob\{\sigma_u=1\} \approx \frac{K}{n} = \Prob\{\tau_u=1\}.$   If instead we let $u$ be a vertex selected uniformly at random
from $C^*$, so that $\sigma_u\equiv 1,$ and we consider the random tree conditioned on $\tau_u=1,$ the labels of
$u$ in the two graphs are equal with probability one (i.e. exactly coupled), and then the recursive construction of the coupled
neighborhoods can proceed from there.    Similarly,  if $u$ is a vertex selected uniformly at random from $[n]\backslash  C^*,$
then the lemma goes through for  coupling with the labeled tree graph conditioned on $\tau_u=0.$    
\end{proof}

}

\ifthenelse{\equal{\type}{APT}}{\end{appendix}\end{document}}{}
\nbnew{APT version of paper ends here.}

\section{Analysis of BP on a tree continued--moments and CLT}

This section establishes messages in the BP algorithm are asymptotically Gaussian, a property
which  is used in the proof of the converse result, \prettyref{thm:planted_BP_converse}.
First bounds on the first and second moments are found and then a version of
the Berrry-Essen CLT is applied.

\subsection{First and second moments of log likelihood messages for Poisson tree}

The following lemma provides estimates for the first and second moments of the log likelihood messages
for the Poisson tree model.
\begin{lemma}  \label{lmm:BP_moments}  With $C=\lambda(p/q+2),$  for all $t\geq 0,$
\begin{align}
\expect{Z_0^{t+1} }  = - \frac{\lambda b_t }{2}    +O \left( \frac{\lambda^2\eexp^{Cb_{t-1}}   }{K(p-q)}  \right)     \label{eq:BPZ0_mean} \\
\expect{Z_1^{t+1} }  =   \frac{\lambda b_t }{2}    +O \left( \frac{\lambda^2 \eexp^{Cb_{t-1}}  }{K(p-q)} \right)      \label{eq:BPZ1_mean}   \\
\var \left(Z_0^{t+1} \right) = \lambda b_t               +O \left( \frac{\lambda^2 \eexp^{Cb_{t-1}}  }{K(p-q)} \right)       \label{eq:BPZ0_var} \\
\var \left(Z_1^{t+1} \right)  = \lambda b_t   +        O \left( \frac{\lambda^2 \eexp^{Cb_{t-1}}  }{K(p-q)} \right)         \label{eq:BPZ1_var}
\end{align}
\end{lemma}

\begin{lemma}  \label{lmm:logplus}
Let $\psi_2(x)$ and $\psi_3(x)$ be defined for $x\geq 0$ by the relations:  $\log(1+x)=x+\psi_2(x)$ and $\log(1+x)=x-\frac{x^2}{2}+\psi_3(x).$
Then $0\geq \psi_2(x) \geq  -\frac{x^2}{2},$  and $0 \leq \psi_3(x) \leq \frac{x^3}{3.}.$    In particular,
$|\psi_2(x)|\leq x^2$ and $|\psi_3(x)|\leq x^3.$    Moreover,  $|\log^2(1+x) - x^2|\leq x^3.$
\end{lemma}
\begin{proof}[Proof of \prettyref{lmm:logplus}]
By the intermediate value form of Taylor's theorem,  for any $x\geq 0,$
$
\log(1+x) =  x +  \frac{x^2}{2}\left(-\frac{1}{(1+y)^2} \right)
$
for some $y \in [0,x].$  The fact $-1 \leq -\frac{1}{(1+y)^2} \leq 0$  then establishes the
claim for $\psi_2.$    Similarly, the claim for $\psi_3$ follows from the fact that for
some $z \in [0,x]$
$
\log(1+x)   =    x   -  \frac{x^2}{2}  + \frac{x^3}{3!}    \left( \frac{2}{(1+z)^3} \right).
$
Finally, the first and second derivatives of $\log^2(1+x)$ at $x=0$ are 0 and 2, and
$$
\bigg|  \frac{1}{3!}  \left( \frac{d}{dx}\right)^3 \log^2(1 + x)   \bigg|  =\bigg| \frac{4\log(1+x) - 6}{3! (1+x)^3}\bigg|  \leq 1~~~\mbox{for}~x\geq 0,
$$
so the final claim of the lemma also follows from Taylor's theorem.
\end{proof}

\begin{proof}[Proof of \prettyref{lmm:BP_moments}]
 Plugging $g(z) = \frac{1}{ ( 1+\eexp^{-z+\nu} )^3 }$ into \prettyref{eq:changemeasure}  we have
\begin{align}
 \eexp^{\nu}  \expect{ \frac{1}{ ( 1+\eexp^{-Z_0^t+\nu} )^3 } } +\expect{ \frac{1}{ (1+\eexp^{-Z_1^t+\nu} )^3 } } & =  \expect{ \frac{1}{ (1+\eexp^{-Z_1^t+\nu} )^2 }  } \label{eq:symmetry3}.
\end{align}

Applying \prettyref{lmm:logplus}, we have
\begin{align}
\log \left(  \frac{ \eexp^{ z -\nu } (p/q)   + 1  }{ \eexp^{ z -\nu } +1} \right)
& = \log \left( 1 + \frac{p/q -1 } {  1 + \eexp^{-z+\nu}  }  \right) \label{eq:oneplus}   \\
& =  \frac{ p/q-1} {1+ \eexp^{- z +\nu}  }   -   \frac{ ( p/q-1)^2 }{ 2 ( 1+ \eexp^{ -z +\nu } )^2 } +
\psi_3  \left(   \frac{ p/q-1 }{  1+ \eexp^{- z +\nu}  }  \right). \label{eq:Taylorlog}
\end{align}
Hence,
\begin{align*}
& \Lambda^{t+1}_{u} = - K(p-q)  \\
&~~~~~~ +  \sum_{ \ell \in \partial u } \left[  \frac{ p/q-1 } {1+ \eexp^{-\Lambda^t_{\ell \to u  } +\nu } }
-   \frac{ ( p/q-1)^2 } { 2 ( 1+ \eexp^{-\Lambda^t_{\ell \to u} +\nu }  )^2 } +
\psi_3  \left(   \frac{ p/q-1 }{  1+ \eexp^{-\Lambda^t_{\ell \to u} +\nu }  }  \right)  \right].
\end{align*}

It follows, by considering the case the label of vertex $u$ is conditioned to be zero, that:
\begin{align*}
\expect{Z_0^{t+1} }  & = -K(p-q) + \expect{L_u}  \expect{  \frac{p/q-1  } {1+ \eexp^{-  Z_1^t +\nu } }  } + \expect{M_u}
\expect{  \frac{p/q-1  } {1+ \eexp^{-  Z_0^t +\nu } }  }  \\
& -  \expect{L_u}  \expect{  \frac{ (p/q-1)^2  } {2 ( 1+ \eexp^{-  Z_1^t +\nu } )^2  }   } - \expect{M_u}
\expect{  \frac{ (p/q-1)^2  } {2 ( 1+ \eexp^{-  Z_0^t +\nu } )^2  }   } \\
& + \expect{L_u}   \expect{  \psi_3 \left(          \frac{ p/q-1 }{  1+ \eexp^{-Z_1^t+\nu } }             \right)    }  + \expect{M_u}
\expect{  \psi_3 \left(  \frac{ p/q-1 }{  1+ \eexp^{-Z_0^t+\nu }  }   \right)  }.
\end{align*}
Notice that $\expect{L_u} = Kq $ and $\expect{M_u}=(n-K)q$. Thus
\begin{align*}
& \expect{L_u}  \expect{  \frac{p/q-1  } {1+ \eexp^{-  Z_1^t +\nu } }  } + \expect{M_u}
\expect{  \frac{p/q-1  } {1+ \eexp^{-  Z_0^t +\nu } }  } \\
&= K q (p/q-1) \left(  \expect{ \frac{ 1 } {1+ \eexp^{-  Z_1^t +\nu } } }    + \eexp^{\nu} \expect{ \frac{1 } {1+ \eexp^{-  Z_0^t +\nu } } }   \right) \\
& = K (p-q),
\end{align*}
where the last equality holds due to \prettyref{eq:symmetry1}. Moreover,
\begin{align}
& \expect{L_u}  \expect{  \frac{ (p/q-1)^2  } { ( 1+ \eexp^{-  Z_1^t +\nu } )^2  }   }+  \expect{M_u}
\expect{  \frac{ (p/q-1)^2  } { ( 1+ \eexp^{-  Z_0^t +\nu } )^2  }   } \nonumber  \\
&= K q (p/q-1)^2 \left(  \expect{  \frac{ 1 } { ( 1+ \eexp^{-  Z_1^t +\nu } )^2  }   } + \eexp^{\nu}  \expect{  \frac{ 1 } { ( 1+ \eexp^{-  Z_0^t +\nu } )^2  }   }  \right) \nonumber \\
&  \overset{(a)}{=}  K q (p/q-1)^2  \expect{  \frac{ 1 } { 1+ \eexp^{-  Z_1^t +\nu }  }  },   \nonumber\\
&  \overset{(b)}{=}  \lambda  \expect{ \frac{ \eexp^{  Z_1^t} }{1 +  \eexp^{Z_1^t -\nu } } }  = \lambda b_t   \label{eq:means}
\end{align}
where (a) holds due to \prettyref{eq:symmetry2}, and (b) holds due to the fact  $\nu = \log \frac{n-K}{n}$.
Also,
\begin{align}
&\bigg|   \expect{L_u}   \expect{  \psi_3 \left(          \frac{ p/q-1 }{  1+ \eexp^{-Z_1^t+\nu } }             \right)    }  + \expect{M_u}
\expect{  \psi_3 \left(  \frac{ p/q-1 }{  1+ \eexp^{-Z_0^t+\nu }  }     }  \right)\bigg|   \nonumber   \\
&\leq   \expect{L_u}   \expect{    \frac{ ( p/q-1)^3 }{  ( 1+ \eexp^{-Z_1^t+\nu } )^3 }     } + \expect{M_u}
\expect{    \frac{ ( p/q-1)^3 }{  ( 1+ \eexp^{-Z_0^t+\nu } )^3 }     } \nonumber  \\
& = K q (p/q-1)^3 \left( \expect{    \frac{ 1 }{  ( 1+ \eexp^{-Z_1^t+\nu } )^3 }     }
+ \eexp^{\nu} \expect{ \frac{ 1 }{  ( 1+ \eexp^{-Z_1^t+\nu } )^3 }     } \right)\nonumber   \\
& \overset{(a)}{=} K q (p/q-1)^3 \expect{    \frac{ 1 }{  ( 1+ \eexp^{-Z_1^t+\nu } )^2 }     } \nonumber  \\
& \le K q (p/q-1)^3 \eexp^{-2\nu } \expect{ \eexp^{2 Z_1^t} } \leq  \frac{\lambda^2 \eexp^{Cb_{t-1}}}{K(p-q)} , \label{eq:smallterm}
\end{align}
where $(a)$ holds due to  \prettyref{eq:symmetry3}; the last inequality holds because, as shown by \prettyref{lmm:exp_BP_bounds}, $\expect{\eexp^{2Z_1^t} }\leq \eexp^{Cb_{t-1}} .$
Assembling the last four displayed equations yields \prettyref{eq:BPZ0_mean}.

Similarly,
\begin{align*}
\expect{Z_1^{t+1}} & = \expect{Z_0^{t+1} } +  K (p-q) \expect{  \log \left(   \frac{   e^{Z_1^t+ \nu}(p/q)+1 }{  e^{Z_1^t-\nu}+1   }   \right)    } \\
& = \expect{Z_0^{t+1} } +  \lambda b_t   + K(p-q)\expect{  \psi_2   \left(  \frac{(p/q)-1}{e^{-Z_1^t + \nu}+1}  \right)   }.
\end{align*}
and, using $|\psi_2(x)|\leq x^2$ and the definition of $\nu,$
$$
\bigg|   K(p-q)\expect{  \psi_2   \left(  \frac{(p/q)-1}{e^{-Z_1^t + \nu}+1}  \right)   } \bigg| \leq  \frac{\lambda^2 \expect{e^{2Z_1^t}}    }{K(p-q)}
\leq   \frac{\lambda^2 \eexp^{Cb_{t-1} }}{K(p-q)}
$$
It follows that \prettyref{eq:BPZ1_mean} holds.

Next, we calculate the variance. For $Y= \sum_{i=1}^{L} X_i$, where $L $ is Poisson distributed
and $\{X_i\}$ are i.i.d.\ with finite second moments, it is well-known that
$\var(Y) = \expect{L} \expect{X_1^2}$.    It follows that
\begin{align*}
& \var \left(Z_0^{t+1} \right) = \expect{L_u} \expect{ \log^2 \left(  \frac{ \eexp^{ Z_1^t -\nu } (p/q)   + 1  }{ \eexp^{ Z_1^t -\nu } +1} \right) }  \\
&~~~~~~~~   + \expect{M_u} \expect{ \log^2 \left(  \frac{ \eexp^{ Z_0^t -\nu } (p/q)   + 1  }{ \eexp^{ Z_0^t -\nu } +1} \right) }.
\end{align*}
Using \prettyref{eq:oneplus} and the fact $|\log^2(1+x)  - x^2| \leq x^3$ (see \prettyref{lmm:logplus}) yields
\begin{align*}
\var \left(Z_0^{t+1} \right) & =
 \expect{L_u}  \expect{  \frac{ (p/q-1)^2  } { ( 1+ \eexp^{-  Z_1^t +\nu } )^2  }   } + \expect{M_u}
\expect{  \frac{ (p/q-1)^2  } { ( 1+ \eexp^{-  Z_0^t +\nu } )^2  }   } \\
& + O \left(  \expect{L_u}   \expect{    \frac{ ( p/q-1)^3 }{  ( 1+ \eexp^{-Z_1^t+\nu } )^3 }     }  + \expect{M_u}
\expect{    \frac{ ( p/q-1)^3 }{  ( 1+ \eexp^{-Z_0^t+\nu } )^3 }     }  \right).
\end{align*}
Applying \prettyref{eq:means} and    \prettyref{eq:smallterm} yields \prettyref{eq:BPZ0_var}.

Similarly,  applying \prettyref{eq:means} and the fact $\log^2(1+x) \leq x^2,$ yields
\begin{align*}
\var \left(Z_1^{t+1} \right) & = \var \left(Z_1^{t+1} \right)   + K (p-q) O \left( \expect{  \frac{ (p/q-1)^2  } { ( 1+ \eexp^{-  Z_1^t +\nu } )^2  }   } \right) \\
& =  \var \left(Z_0^{t+1} \right)    + O \left( \frac{\lambda^2\eexp^{Cb_{t-1}}}{K(p-q)} \right)  \\
&  = \lambda b_t   + O \left(\frac{\lambda^2}{K(p-q)} \right)e^{Cb_{t-1}},
\end{align*}
which together with  \prettyref{eq:BPZ0_var} implies \prettyref{eq:BPZ1_var}.
\end{proof}

\subsection{Asymptotic Gaussian marginals of log likelihood messages}
The following lemma is well suited for proving that the distributions of $Z_0^t$ and $Z_1^t$ are asymptotically Gaussian.
\begin{lemma}[Analog of Berry-Esseen inequality for Poisson sums {\cite[Theorem 3]{korolev2012improvement}}]\label{lmm:Poisson_BE}
Let  $S_{\lambda}=X_1 + \cdots + X_{N_\lambda},$   where
$(X_i: i\geq 1)$  are independent, identically distributed random variables with mean $\mu$, variance $\sigma^2$
and $\expect{|X_i|^3}\leq \rho^3,$ and for some $\lambda > 0,$ $N_{\lambda}$ is a $\Pois(\lambda)$ random variable independent
of  $(X_i: i\geq 1).$   Then
$$
\sup_x \bigg|     \prob{  \frac{S_\lambda - \lambda \mu}{  \sqrt{\lambda(\mu^2 + \sigma^2)}}\leq x} - \Phi(x)   \bigg|  \leq  \frac{C_{BE} \rho^3}{\sqrt{\lambda(\mu^2 + \sigma^2)^3}}
$$
where $C_{BE}=0.3041.$
\end{lemma}

\begin{lemma}\label{lmm:treeGaussian}
Suppose $\lambda  > 0$  is fixed, and the parameters $p/q$ and $\nu$ vary such that
$p/q = O(1),$  $\nu$ is bounded from below (i.e. $K/n$ is bounded away from one)  and
$K(p-q)\to \infty.$  (The latter condition holds if either $\nu \to \infty$ or $p/q\to 1$; see \prettyref{rmk:Kpq}.)
Suppose $t\in \naturals$ is fixed, or more generally, 
$t$ varies with $n$ such that $\frac{\eexp^{C'b_{t-1}}   }{K(p-q)} = o(b_t)$ as $n\to\infty,$
where $C'=\lambda\left(3+2\frac{p}{q}+\left(\frac{p}{q}\right)^2\right).$    Then
\begin{align}
\sup_x  \bigg|  \prob{\frac{Z_0^{t+1} + \frac{\lambda b_t}{2} }{\sqrt{\lambda b_t}}\leq x }   - \Phi(x) \bigg|   \to  0     \label{eq:BPlawZ0}  \\
\sup_x  \bigg|  \prob{\frac{Z_1^{t+1} - \frac{\lambda b_t}{2} }{\sqrt{\lambda b_t}}\leq x }   - \Phi(x) \bigg|   \to  0     \label{eq:BPlawZ1}
\end{align}
\end{lemma}
\begin{remark}
Note that in the case of $\lambda \le 1/\eexp$,
$b_t \le e$ for all $t \ge 0$. As a consequence, 
\prettyref{eq:BPlawZ0}  and \prettyref{eq:BPlawZ1} hold for all $t,$ and, as can be checked from
the proof, the limits hold uniformly in $t.$   Also, in the case $b_t$ is bounded independently of $n$,
\prettyref{eq:BPlawZ1} is a consequence of \prettyref{eq:BPlawZ0} and the fact that $Z_0^{t+1}$ is the log likelihood
ratio.
In the proof below, \prettyref{eq:BPlawZ1} is proved directly.
\end{remark}
\begin{remark}   \label{rmk:Kpq}
The condition $K(p-q)\to \infty$ in \prettyref{lmm:treeGaussian} is essential for the proof; we state some equivalent conditions here.
Equations \eqref{eq:mean_deg11}-\eqref{eq:mean_deg0} express $Kp$, $Kq$, and $(n-K)q$ in terms of the parameters $\lambda, \nu,$ and $p/q.$   Similarly,
\begin{align*}
K(p-q) & =\frac{\lambda \eexp^{\nu}}{p/q - 1}    \\
np & = \frac{\lambda (p/q)  \eexp^{\nu}( \eexp^{\nu}+1)}{(p/q - 1)^2}  \\
\frac{ (n-K)q}{K(p-q)} & = \frac{ \eexp^{\nu}}{p/q - 1}.
\end{align*}
It follows that if $\frac{K^2 (p-q)^2 }{(n-K) q} \equiv \lambda$ for a fixed $\lambda > 0,$   $p/q = O(1)$,
and $\nu$ is bounded below (i.e. $K/n$ is bounded away from one) then the following seven conditions are equivalent:
($K(p-q)\to \infty$),  ($\nu \to \infty$ or $\frac{p}{q}\to 1$),  ($Kp\to\infty$), ($Kq \to \infty$), ($(n-K)q \to\infty$),  ($np\to \infty$), ($K(p-q)=o((n-K)q)$).
\end{remark}
\begin{proof}[Proof of \prettyref{lmm:treeGaussian}]
Throughout the proof it is good to keep in mind that $b_0=\frac{1}{1+ \eexp^{-\nu}}, $  so that $b_0$ is bounded from below by a fixed
positive constant, and, as shown in
\prettyref{lmm:info_monotone},  $b_t$ is nondecreasing in $t.$
For $t\geq 0,$  $Z^{t+1}_0$ can be represented as follows:
\begin{align*}
Z^{t+1}_0  =    - K(p-q)    +   \sum_{i=1}^{N_{nq}}   X_i, 
\end{align*}
where $N_{nq}$ has the $\Pois(nq)$ distribution, the random variables $\{X_i, i\geq 0\}$ are mutually independent and independent of $N_{nq}$, and the
distribution of $X_i$ is a mixture of distributions: $\calL(X_i) = \frac{(n-K)q}{nq}\calL(f(Z_0^t) )+ \frac{Kq}{nq} \calL(f(Z^{t}_1) ) ,$
where $f(z) =\log\left( \frac{ \eexp^{z-\nu}(p/q) + 1}{ \eexp^{z-\nu} + 1}   \right).$

By \prettyref{eq:BPZ0_var} of  \prettyref{lmm:BP_moments} and the formula for the variance of the sum of a Poisson distributed number of iid random variables,
$$
nq  \expect{X_i^2}  =  \var{(Z_0^{1+t}})  = \lambda b_t + O \left(\frac{\lambda^2\eexp^{Cb_{t-1}} }{K(p-q)} \right).
$$
The function $f$, and therefore the $X_i$'s, are nonnegative.    Using the fact $\log^3(1+x)\leq x^3$ for $x\geq 0,$ and
applying \prettyref{eq:oneplus} we find
$f^3(z)   \leq  \left(   \frac{ p/q-1} {1+ \eexp^{- z +\nu}  }\right)^3.$   Applying \prettyref{eq:smallterm} yields
\begin{align}
nq \expect{| X_i|^3} & =    \expect{L_u}   \expect{    \frac{ ( p/q-1)^3 }{  ( 1+ \eexp^{-Z_1^t+\nu } )^3 }     } + \expect{M_u}
\expect{    \frac{ ( p/q-1)^3 }{  ( 1+ \eexp^{-Z_0^t+\nu } )^3 }     } \nonumber  \\
&\leq  \frac{\lambda^2 \eexp^{Cb_{t-1}}}{K(p-q)} .
\end{align}

Therefore, the ratio relevant for application of the Berry-Esseen lemma satisfies:
\begin{align*}
\frac{\expect{|X_i|^3}} {\sqrt{nq \expect{X_i^2}^3 }}  =  \frac{nq\expect{|X_i|^3}} {\sqrt{\left(nq \expect{X_i^2}\right)^3 }}
&  \leq   \frac{ \lambda^2\eexp^{Cb_{t-1}} }{K(p-q) \sqrt{\left(  \lambda b_t   + O \left(\frac{\lambda^2\eexp^{Cb_{t-1}} }{K(p-q)} \right) \right)^3 }  } \to 0.  \\
\end{align*}
The Berry-Esseen lemma, \prettyref{lmm:Poisson_BE}, implies
$$
\sup_x \bigg|       \prob{        \frac{    Z_0^{t+1}  - \expect{Z_0^{t+1}}  } {  \sqrt{\var(Z_0^{t+1})}   }   \leq x                }
- \Phi(x)   \bigg|  \leq  \frac{C_{BE}\expect{|X_i|^3}} {\sqrt{nq \expect{X_i^2}^3 }} .
$$
Applying \prettyref{lmm:BP_moments} completes the proof of \prettyref{eq:BPlawZ0}.

The proof of \prettyref{eq:BPlawZ1} given next is similar.   For $t\geq 0,$  $Z^{t+1}_1$ can be represented as follows:
\begin{align*}
Z^{t+1}_1  =  K(p-q)    +  \frac{1}{\sqrt{(n-K)q} } \sum_{i=1}^{N_{(n-K)q+Kp}}   Y_i 
\end{align*}
where $N_{(n-K)q+Kp}$ has the $\Pois((n-K)q+Kp)$ distribution, the random variables $\{Y_i, i\geq 0\}$ are mutually independent and independent of $N_{(n-K)q+Kp}$, and the
distribution of $Y_i$ is a mixture of distributions: $\calL(Y_i) = \frac{(n-K)q}{(n-K)q+Kp}\calL(f(Z_0^t) )+ \frac{Kp}{(n-K)q+Kp} \calL(f(Z^{t}_1) ) ,$
where $f(z) =\log\left( \frac{ \eexp^{z-\nu}(p/q) + 1}{ \eexp^{z-\nu} + 1}   \right).$

By \prettyref{eq:BPZ1_var} of  \prettyref{lmm:BP_moments} and the formula for the variance of the sum of a Poisson distributed number of iid random variables,
$$
((n-K)q + Kp)  \expect{Y_i^2}  =  \var{(Z_1^{1+t}})  = \lambda b_t + O \left(\frac{\lambda^2}{K(p-q)} \right)e^{Cb_{t-1}}.
$$
We again use $f^3(z)   \leq  \left(   \frac{ p/q-1} {1+ \eexp^{- z +\nu}  }\right)^3.$   Applying \prettyref{eq:smallterm} and \prettyref{lmm:exp_BP_bounds}
yields
\begin{align*}
((n-K)q + Kp)    \expect{| Y_i|^3} & =   nq\expect{|X_i|^3}  + K(p-q)
\expect{    \frac{ ( p/q-1)^3 }{  ( 1+ \eexp^{-Z_1^t+\nu } )^3 }     }  \\
&\leq  \frac{\lambda^2 \eexp^{Cb_{t-1}}}{K(p-q)}  +  \frac{\lambda^3\expect{\eexp^{3Z_1^t}}}{(K(p-q))^2} \\
& \leq \frac{\lambda^2 \eexp^{Cb_{t-1}}}{K(p-q)}  +  \frac{\lambda^3 \eexp^{C'b_{t-1}}}{(K(p-q))^2},
\end{align*}
where $C'=\lambda(3 + 2p/q + (p/q)^2).$

Therefore, the ratio relevant for application of the Berry-Esseen lemma satisfies:
\begin{align*}
\frac{\expect{|Y_i|^3}} {\sqrt{((n-K)q + Kp)  \expect{Y_i^2}^3 }}
&  \leq   \frac{ \lambda^2\eexp^{Cb_{t-1}}    +            \frac{\lambda^3 \eexp^{C'b_{t-1}}}{K(p-q))}                         }{K(p-q) \sqrt{\left(  \lambda b_t   + O \left(\frac{\lambda^2}{K(p-q)} \right)e^{Cb_{t-1}} \right)^3 }  }
\to 0.
\end{align*}
Therefore, the Berry-Esseen lemma,  \prettyref{lmm:Poisson_BE},  along with
\prettyref{lmm:BP_moments}, completes the proof of \prettyref{eq:BPlawZ1}.
\end{proof}

\section{Linear message passing on a random tree}   \label{app:spectral_limit}

\subsection{Linear message passing on a random tree--exponential moments}\label{sec:mgf}
To analyze the message passing algorithms given in \prettyref{eq:spectral_mp1} and \prettyref{eq:spectral_mp2}, we first study
an analogous message passing algorithm on the tree model introduced in \prettyref{sec:BP_Bernoulli_tree}:
\begin{align}
\xi_{i\to \pi(i)}^{t+1} &=- \frac{q ((n-K)A_t  + K B_t) }{\sqrt{m}}  +     \frac{1}{\sqrt{m} } \sum_{\ell \in \partial i } \xi^t_{ \ell \to i },   \label{eq:spectral_tree_mp1} \\
\xi_{u}^{t+1} & =   - \frac{q ((n-K)A_t  + K B_t) }{\sqrt{m}}  +     \frac{1}{\sqrt{m} } \sum_{i \in \partial u } \xi^t_{ i \to u },   \label{eq:spectral_tree_mp2}
\end{align}
with initial values $\xi^0_{ \ell \to \pi(\ell) }=1$ for all $\ell \neq u$,  where $\pi(\ell)$ denotes the parent of $\ell$, 
and $m = (n-K) q$. 
Let $Z^t_0$ denote a random variable that has the same distribution as $\xi_u^t$ given $\tau_u=0,$
and let $Z^t_1$ denote a random variable that has the same distribution as $\xi_u^t$ given $\tau_u=1.$
Equivalently, $Z^t_b$ for $b \in \{0,1\}$ has the distribution of $\xi^t_{\ell\to\pi(\ell)}$ for any vertex $\ell \neq u$,  given
$\tau_{\ell}=b.$
Let $A_t=\expect{Z_0^t}$ and $B_t=\expect{Z_1^t}.$ Then $A_0=B_0=1$. 
Given $\tau_u=0,$  the mean of the sum in \prettyref{eq:spectral_tree_mp1} is subtracted out, so $A_t = \expect{Z^t_0}=0$ for
all $t\geq 1.$    Compared to the case $\tau_u = 0$,  if $\tau_u = 1$, then on average there are $K(p-q)$ additional
children of node $u$ with labels equal to 1, so that $B_{t+1}= \sqrt{\lambda} B_t,$ which gives $B_t=\lambda^{t/2}$ for
$t\geq 0.$

We consider sequences of parameter triplets $(\lambda, p/q, K/n)$ indexed by $n.$
Let $\psi_i^t(\eta) = \expect{e^{\eta Z_i^t } }$ for $i=0,1$ and $t\ge 1$.
Expressions are given for these functions when $t=1$ in \prettyref{eq:psi01_def}
and \prettyref{eq:psi11_def} below.   Following the same method used in
\prettyref{sec:BP_Bernoulli_tree} for the belief propagation algorithm,
we find the following recursions for $t\geq 1:$
\begin{align}
\psi_0^{t+1}(\eta) = & ~ \exp\sth{m \pth{\psi_0^t\pth{\frac{\eta}{\sqrt{m}}} - 1}  +  Kq \pth{\psi_1^t\pth{\frac{\eta}{\sqrt{m}}} - 1  - \frac{\eta}{\sqrt{m}} \lambda^{t/2}  }}, \label{eq:recursive_psi1} \\
\psi_1^{t+1}(\eta) = & ~ \psi_0^{t+1}(\eta) \exp\sth{ \sqrt{\lambda m} \pth{\psi_1^t\pth{\frac{\eta}{\sqrt{m}}} - 1   } }. \label{eq:recursive_psi2} 
\end{align}

\begin{lemma}\label{lmm:mgf}
Assume that as $n\to \infty,$   $\lambda$ is fixed, $K=o(n),$ and $p/q=O(1).$
(Consequently, $m\to \infty$; see \prettyref{rmk:Kpq}.)   Let $\gamma$ be a constant such that $\gamma > 1$ and  $\gamma \geq \lambda.$
Let  $T = 2 \alpha \frac{\log \frac{n-K}{K}}{\log \gamma},$ where $\alpha=1/4$ (in fact any $\alpha < 1$ works).
Let $c = \frac{1}{4} \log \gamma$ (in fact any $c\in (0, \log \sqrt{\gamma})$ works).
For sufficiently large $n$, $t\in [T],$ and $\eta$ such that $\gamma^{(t-1)/2} (\frac{\eta^2}{m} + \frac{\eta}{\sqrt{m}}) \leq c,$
\begin{align}
\psi_0^t(\eta) \leq & ~ \exp( \gamma^{t/2} \eta^2),	\label{eq:psi0} \\
\psi_1^t(\eta) \leq & ~ \exp( \lambda^{t/2}  \eta + \gamma^{t/2} \eta^2 ).	\label{eq:psi1}
\end{align}	
\end{lemma}

\begin{proof}[Proof of \prettyref{lmm:mgf}]
Recall that $m=(n-K)q$ and $K(p-q) = \sqrt{\lambda m}$. Since $K=o(n)$, it follows that $(nq)/m \to 1$. Also, because $\lambda$
is fixed, we have that $\lambda/m \to 0$. Hence, the choice of $c$ ensures that for $n$ sufficiently large,
\begin{align}
\pth{ \frac{nq}{m}    +  \sqrt{\frac{\lambda}{m} }}  \frac{e^{c}}{2} \leq & ~ \sqrt{\gamma}	\label{eq:c1}.
\end{align}
By \prettyref{eq:spectral_tree_mp1}, $\xi_{i \to \pi(i) }^{1} = \frac{-nq + |\partial i| }{\sqrt{m}}$. Hence, for $t=1$ and $\eta \in (-\infty, \sqrt{m}c]$
\begin{align}
\psi_0^1(\eta)
= & ~ \exp(nq(e^{\eta/\sqrt{m}} - 1 - \eta/\sqrt{m} ))	\label{eq:psi01_def}  \\
\leq & ~ \exp\pth{ \frac{nq}{2m} e^{c} \eta^2} \overset{\prettyref{eq:c1}}{\leq} \exp(\sqrt{\gamma} \eta^2) \nonumber,
\end{align}
where we used the fact that $e^x \leq 1 + x + \frac{e^c}{2} x^2$ for all $x \in (-\infty, c].$  Similarly,
\begin{align}
\psi_1^1(\eta)
= & ~ \psi_0^1(\eta) \exp(K(p-q)(e^{\eta/\sqrt{m}} - 1))	\label{eq:psi11_def} \\
\leq & ~ \exp\pth{ \frac{nq}{2m} e^{c} \eta^2}  \exp\pth{ \sqrt{\lambda m} \pth{\frac{\eta}{\sqrt{m}} + \frac{e^c}{2} \frac{\eta^2}{m} }}  \nonumber \\
\leq & ~ \exp\pth{  \sqrt{\lambda} \eta + \pth{ \frac{nq}{m}    +  \sqrt{\frac{\lambda}{m} }} \frac{e^{c}}{2} \eta^2} \overset{\prettyref{eq:c1}}{\leq} \exp(\sqrt{\lambda} \eta + \sqrt{\gamma} \eta^2 ). \nonumber
\end{align}
Thus, \prettyref{eq:psi0} and \prettyref{eq:psi1} hold for $t=1$ and $\eta$ as described in the lemma.

Observe that
\begin{equation}
	\gamma^{T/2} \frac{1}{\sqrt{m}} = o(1),  
	\label{eq:lambdaT}
\end{equation}
because $\gamma^{T/2} \frac{1}{\sqrt{m}} = (\frac{n-K}{K})^{\alpha} \frac{1}{\sqrt{m}} = \lambda^{-\alpha/2} (\frac{p}{q}-1)^\alpha  m^{-\frac{1-\alpha}{2}} = o(1)$.
In addition, the choice of $c$ guarantees that, for $n$ sufficiently large,
\begin{equation}
 e^c + \pth{\frac{Kq}{m}  + \sqrt{\frac{\lambda}{m}} }   \pth{1 +  \frac{e^c}{2} \pth{   3 c + \gamma^{T/2}} }
	 \leq \sqrt{\gamma},
	\label{eq:c2}
\end{equation}
because $\frac{Kq}{m} = o(1)$, $m\diverge$, $\frac{Kq}{m} \gamma^{T/2} = (\frac{K}{n-K})^{1-\alpha} = o(1),$ and \prettyref{eq:lambdaT} holds.
Assume for the sake of proof by induction that,  for some $t$ with $1\leq t  < T,$
\prettyref{eq:psi0} and  \prettyref{eq:psi1} hold for all $\eta \in \Gamma_t \triangleq \{\eta: \gamma^{(t-1)/2} (\frac{\eta^2}{m} + \frac{\eta}{\sqrt{m}}) \leq c\}$ . Now fix $\eta \in \Gamma_{t+1}$. Since 
$\Gamma_t$ is an interval containing zero for each $t$ and $\Gamma_{t+1}\subset\Gamma_t,$
 it is clear that $\frac{\eta}{\sqrt{m}} \in \Gamma_t$ for $m\geq 1.$  By \prettyref{eq:recursive_psi1}, we have
\begin{align*}
\log \psi_0^{t+1}(\eta)
= & ~ m \pth{\psi_0^t\pth{\frac{\eta}{\sqrt{m}}} - 1}  +  Kq \pth{\psi_1^t\pth{\frac{\eta}{\sqrt{m}}} - 1  - \frac{\eta}{\sqrt{m} } \lambda^{t/2} } \\
\leq & ~ m \pth{e^{\frac{\gamma^{t/2} \eta^2}{m}} - 1}  +   Kq \pth{e^{  \frac{\gamma^{t/2} \eta^2}{m} + \lambda^{t/2} \frac{\eta}{\sqrt{m}} } - 1  - \frac{\eta}{\sqrt{m} } \lambda^{t/2} } \\
\leq & ~ e^c \gamma^{t/2} \eta^2 +  Kq \pth{   \frac{\gamma^{t/2} \eta^2}{m} + \frac{e^c}{2}\pth{  \frac{\gamma^{t/2} \eta^2}{m} + \lambda^{t/2} \frac{\eta}{\sqrt{m}} }^2 }   \\
\leq & ~ \gamma^{t/2} \eta^2 \pth{e^c + \frac{Kq}{m}} +  \frac{Kq}{2m} e^c \pth{   3 c \gamma^{t/2}  + \gamma^{t}} \eta^2  \\
\overset{\prettyref{eq:c2}}{\leq} & ~ \gamma^{(t+1)/2} \eta^2,
\end{align*}
where the first inequality holds due to the induction hypothesis; the second inequality holds due to  $e^x \leq 1 + e^c x $ for all $x \in [0, c]$
and $e^x \leq 1 + x + \frac{e^c}{2} x^2$ for all $x \in (-\infty, c]$; the third inequality holds due to the fact that $\eta \in \Gamma_{t+1}$ and $\lambda \le \gamma$.
 Similarly,
\begin{align*}
&\sqrt{\lambda m} \pth{\psi_1^t\pth{\frac{\eta}{\sqrt{m}}} - 1   } \\
&\leq  ~ \sqrt{\lambda m}  \pth{   \frac{\gamma^{t/2} \eta^2}{m} + \frac{e^c}{2}\pth{  \frac{\gamma^{t/2} \eta^2}{m} + \lambda^{t/2} \frac{\eta}{\sqrt{m}} }^2 + \frac{\eta}{\sqrt{m} } \lambda^{t/2}} \\
&=  ~ \sqrt{\frac{\lambda}{m}}  \pth{	\gamma^{t/2}  +  \frac{e^c}{2} \pth{   3 c \gamma^{t/2}  + \gamma^{t}}    } \eta^2 +  \lambda^{(t+1)/2} \eta
\end{align*}
and hence by  \prettyref{eq:recursive_psi2},
\begin{align*}
& \log \psi_1^{t+1}(\eta) 
=   \log \psi_0^{t+1}(\eta) + \sqrt{\lambda m} \pth{\psi_1^t\pth{\frac{\eta}{\sqrt{m}}} - 1   }  \\
& \leq \gamma^{t/2} \eta^2 \pth{ e^c + \frac{Kq}{m}  + \sqrt{\frac{\lambda}{m}} } +  \pth{ \frac{Kq}{m}  + \sqrt{\frac{\lambda}{m}} } \frac{e^c}{2} \pth{   3 c \gamma^{t/2}  + \gamma^{t}} \eta^2 + \lambda^{\frac{t+1}{2} } \eta \\
& \overset{\prettyref{eq:c2}}{\leq}  \lambda^{(t+1)/2} \eta + \gamma^{(t+1)/2} \eta^2 .
\end{align*}
\end{proof}

\begin{corollary}  \label{cor:mgb_recovery}
Assume that as $n\to \infty,$   $\lambda$ is fixed with $\lambda > 1$, $K=o(n),$ and $p/q=O(1).$
Let  $T = 2 \alpha \frac{\log \frac{n-K}{K}}{\log \lambda},$ where $\alpha=1/4.$
If $\tau = \frac{1}{2} \lambda^{T/2}$,
then $\prob{Z_0^T  \geq \tau} =  o(\frac{K}{n-K})$ and $\prob{Z_1^T  \leq \tau}= o(\frac{K}{n-K}).$
\end{corollary}

\begin{proof}
Since $\lambda>1$  we can let $\gamma=\lambda$ in  \prettyref{lmm:mgf} so that $T$ here is the same as $T$ in \prettyref{lmm:mgf}.
Equation \prettyref{eq:lambdaT} implies that the interval of $\eta$ values satisfying the condition of \prettyref{lmm:mgf}  for $t=T$ converges
to all of $\reals.$  By \prettyref{lmm:mgf} and the Chernoff bound for threshold at $\tau = \frac{1}{2} \lambda^{T/2},$  for any $\eta> 0$, if $n$ is sufficiently
large
\begin{align}
\prob{Z_0^T  \geq \tau} & ~ \leq \psi_0^T(\eta) \exp(-\eta\tau)\leq \exp( \lambda^{T/2} (\eta^2-\eta/2)) \overset{\eta=1/4}{=} \exp( - \lambda^{T/2}/16 ).
\end{align}
Similarly, for any $\eta < 0$ and $n$ sufficiently large,
\begin{align}
\prob{Z_1^T  \leq \tau} & ~ \leq \psi_1^T(\eta) \exp(-\eta\tau)\leq \exp( \lambda^{T/2} (\eta^2+\eta/2)) \overset{\eta=-1/4}{=} \exp( - \lambda^{T/2}/16 ).
\end{align}
By the choice of $T$, we have $\lambda^{T/2} = (\frac{n-K}{K})^{\alpha}$ and hence $\exp( - \lambda^{T/2}/16 ) = o(\frac{K}{n-K}).$
\end{proof}

\subsection{Gaussian limits of messages}  \label{sec:linear_Gaussian_limits}

In this section we apply the bounds derived in \prettyref{sec:mgf}
and  a version of the Berry-Esseen central limit theorem for compound Poisson sums
to show the messages are asymptotically Gaussian.
As in \prettyref{sec:mgf}, the result allows the number of iterations
to grow slowly with $n.$

Let $\alpha_t=\var(Z_0^t)$ and  $\beta_t=\var( Z_1^t).$
Using the usual fact $\var(\sum_{i=1}^X Y_i)= \expect{X}\var(Y)+ \var(X) \expect{Y}^2$ for iid $Y$'s,  we find
\begin{align}
\alpha_{t+1}& =   \alpha_t  + A_t^2  + \frac{Kq}{m}   \beta_t + \frac{Kq}{m} B_t^2  \label{eq:alpha1}  \\
\beta_{t+1}& =    \alpha_t  +  A_t^2  +   \frac{Kp}{m}   \beta_t +  \frac{Kp}{m}  B_t^2  \label{eq:beta1}
\end{align}
with the initial conditions $\alpha_0=\beta_0=0.$
Comparing the recursions (without using induction) shows that $\alpha_t \leq \beta_t \leq \frac{p}{q} \alpha_t$  for $t\geq 0.$
Note that $\alpha_1=\frac{n}{n-K}\geq 1$, and $\alpha_t$ is
nondecreasing in $t$.  Thus $1\leq \alpha_t \leq \beta_t$ for all $t.$
Therefore, if $\lambda < 1$, the signal to noise
ratio $\frac{(B_t -A_t)^2}{\alpha_t} \leq \lambda^t \to 0$ as $t\rightarrow \infty.$
Also, under the assumption $K=o(n)$ and $p/q=O(1),$ the coefficients in the recursions \prettyref{eq:alpha1} and \prettyref{eq:beta1}
satisfy $ \frac{Kq}{m} \to 0$ and  $\frac{Kp}{m} \to 0$ as $n \to \infty.$
Thus,  $\alpha_t \to 1 $ and $\beta_t \to 1$ for $t$ fixed as $n\to \infty.$

The following lemma proves that the distributions of $Z_0^t$ and $Z_1^t$ are asymptotically Gaussian.
\begin{lemma}\label{lmm:treeGaussianspectral}
Suppose that  as $n\to\infty,$  $\lambda$ is fixed with $\lambda>0$,   
$K=o(n),$  $p/q=O(1)$, and $t$ varies with $n$ such that
$t \in \naturals$  and the following holds:   If $\lambda > 1$ then $\lambda^{t/2}\leq \left( \frac{n-K}{K}\right)^{\alpha},$  where $\alpha=1/4$ (any
$\alpha \in (0,1/3)$ works), and if $\lambda \leq 1$:  $t=O(\log\left(  \frac{n-K}{K}\right) ).$   Then as $n\to \infty,$
\begin{align}
\sup_x  \bigg|   \prob{     \frac{ Z^t_0}{ \sqrt{ \alpha_t } }                           \leq x }  - \Phi(x)  \bigg|  \to 0    \label{eq:lawZ0}   \\
\sup_x  \bigg|   \prob{     \frac{ Z^t_1-\lambda^{t/2}}{ \sqrt{ \beta_t } }    \leq  x }  - \Phi(x)  \bigg|  \to 0 .   \label{eq:lawZ1}
\end{align}
\end{lemma}

\begin{proof}  Select a constant $\gamma > 1$ as follows.   If  $\lambda > 1, $  let $\gamma=\lambda.$
If $\lambda \leq 1$,  select $\gamma > 1$ so that  $\gamma^{t/2}\leq \left( \frac{n-K}{K}\right)^{\alpha}$ for all $n$ sufficiently large,
which is possible by the assumptions.  Then no matter what the value of $\lambda$ is,   $\gamma^{t/2}\leq \left( \frac{n-K}{K}\right)^{\alpha}.$
Let $T$ be defined as in \prettyref{lmm:mgf}.    Since $\gamma^{t/2}\leq \left( \frac{n-K}{K}\right)^{\alpha}$ it follows that $t\leq T.$

For $t\geq 0,$  $Z^{t+1}_0$ can be represented as follows:
\begin{align*}
Z^{t+1}_0  =    -  \frac{Kq\lambda^{t/2}  + (n-K) q \indc{t=0} } {\sqrt{m} }   +   \frac{1}{\sqrt{m} } \sum_{i=1}^{N_{nq}}   X_i
\end{align*}
where $N_{nq}$ has the $\Pois(nq)$ distribution, the random variables $X_i, i\geq 0$ are mutually independent and independent of $N_{nq}$, and the
distribution of $X_i$ is a mixture of distributions: $\calL(X_i) = \frac{(n-K)}{n}\calL(Z^{t}_0) + \frac{K}{n} \calL(Z^{t}_1) .$

Note that  $\expect{ |X_1|^3 } \leq  \max\{ \expect{|Z_0^t|^3},\expect{|Z_1^t|^3} \} \triangleq \rho^3$. 
By \prettyref{lmm:Poisson_BE}, 
\begin{align*}
& \sup_x \bigg|     \prob{  \frac{\sqrt{m} Z_0^{t+1} +Kq\lambda^{t /2}  +(n-K) q \indc{t=0} -  nq\expect{X_1}}{  \sqrt{nq\expect{X_1^2}}}\leq x} - \Phi(x)   \bigg|  \\ & \leq  \frac{C \rho^3}{\sqrt{nq  \expect{X_1^2}^3}} .
\end{align*}
Using the fact $\Expect[X_1^2] \geq 1$,  $\expect{X_1}=\frac{K}{n}\lambda^{t/2} + \frac{n-K}{n} \indc{t=0},$ and $\frac{n}{n-K}\expect{X_1^2}  = \alpha_{t+1}$, we obtain
$$
\sup_x \bigg|     \prob{  \frac{   Z_0^{t+1}  }{  \sqrt{ \alpha_{t+1}  }  }\leq x} - \Phi(x)   \bigg|  \leq  \frac{C \rho^3}{\sqrt{nq}}.
$$
Equation \prettyref{eq:lambdaT} implies that the interval of $\eta$ values satisfying the condition of \prettyref{lmm:mgf}  for $t \leq T$ converges
to all of $\reals.$

In view of \prettyref{lmm:mgf} and the fact $\gamma \geq \max\{\lambda,1\}$, we have that for $n$ sufficiently large,
$\psi_0^t( \pm \gamma^{-t/2}) \leq  ~ 1$ and 
$\psi_1^t( \pm \gamma^{-t/2} ) \leq  ~ e^2.$
Applying  $e^{x}+e^{-x} \ge |x|^3/6$ with $x=Z_0^t/\gamma^{t/2}$ or  $x=Z_1^t/\gamma^{t/2}$ yields:
\begin{align*}
\expect{ |Z_0^t|^3 } &\le 6 \gamma^{3t/2} \left( \psi_0^t (\gamma^{-t/2}) + \psi_0^t (-\gamma^{-t/2}) \right) \le 12 \gamma^{3t/2}, \\
\expect{ | Z_1^t| ^3 } &\le 6 \gamma^{3t/2} \left(  \psi_1^t (\gamma^{-t/2}) + \psi_1^t (-\gamma^{-t/2}) \right).
\le 12 \eexp^2 \gamma^{3t/2}
\end{align*}   
Since $\lambda \leq \left(  \frac{K}{n-K} \right)^2 nq   \left(\frac{p}{q} \right)^2$ it follows that $\sqrt{nq} = \Omega(n/K).$
Hence, $\frac{\rho^3}{\sqrt{nq}}  = O(  \left( \frac{n-K}{K}\right)^{3 \alpha} \frac{K}{n}) =
O\left(  \left( \frac{K}{n} \right)^{1-3\alpha} \right)\to 0$ and \prettyref{eq:lawZ0} follows.

The proof of \prettyref{eq:lawZ1} given next is similar.   For $t\geq 0,$  $Z^{t+1}_1$ can be represented as follows:
\begin{align*}
Z^{t+1}_1  =  -  \frac{Kq\lambda^{t/2} +  (n-K) q \indc{t=0}  }{\sqrt{m} }   +  \frac{1}{\sqrt{m} } \sum_{i=1}^{N_{(n-K)q+Kp}}   Y_i
\end{align*}
where $N_{(n-K)q+Kp}$ has the $\Pois((n-K)q+Kp)$ distribution, the random variables $Y_i, i\geq 0$ are mutually independent and independent of $N_{(n-K)q+Kp}$, and the
distribution of $Y_i$ is a mixture of distributions: 
$$
\calL(Y_i) = \frac{m}{m+Kp}\calL(Z^{t}_0) + \frac{Kp}{m+Kp} \calL(Z^{t}_1) .
$$
Note that 
$\expect{ |Y_1|^3 } \leq  \max\{ \expect{|Z_0^t|^3},\expect{|Z_1^t|^3} \} = \rho^3$.
\prettyref{lmm:Poisson_BE} therefore implies
\begin{align*}
&\sup_x \bigg|     \prob{  \frac{\sqrt{m}Z_1^{t+1}   +Kq\lambda^{t/2} + m \indc{t=0}  - (m+Kp)\expect{Y_1}}{  \sqrt{(m+Kp)\expect{Y_1^2}}}\leq x} - \Phi(x)   \bigg|  \\
&~~~~~~~~~~~~~~~~   \leq  \frac{C \rho^3}{\sqrt{(m+Kp)  \expect{Y_1^2}^3}}
\end{align*}
Using the facts $\Expect[Y_1^2] \geq 1$, $p > q$,  $\expect{Y_1}=\frac{Kp}{m+Kp}\lambda^{t/2} + \frac{m}{m+Kp}  \indc{t=0} ,$  
and $\frac{(m+Kp)}{m}\expect{Y_1^2}  =  \beta_{t+1},$ we obtain
$$
\sup_x \bigg|     \prob{  \frac{   Z_1^{t+1} - \lambda^{(t+1)/2}}{  \sqrt{ \beta_{t+1} } }  \leq x } - \Phi(x)   \bigg|   \leq  \frac{C \rho^3}{\sqrt{nq}}
$$
and the desired  \prettyref{eq:lawZ1} follows.
\end{proof}

\subsection{Proofs for linear message passing}

\begin{proof}[Proof of \prettyref{thm:SP_bernoulli_spectral} ] 
The proof consists of combining \prettyref{cor:mgb_recovery}   and the coupling lemma. Let $T= \frac{1}{2} \log \frac{n-K}{k} / \log \lambda$. 
By the assumption that $np^{\log (n/K)} =n^{o(1)}$ and $\nu=n^{o(1)}$, 
it follows that Therefore, $(2+np)^T = n^{o(1)};$  the coupling lemma can be applied.
The performance bound of \prettyref{cor:mgb_recovery} is for a
hard threshold rule for detecting the label of the root  node.   The same rule could be implemented
at each vertex of the graph $G$ which has a locally tree like neighborhood of radius $T$  by using
the estimator $\hat C_o = \{ i : \theta_{i}^T \geq \lambda^{T/2}/2 \}.$   We first bound the performance for $\hat C_o $
and then do the same for $\hat C$ produced by Algorithm \ref{alg:SP_commun}.

The average probability of misclassification of any given vertex $u$ in $G$  by $\hat C_o$  (for
prior distribution  $(\frac{K}{n},\frac{n-K}{n})$)  is less than or equal to the sum of two
terms.   The first term is less than or equal to $n^{-1/2 + o(1)}$ (due to coupling error) by \prettyref{lmm:treecoupling}.
The second term is  $o(\frac{K}{n-K})$ (due to error of classification of the root vertex of the Poisson tree
graph of depth $T$)  by  \prettyref{cor:mgb_recovery}.     Multiplying the average error probability by $n$ bounds the
expected total number of misclassification errors,  $ \expect{  |C^* \triangle \hat{C}_o|  }.$
By the assumption that $K=n^{1+o(1)},$  so $n^{-1/2 + o(1)}\frac{n}{K}=n^{-1/2 + o(1)} = o(1),$
and of course $o(\frac{K}{n-K})\frac{n}{K}=o(1).$   It follows that $ \frac{   \expect{  |C^* \triangle \hat{C}_o|  }  }{K}  \to 0.$
The set $\hat C_o$ is defined by a threshold condition whereas $\hat C$ similarly corresponds to using a
data dependent threshold and tie breaking rule to arrive at $|\hat C| \equiv K.$
By the same method used in the proof of \prettyref{thm:BP_Bernoulli},
the conclusion for $\hat C$ follows from what was proved for $\hat C_o.$
\end{proof}

The proof of the converse result for linear message passing are quite similar to the proofs of converse results for belief propagation,
and thus are omitted. 
The main differences are that the means here are $0$ and $\lambda^{t/2}$
instead of $\pm b_t/2,$  and the variances here are unequal: $\alpha_t$ and $\beta_t.$   However, since
$\alpha_t \leq \beta_t \leq  \frac{ \alpha_t p}{q}$ and we assume $p/q=O(1),$  the same arguments go through.
Finally, the messages in the linear message passing algorithm do not correspond to log likelihood messages,
and the number of iterations needs to satisfy the extra constraint: $t=O \left( \log \frac{n-K}{K} \right).$

\end{appendix}

\end{document}